\documentclass{article}

\usepackage{microtype}
\usepackage{graphicx}
\usepackage{caption}
\usepackage{subcaption}
\usepackage{booktabs} 
\usepackage{amsmath}

\usepackage{hyperref}

\usepackage{amsmath,amsfonts,bm}

\def\eqref#1{equation~\ref{#1}}

\def\1{\bm{1}}

\DeclareMathAlphabet{\mathsfit}{\encodingdefault}{\sfdefault}{m}{sl}
\SetMathAlphabet{\mathsfit}{bold}{\encodingdefault}{\sfdefault}{bx}{n}

\def\gE{{\mathcal{E}}}

\DeclareMathOperator*{\argmax}{arg\,max}
\DeclareMathOperator*{\argmin}{arg\,min}

\usepackage[preprint]{icml2026}

\usepackage{amsmath}
\usepackage{amssymb}
\usepackage{mathtools}
\usepackage{bbm}
\usepackage{amsthm}
\usepackage{enumitem}
\usepackage{wrapfig}
\usepackage{float}
\usepackage{pgfplots}
\pgfplotsset{compat=1.18}
\usetikzlibrary{plotmarks}
\usepackage{tikz}
\usetikzlibrary{arrows.meta, patterns, calc, decorations.pathreplacing, calligraphy, backgrounds}

\usepackage[capitalize,noabbrev]{cleveref}

\usepackage{ifthen}

\theoremstyle{plain}
\newtheorem{theorem}{Theorem}[section]
\newtheorem{proposition}[theorem]{Proposition}
\newtheorem{lemma}[theorem]{Lemma}

\theoremstyle{definition}
\newtheorem{definition}[theorem]{Definition}
\newtheorem{assumption}[theorem]{Assumption}
\theoremstyle{remark}

\newtheorem{claim}{Claim}
\renewcommand\vec{\boldsymbol}
\newcommand{\EXP}{\mathbb{E}} 
\renewcommand{\Pr}{\mathbb{P}} 
\newcommand{\ind}{\mathbbm{1}}

\usepackage[textsize=tiny]{todonotes}

\icmltitlerunning{Rising Multi-Armed Bandits with Known Horizons}

\begin{document}

\twocolumn[
  \icmltitle{Rising Multi-Armed Bandits with Known Horizons}



  \icmlsetsymbol{equal}{*}
  \icmlsetsymbol{correspond}{†}

  \begin{icmlauthorlist}
    \icmlauthor{Seockbean Song}{postech_aigs}
    \icmlauthor{Chenyu Gan}{Tsinghua}
    \icmlauthor{Youngsik Yoon}{postech_csed}
    \icmlauthor{Siwei Wang}{MSRA}
    \icmlauthor{Wei Chen}{MSRA}
    \icmlauthor{Jungseul Ok}{postech_aigs,postech_csed,correspond}
  \end{icmlauthorlist}

  \icmlaffiliation{postech_csed}{Department of CSE, POSTECH, Pohang, Republic of Korea}
  \icmlaffiliation{postech_aigs}{Graduate School of AI, POSTECH, Pohang, Republic of Korea} 
  \icmlaffiliation{MSRA}{Microsoft Research, Beijing, China}
  \icmlaffiliation{Tsinghua}{Qiuzhen College, Tsinghua University, Beijing, China}

  \icmlcorrespondingauthor{Jungseul Ok}{jungseul.ok@postech.ac.kr}

  \icmlkeywords{Machine Learning, ICML}

  \vskip 0.3in
]



\printAffiliationsAndNotice{\icmlcorrespond}
\begin{abstract} \label{sec:abstract}
The Rising Multi-Armed Bandit (RMAB) framework models environments where expected rewards of arms increase with plays, which models practical scenarios where performance of each option improves with the repeated usage, such as in robotics and hyperparameter tuning.
For instance, in hyperparameter tuning, the validation accuracy of a model configuration (arm) typically increases with each training epoch.
A defining characteristic of RMAB is {\em horizon-dependent optimality}: unlike standard settings, the optimal strategy here shifts dramatically depending on the available budget $T$.
This implies that knowledge of $T$ yields significantly greater utility in RMAB, empowering the learner to align its decision-making with this shifting optimality. 
However, the horizon-aware setting remains underexplored. 
To address this, we propose a novel CUmulative Reward Estimation UCB (CURE-UCB) that explicitly integrates the horizon. 
We provide a rigorous analysis establishing a new regret upper bound and prove that our method strictly outperforms horizon-agnostic strategies in structured environments like ``linear-then-flat'' instances.
Extensive experiments demonstrate its significant superiority over baselines.

\end{abstract}

\section{Introduction} \label{sec:intro}
\begin{figure}[!t]
    \centering
    \captionsetup[subfigure]{skip=0pt, belowskip=3pt}
    \begin{subfigure}[b]{1.0\linewidth}
        \centering
        \includegraphics[width=0.85\linewidth, trim={0cm 0.2cm 0cm 0.5cm}, clip]{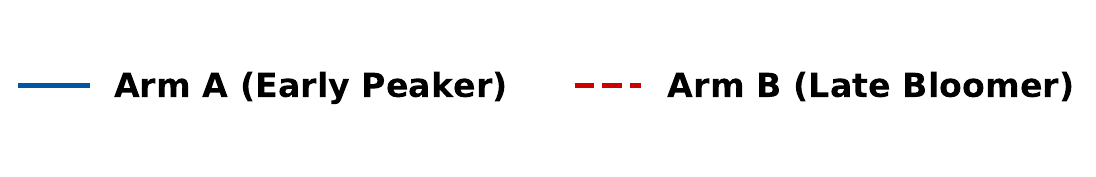}
        \vspace{-0.0cm} 
        \includegraphics[width=\linewidth]{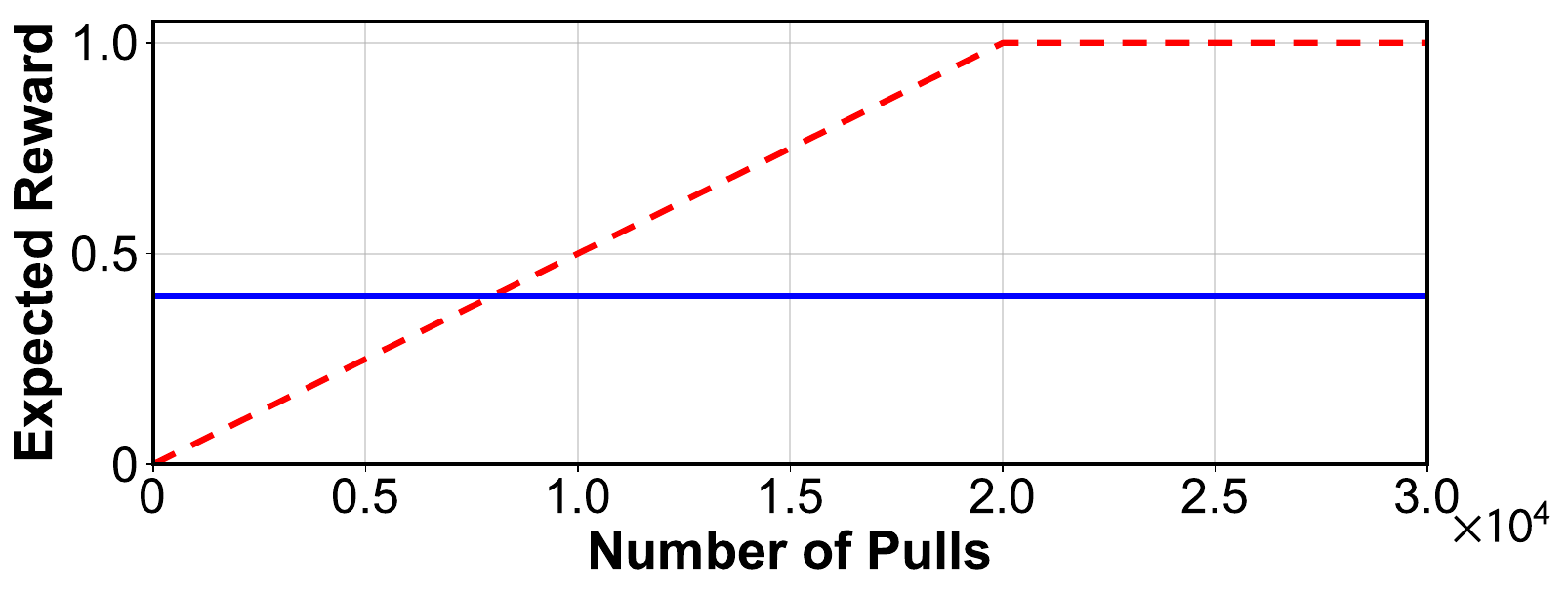}
        \caption{Expected reward function}
        \label{fig:intro_reward_wide}
    \end{subfigure}

    \vspace{-0.2cm} 

    \begin{subfigure}[b]{1.0\linewidth}
        \centering
        \includegraphics[width=0.85\linewidth, trim={0cm 0.2cm 0cm 0.5cm}, clip]{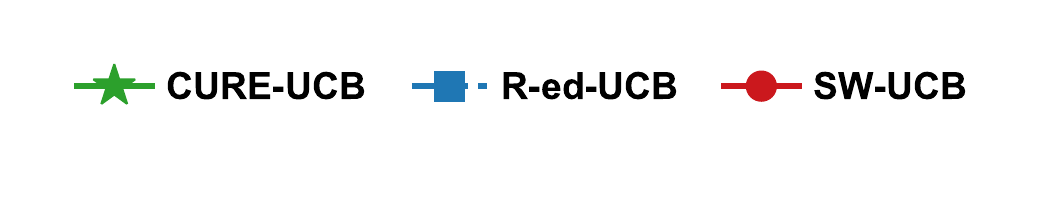}
        
        \vspace{-0.5cm}
        \includegraphics[width=\linewidth]{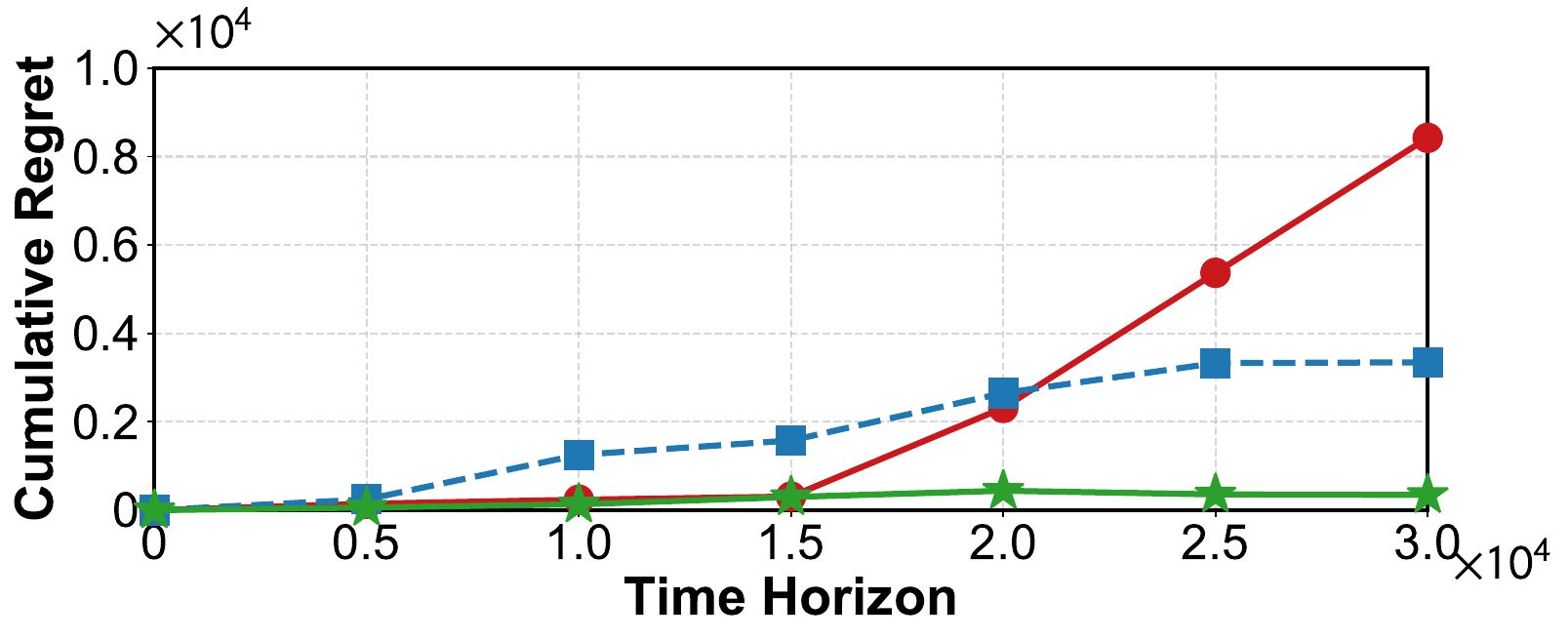}
        
        \caption{Cumulative regret for different time horizon $T$}
        \label{fig:intro_regret_T}
    \end{subfigure}
    \vspace{-0.2cm}
    \caption{\textbf{Demonstration of Horizon-Adaptiveness.} 
    (a) Expected reward functions.
    The \textcolor{blue}{\textbf{Arm A}} represents an \textit{Early Peaker} (high initial reward, limited growth), and the \textcolor{red}{\textbf{Arm B}} represents a \textit{Late Bloomer} (low initial reward, high potential).
    (b) Cumulative regret results across varying time horizons $T$.
    \textcolor{green!50!black}{\textbf{CURE-UCB}} (Ours) consistently achieves small regret across all horizons.
    In contrast, baselines suffer structural failures.
    \textcolor[HTML]{1F77B4}{\textbf{R-ed-UCB}} (horizon-agnostic rising bandit algorithm) incurs high regret in short horizons ($T$=10,000) 
    whereas \textcolor{red!80!black}{\textbf{SW-UCB}} (non-stationary bandit algorithm) fails in long horizons ($T$=30,000).
    }\vspace{-0.5cm}
    \label{fig:intro}
\end{figure}

Unlike standard Multi-Armed Bandit (MAB) problems where rewards remain constant, the Rising Multi-Armed Bandit (RMAB) framework models environments where performance improves with repeated usage, mirroring the intuitive process of gaining proficiency. 
However, while often analyzed theoretically under indefinite horizons, practical applications are typically governed by strict, finite resource limits.

Key applications of RMAB include hyperparameter tuning in Automated Machine Learning \cite{li2018hyperband, li2020efficient} and robotics, particularly within hierarchical reinforcement learning \cite{song2024combinatorial, yoon2024beag}.
In these applications, the performance of a selected option (a training configuration or a robotic sub-skill) improves with repeated execution.
Crucially, these practical domains typically operate under a known, finite horizon (e.g., GPU hours) provided to the learner.
This explicit knowledge offers a significant advantage, enabling the deployment of a horizon-adaptive strategy that maximizes efficiency by dynamically shifting its priority, exploiting options with immediate gain when the horizon is limited versus investing in options with high potential when the remaining horizon is sufficient.

Figure~\ref{fig:intro} demonstrates the strategic utility gained by explicitly leveraging this known horizon.
In this setting, the optimal policy allocates every resource to the early peaker arm for short horizon, whereas it shifts its focus to the late bloomer arm when the horizon is sufficiently large (e.g., $T>15000$).
R-ed-UCB, designed as a horizon-agnostic rising bandit algorithm, estimates future rewards based on growth rates.
Initially, it selects the early peaker arm due to its superior immediate rewards. 
However, as the estimated potential of the late bloomer arm increases, the algorithm tends to switch to the late bloomer even when the remaining horizon is insufficient to recover the initial performance deficit.
Consequently, it suffers from significant regret in short-horizon settings where the optimal policy dictates sticking to the early peaker ($T\approx10000$).
SW-UCB, a non-stationary algorithm, estimates the instantaneous mean using a sliding window.
It does not account for the cumulative growth potential. 
As a result, it fails to switch to the late bloomer arm even when the horizon is sufficiently large, leading to linear regret as it continues to select the suboptimal early peaker ($T\approx50000$).

In contrast, our proposed CUmulative Reward Estimation UCB (CURE-UCB) efficiently adapts its strategy by explicitly leveraging the known horizon $T$.
By estimating the cumulative reward each arm would yield over the entire remaining duration, CURE-UCB naturally identifies the horizon-dependent switching point. 
This enables CURE-UCB to align with the optimal policy: prioritizing investment in the late bloomer's growth when the remaining horizon permits, while shifting to the early peaker's immediate returns as the horizon shortens.

We further provide a rigorous theoretical analysis validating the structural advantages of horizon awareness in RMAB. Specifically, in structured environments, we demonstrate a strict dominance result (Theorem~\ref{thm:dominance_ltf}) by showing that CURE-UCB systematically avoids the wasteful exploration of saturated arms compared to horizon-agnostic baselines. 
Extending beyond these structured settings, we derive a general regret upper bound (Theorem~\ref{thm:upper_bound}), guaranteeing the efficiency of CURE-UCB across broad concave rising environments.

We validate our theoretical findings through extensive experiments encompassing diverse synthetic environments to test robustness across various reward growth patterns. 
Crucially, we demonstrate practical applicability in a real-world online model selection task, confirming that CURE-UCB consistently outperforms variant of baselines in both controlled and practical settings.

Our main contributions are summarized as follows:
\vspace{-0.2cm}
\begin{itemize}[left=0.1cm]
    \item We revisit the concept of horizon-dependent optimality within the RMAB framework, highlighting its crucial role in finite-horizon settings in Section~\ref{sec:pf}.
    \item We propose CURE-UCB, a horizon-aware algorithm that estimates the bounded cumulative potential, ensuring alignment with the horizon-dependent optimality in Section~\ref{sec:algorithm}.
    \item   We establish a strict dominance result over agnostic strategies in structured environments (Theorem~\ref{thm:dominance_ltf}) and regret upper bound of CURE-UCB in general cases (Theorem~\ref{thm:upper_bound}) in Section~\ref{sec:analysis}.
    \item We empirically validate our method's superiority across diverse synthetic benchmarks and a practical online model selection task, demonstrating its ability to capture high-potential arms missed by baselines in Section~\ref{sec:experiments}. 
\end{itemize}
\section{Related works} \label{sec:rel}

The RMAB framework can be viewed as a specific instance of non-stationary bandits \cite{besbes2014stochastic,allesiardo2017non}, specifically the rested setting where rewards evolve solely based on the number of pulls.
However, standard non-stationary strategies relying on forgetting mechanisms\cite{garivier2011upper,trovo2020sliding} or periodic restarts\cite{allesiardo2017non, besbes2014stochastic} are ill-suited for this context. 
As they are designed to track the current mean rather than predict future growth, they fail to capture the rising trend essential for optimal decision-making.

To leverage the inherent structure of reward dynamics, recent literature \citep{heidari2016tight, metelli2022stochastic, fiandri2024rising, xia2024llm, amichay2025rising, fiandri2025thompson} has focused on environments with rising trends .
\citet{heidari2016tight} established the foundational framework for the rested RMAB, introducing the notion of policy regret to compare the learner against an optimal dynamic policy.
In the stochastic setting, several prominent strategies operate in a horizon-agnostic manner, primarily focusing on tracking instantaneous means or growth rates. 
Specifically, \citet{metelli2022stochastic} introduced R-ed-UCB, which utilizes a derivative-based estimator to capture growth rates, providing $\tilde{O}(T^{2/3})$ theoretical guarantees for concave rising functions under some conditions. 
\citet{fiandri2024rising} introduced ET-SWGTS, adapting Thompson Sampling to the RMAB setting by employing an initial forced exploration phase to prevent the starvation of potential high-growth arms, combined with a sliding-window mechanism to handle non-stationarity. 
\citet{xia2024llm} developed TI-UCB to detect convergence points in increasing-then-converging scenarios.
While these strategies offer robust performance, they remain predominantly horizon-agnostic, directing their focus toward tracking instantaneous means or growth rates.
Distinct from these approaches, \citep{amichay2025rising} recently proposed R-ed-AE, a horizon-aware algorithm. However, their method relies on a successive elimination strategy specifically tailored for linear drifts. 
This distinguishes it from our work, which employs a UCB-based framework designed to handle a broader class of problem instances beyond linear settings.

\section{Problem Formulation} \label{sec:pf}
\label{sec:problem_formulation}

We consider the Rising Multi-Armed Bandit (RMAB) problem, where the expected reward of each arm increases with the number of times it has been played. 
Let $K$ be the number of arms, indexed by $[K]:=\{1,\dots,K\}$. 
At each round $t \in [T]$, the learner selects an arm $I_t \in [K]$ and observes a reward $X_t$. This reward is drawn independently from a distribution $D_{I_t}(N_{I_t,t-1})$, where $N_{i,t-1}$ denotes the number of times arm $i$ has been played up to time $t-1$. 

We assume that the reward distribution $D_i(n)$ is $\sigma^2$-subgaussian with a known variance parameter $\sigma^2$. 
Let $\mu_i(n):=\mathbb{E}_{X\sim D_i(n)}[X]$ denote the expected reward of arm $i$ on its $n$-th pull, assuming $\mu_i(n) \in [0,1]$ for all $i\in [K]$ and $n\in [T]$. 
We denote the problem instance by the vector of mean reward functions $\vec{\mu} := \left(\mu_1,\mu_2,\cdots,\mu_K\right)$.
A defining characteristic of the rising bandit setting is that the mean reward is non-decreasing for all arms $i \in[K]$ and number of plays $n\geq 1$:
\begin{align}
    \gamma_i(n) := \mu_i(n+1)-\mu_i(n)\ge 0 \;.
\end{align}

The learner's history up to time $t$ is denoted by $\mathcal{H}_{t} := \{(I_{\tau}, X_{\tau}): \tau \in [t]\}$. 
A policy $\pi$ maps this history $\mathcal{H}_{t-1}$ to an arm $I_t \in [K]$ to be played at round $t$.

For analytical tractability, consistent with the rising bandit literature \citep{heidari2016tight, metelli2022stochastic}, we assume the concavity of the mean reward function $\mu_i$. 
This implies that while the expected rewards increase, the marginal gain from each additional pull is non-increasing.

\begin{assumption} \label{asu:concavity} (Concavity of $\mu_i$)
For each arm $i\in [K]$ and number of plays $n \ge 1$, the increments are non-increasing:
\begin{align}
    \gamma_i(n) \ge \gamma_i(n+1).
\end{align}
\end{assumption}

\paragraph{Regret Minimization}
The objective of the learner is to maximize the cumulative reward over a finite horizon $T$. 
Let $V(\pi, T) := \mathbb{E}_{\pi}\left[\sum_{t=1}^{T} X_t\right]$ be the expected cumulative reward of a policy $\pi$. 
We define the optimal policy $\pi^*$ as the policy that maximizes the expected cumulative reward:
\begin{align}
    \pi^* := \argmax_{\pi} V(\pi, T).
\end{align}
The performance of a policy $\pi$ is measured by its regret, defined as the difference between the expected cumulative reward of the optimal policy and that of $\pi$: 
\begin{align}
    Reg_{\vec{\mu}}(\pi,T) := V(\pi^*, T) - V(\pi, T).
\end{align}

It is crucial to distinguish the regret definition employed here from that used in standard stochastic bandits. Standard algorithms typically optimize for external regret, comparing performance against a single best action fixed in hindsight. 
While widely used, this metric is conceptually inadequate for rested bandits where rewards are not suited for environments where rewards depend on the history of actions\citep{arora2012online}. 
Therefore, we adopt the notion of policy regret. This metric compares the learner against the optimal strategy $\pi^*$ that maximizes the expected cumulative reward over the specific horizon $T$. As we establish in Proposition~\ref{prop:optimal_structure}, in RMAB setting, this optimal strategy indeed reduces to selecting a single arm and playing it continuously, which is similar to the external regret benchmark. 
However, adopting the policy regret framework is essential to rigorously define optimality in an environment where rewards are driven by the accumulation of plays.

While identifying $\pi^*$ generally involves dynamic planning, the non-decreasing reward structure simplifies the optimal strategy. Specifically, the optimal policy for given $T$ reduces to selecting a single arm and playing it continuously \citep{heidari2016tight}:

\begin{proposition}[Structure of Optimal Policy] \label{prop:optimal_structure}
For a rising bandit problem with a finite horizon $T$, the optimal policy $\pi^*$ consists of selecting a single arm $i^* \in [K]$ and playing it for all $t\in T$. 
This arm $i^*$ is determined by maximizing the cumulative reward over the horizon $T$:
\begin{align}
i^* = \argmax_{i \in [K]} \sum_{t=1}^{T} \mu_i(t).
\end{align}
\end{proposition}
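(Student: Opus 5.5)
Because the reward distributions are fixed and the only state that matters is the vector of pull counts, the problem is deterministic in expectation. For any policy $\pi$, the tower property gives
\[
V(\pi,T)=\mathbb{E}_\pi\Bigl[\sum_{i\in[K]}\sum_{n=1}^{N_{i,T}}\mu_i(n)\Bigr].
\]
This holds because the $n$-th pull of arm $i$ has conditional mean $\mu_i(n)$ given the history, regardless of when it happens. So the value depends only on the final allocation $(N_{1,T},\dots,N_{K,T})$ with $\sum_i N_{i,T}=T$, and $V(\pi,T)\le \max_{(n_1,\dots,n_K)} F(n_1,\dots,n_K)$, where $F(n_1,\dots,n_K):=\sum_i S_i(n_i)$ and $S_i(m):=\sum_{n=1}^{m}\mu_i(n)$. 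Conversely, any deterministic allocation is achieved by an open-loop policy. The proof therefore reduces to showing that $F$ is maximized over the simplex $\{n_i\ge 0,\ \sum n_i=T\}$ at a vertex, i.e.\ with all $T$ pulls on one arm.

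The key step is an exchange argument based only on monotonicity. Take any allocation with at least two positive entries, say $n_i,n_j\ge1$. Moving all of arm $j$'s pulls to arm $i$ changes $F$ by
\[
\Delta_{j\to i}=\sum_{k=1}^{n_j}\mu_i(n_i+k)-\sum_{k=1}^{n_j}\mu_j(k),
\]
and moving arm $i$'s pulls to arm $j$ changes it by
\[
\Delta_{i\to j}=\sum_{k=1}^{n_i}\mu_j(n_j+k)-\sum_{k=1}^{n_i}\mu_i(k).
\]
Since $\mu$ is non-decreasing, $\mu_i(n_i+k)\ge\mu_i(n_i)$ and $\mu_i(k)\le\mu_i(n_i)$ for $k\le n_i$, and similarly for $j$. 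Hence
\[
\Delta_{j\to i}\ge n_j\bigl(\mu_i(n_i)-\mu_j(n_j)\bigr), \qquad \Delta_{i\to j}\ge n_i\bigl(\mu_j(n_j)-\mu_i(n_i)\bigr).
\]
One of $\mu_i(n_i)-\mu_j(n_j)$ and $\mu_j(n_j)-\mu_i(n_i)$ is nonnegative, so at least one of the two merges does not decrease $F$. Each merge lowers the number of arms with positive count by one, so by induction some single-arm allocation $(0,\dots,T,\dots,0)$ is at least as good as the original allocation. Concavity (Assumption~\ref{asu:concavity}) is not needed; only $\gamma_i\ge 0$ is used.

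To conclude, the policy that plays $i^*=\argmax_i S_i(T)=\argmax_i\sum_{t=1}^T\mu_i(t)$ at every round attains $\max F$, which upper bounds every policy's value, so it is optimal. The main step needing care is the first reduction: the optimal $\pi^*$ may be adaptive and randomized, so I would state clearly that rewards from arm $i$ depend only on its own pull index, independent of interleaving and past noise. Then $V(\pi,T)$ is an expectation of $F$ at a random allocation and is bounded pointwise by $\max F$. The tie case in the argmax is harmless, since any maximizer works.
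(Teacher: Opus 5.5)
Your proof is correct. It also supplies something the paper does not: the paper gives no argument for this proposition, but states it as a known fact and defers to \citet{heidari2016tight}, remarking only that the non-decreasing reward structure is what simplifies the optimal strategy.

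Your write-up makes explicit the two ingredients that the citation hides.

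The first is the reduction from arbitrary adaptive, possibly randomized policies to deterministic allocations. It rests on the pathwise identity $\sum_{t}\mu_{I_t}(N_{I_t,t-1}+1)=\sum_i S_i(N_{i,T})$ together with the fresh-noise assumption. This step is genuinely needed: $\pi^*$ is defined as a maximizer over all history-dependent policies, so an exchange argument on allocations alone would not establish optimality against adaptive competitors. Your bound $V(\pi,T)\le\max F$ is exactly what closes that gap.

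The second is the exchange step, which you correctly observe uses only $\gamma_i\ge 0$. Assumption~\ref{asu:concavity} therefore plays no role here, consistent with the paper's own informal remark. This step can be shortened. Among the arms with $n_i\ge 1$, pick one maximizing $\mu_i(n_i)$ and move all other pulls onto it in a single move. Every moved term then satisfies $\mu_j(k)\le\mu_j(n_j)\le\mu_i(n_i)\le\mu_i(n_i+m)$, so no pairwise case split or induction over merges is needed.

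Finally, what you establish is that some single-arm policy, using any maximizer of $\sum_{t\le T}\mu_i(t)$, is optimal. That is the right reading of the statement, since $\pi^*$ need not be unique when ties occur, as you note.
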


This structural property offers a crucial insight regarding the nature of decision-making. At any round $t > 1$, the optimization problem for the remaining horizon $T-t$ is structurally equivalent to a new RMAB instance with shifted reward functions. This implies that horizon-dependent optimality applies continuously at every step. An arm identified as optimal at the beginning may no longer be optimal for the remaining horizon $T-t$. Therefore, the optimal policy is defined as the one selecting the arm that maximizes the cumulative reward specifically over the remaining steps, conditioned on the current history. This insight directly motivates our proposed algorithm. We design CURE-UCB to explicitly estimate and maximize this bounded remaining cumulative potential to align with the shifting optimality condition.

\subsection{Necessity of Horizon Awareness}
To provide theoretical intuition on why horizon awareness is indispensable in RMAB, we analyze the two-armed example illustrated in Figure~\ref{fig:intro}. Consider two arms with deterministic mean reward functions defined as $\mu_A(n) = 0.4$ (Early Peaker) and $\mu_B(n) = \min(1, \frac{n}{20000})$ (Late Bloomer) According to Proposition~\ref{prop:optimal_structure}, the optimal policy $\pi^*$ is selecting single arm continuously to maximize cumulative reward. 
That is, it continuously pulls arm A for $T < 16,000$, while it shifts to arm B for $T > 16,000$.

Now, consider an arbitrary horizon-agnostic policy $\pi$. Since $\pi$ is unaware of $T$, its action sequence at any time step $t$ remains invariant regardless of the actual horizon $T$. Let $N$ denote the number of times $\pi$ selects arm 1 up to $t=8,000$. We observe a fundamental performance inefficiencies:
\vspace{-0.2cm}
\begin{itemize}[left=0.1cm]
    \item \textbf{Case 1 ($N < 4,000$):} The policy prioritizes the late bloomer prematurely. While this might be beneficial for very large $T$, it leads to significant regret in short-horizon settings (e.g., $T \approx 10,000$) where the optimal strategy is to stick with the early peaker.
    \item \textbf{Case 2 ($N > 4,000$):} The policy adheres to the early peaker too long. Although efficient for short horizons, it fails to sufficiently invest in the late bloomer's growth, resulting in substantial regret as $T$ grows large (e.g., $T \approx 30,000$).
\end{itemize}
\vspace{-0.2cm}
This objective mismatch reveals a fundamental inefficiencies inherent in horizon-agnostic strategies. 
Since such algorithms operate without knowledge of $T$, they are structurally incapable of adapting to the shifting optimal policy, making them prone to significant regret in either short or long horizons depending on their initial exploration bias. 
Consequently, a horizon-aware approach like CURE-UCB is required to explicitly leverage the remaining horizon, ensuring that every decision is consistently aligned with the true horizon-dependent optimality.

\section{Algorithm}\label{sec:algorithm}

\setlength{\intextsep}{10pt}
\begin{algorithm}[h]
\caption{CURE-UCB}
\label{alg:cure}
\begin{algorithmic}[1]
    \STATE Input: Horizon $T$, Number of arms $K$
    \STATE Initialize: Play each arm $i \in [K]$ twice.
    \FOR{$t = 2K + 1, \dots, T$}
        \STATE Compute index $B_i(t)$ for each arm, where $B_i(t)$ is defined in \eqref{eq:index}\\
        \STATE Play arm $I_t = \argmax_{i \in [K]} B_i(t)$ and observe $X_t$.
        \STATE Update estimates and counter $N_{I_t} \leftarrow N_{I_t} + 1$.
    \ENDFOR
\end{algorithmic}
\end{algorithm}

Motivated by the theoretical insight that the optimal strategy prioritizes the cumulative potential over the remaining horizon, we propose the CUmulative Reward Estimation UCB (CURE-UCB) algorithm. 
In contrast to conventional approaches that depend on instantaneous estimates,
CURE-UCB explicitly estimates the aggregate future potential that each arm offers over the remaining horizon.
This cumulative perspective enables CURE-UCB to adaptively align its strategy with the horizon-dependent optimality. 
The complete procedure is outlined in Algorithm~\ref{alg:cure}.

\subsection{The Horizon-Adaptive Index}
The core of CURE-UCB is the construction of a horizon-adaptive index, denoted as $B_i(t)$.
Unlike standard UCB indices that target instantaneous rewards, this index is designed to estimate the average expected reward that arm $i$ would yield over the remaining horizon $(T-t)$, assuming it is selected continuously.
The index is formally defined as the sum of three distinct components:
\begin{align}
    \label{eq:index}
    B_i(t) := &\underbrace{\frac{1}{h_i}\sum_{l=N_{i,t-1}-h_i+1}^{N_{i,t-1}}X_i(l)}_{\text{(i) Recent average}} \\
    + &\underbrace{\frac{(T-t)}{2}\cdot\frac{1}{h_i}\sum_{l=N_{i,t-1}-h_i+1}^{N_{i,t-1}}\frac{X_i(l) - X_i(l-h_i)}{h_i}}_{\text{(ii) Expected future cumulative gain }} \notag \\
    + &\underbrace{\sigma\sqrt{\frac{2\left[3(T-t)^2+8h_i^2\right]\log t^3}{4h_i^3}}}_{\text{(iii) Exploration Bonus}} \notag
\end{align}
where $\sigma$ is the standard deviation and $h_i$ is the size of the sliding window governing a bias-variance tradeoff between employing few recent observations (less biased), compared to many past observations (less variance).

The components of $B_i(t)$ are interpreted as follows:
\begin{enumerate} [label=(\roman*)]
    \item {\it{Recent Average}}: This term estimates the current instantaneous expected reward of arm $i$. It reflects the arm's performance at the current moment and serves as the starting point for estimating the bounded cumulative potential over the remaining horizon.
    \item {\it{Estimated Average Future Gain}}: This term captures the additional value expected from the arm's growth. Specifically, it represents the average increment in reward over the remaining duration $(T-t)$ if the arm's performance continues to rise linearly. Geometrically, this corresponds to the midpoint of the projected reward trajectory, effectively estimating the average height of the cumulative reward area shown in Figure~\ref{fig:algo_desc}.
    \item {\it{Exploration Bonus}}: This term accounts for uncertainty and encourages the exploration of arms that have not been played sufficiently often. The exploration bonus used here is intentionally larger than typical bonuses in UCB-based algorithms for stationary bandit settings, because CURE-UCB predicts future rewards in a rising setting, where uncertainty is inherently greater.
\end{enumerate}

\begin{figure}[t]
    \centering
    \tikzset{
        axis line/.style={->, thick, >=Stealth},
        history curve/.style={ultra thick, black}, 
        future curve/.style={thick, gray!50},      
        estimate line/.style={ultra thick, dashed, black}, 
        guide line/.style={ultra thick, dashed, green!50!black},
        guide line2/.style={ultra thick, dashed, r-ed},
        label text/.style={font=\tiny}
    }
    \def\RewardFunc(#1){0.6 * pow(#1, 0.45)}
    \definecolor{r-ed}{HTML}{1F77B4}
    \def\SlopeEarly{0.175} 
    \def\SlopeLate{0.125}

    \def\YMax{3.0}

    \begin{subfigure}[b]{\linewidth}
        \centering
        \resizebox{\linewidth}{!}{
            \begin{tikzpicture}
                \def\Nc{2.0}        
                \def\End{8.0}       
                \def\TauTarget{3.0} 
                \def\CureTarget{5.0}
                
                \draw[axis line] (0,0) -- (9, 0) node[below, font=\normalsize] {$n$};
                \draw[axis line] (0,0) -- (0, \YMax) node[left, font=\normalsize] {$\mu_i(n)$};

                \draw[history curve] plot[domain=0:\Nc, samples=50] (\x, {\RewardFunc(\x)});
                \draw[future curve]  plot[domain=\Nc:9.0, samples=50] (\x, {\RewardFunc(\x)});
                
                \coordinate (Current) at (\Nc, {\RewardFunc(\Nc)});
                \coordinate (HorizonEst) at (\End, {\RewardFunc(\Nc) + \SlopeEarly*(\End-\Nc)}); 
                \coordinate (EstEnd) at (9.0, {\RewardFunc(\Nc) + \SlopeEarly*(9.0-\Nc)}); 
                
                \fill[pattern=north east lines, pattern color=green!50!black!30] 
                    (\Nc, 0) -- (Current) -- (HorizonEst) -- (\End, 0) -- cycle;
                \draw[green!50!black!50, thick] (\Nc, 0) -- (Current);
                
                \draw[red, ultra thick] (\End, 2.3) -- (\End,0);
                \node[above, red, font=\sffamily\small\bfseries, align=center] at (\End, 2.4) {Horizon\\Limit};
                
                \draw[orange, ultra thick] (\Nc, 2.3) -- (\Nc,0);
                \node[above, orange, font=\sffamily\small\bfseries, align=center] at (\Nc, 2.4) {Current\\Pulls};
                
                \draw[estimate line] (Current) -- (EstEnd);
                
                \coordinate (CUREPt) at (\CureTarget, {\RewardFunc(\Nc) + \SlopeEarly * (\CureTarget-\Nc)});

                \draw[guide line] (\CureTarget, 0) -- (CUREPt);
                \fill[green!50!black] (CUREPt) circle (2.5pt);
                \draw[->, green!50!black, thick] (CUREPt) -- ++(0, 0.8) node[above, font=\sffamily\small\bfseries] {CURE-UCB};

                \coordinate (TauPt) at (\TauTarget, {\RewardFunc(\Nc) + \SlopeEarly*(\TauTarget-\Nc)});
                
                \draw[guide line2] (\TauTarget, 0) -- (TauPt);
                \fill[r-ed] (TauPt) circle (2.5pt);
                \draw[->, r-ed, thick] (TauPt) -- ++(0.0, 0.5) node[above, font=\sffamily\small\bfseries] {\;\;\;\;R-ed-UCB};

                \node[below, label text,  font=\normalsize] at (\Nc, -0.18) {$N_{i,t\!-\!1}$};
                \node[green!50!black, below, label text, font=\normalsize] at (\CureTarget+0.3, 0) {$N_{i,t\!-\!1}\!+\!\frac{(T\!-\!t)}{2}$};
                \node[r-ed, below, label text,font=\normalsize] at (\TauTarget-0.1, -0.18) {$t$};


            \end{tikzpicture}
        }
        \caption{Early Stage}
    \end{subfigure}
    \hfill
    \begin{subfigure}[b]{\linewidth}
        \centering
        \resizebox{\linewidth}{!}{
            \begin{tikzpicture}
                \def\Nc{4.0}       
                \def\End{6.0}     
                \def\TauTarget{8.0} 
                \def\CureTarget{5.0}
                
                \draw[axis line] (0,0) -- (9, 0) node[below, font=\normalsize] {$n$};
                \draw[axis line] (0,0) -- (0, \YMax) node[left, font=\normalsize] {$\mu_i(n)$};

                \draw[history curve] plot[domain=0:\Nc, samples=50] (\x, {\RewardFunc(\x)});
                \draw[future curve]  plot[domain=\Nc:9.0, samples=50] (\x, {\RewardFunc(\x)});

                \coordinate (Current) at (\Nc, {\RewardFunc(\Nc)});
                
                \coordinate (EstEnd) at (9.0, {\RewardFunc(4.0) + \SlopeLate*(9.0-\Nc)});
                \coordinate (HorizonEst) at (\End, {\RewardFunc(\Nc) + \SlopeLate*(\End-\Nc)});

                \fill[pattern=north east lines, pattern color=green!50!black!30] 
                    (\Nc, 0) -- (Current) -- (HorizonEst) -- (\End, 0) -- cycle;
                \draw[green!50!black!50, thick] (\Nc, 0) -- (Current);
                \draw[red, ultra thick] (\End, 2.3) -- (\End,0);
                \node[above, red, font=\sffamily\small\bfseries, align=center] at (\End, 2.4) {Horizon\\Limit};
                \draw[estimate line] (Current) -- (EstEnd);
                
                \coordinate (CUREPt) at (\CureTarget, {\RewardFunc(\Nc) + \SlopeLate * (\CureTarget-\Nc)});

                \draw[guide line] (\CureTarget, 0) -- (CUREPt);
                \fill[green!50!black] (CUREPt) circle (2.5pt);
                \draw[->, green!50!black, thick] (CUREPt) -- ++(0, 0.5) node[above, font=\sffamily\small\bfseries] {CURE-UCB };

                \draw[orange, ultra thick] (\Nc, 2.3) -- (\Nc,0);
                \node[above, orange, font=\sffamily\small\bfseries, align=center] at (\Nc, 2.4) {Current\\Pulls};
                \coordinate (TauPt) at (\TauTarget, {\RewardFunc(\Nc) + \SlopeLate*(\TauTarget-\Nc)});
                
                \draw[guide line2] (\TauTarget, 0) -- (TauPt);
                \fill[r-ed] (TauPt) circle (2.5pt);
                \draw[->, r-ed, thick] (TauPt) -- ++(0, 0.5) node[above, font=\sffamily\small\bfseries] {R-ed-UCB};

                \node[below, label text, font=\normalsize] at (\Nc-0.45, -0.18) {$N_{i,t\!-\!1}$};
                \node[green!50!black, below, label text, font=\normalsize] at (\CureTarget+0.65, 0) {$N_{i,t\!-\!1}\!+\!\frac{(T\!-\!t)}{2}$};
                \node[r-ed, below, label text, font=\normalsize] at (\TauTarget, -0.18) {$t$};


            \end{tikzpicture}
        }
        \caption{Late Stage}
        \vspace{-0.2cm}
    \end{subfigure}
    \caption{\textbf{Comparison of estimation behaviors: horizon-aware (\textcolor{green!50!black}{CURE-UCB}) vs. horizon-agnostic (\textcolor[HTML]{1F77B4}{R-ed-UCB}).} 
    The \textcolor{red!80}{Horizon Limit} denotes the maximum reachable point given the remaining trials.
    Since this horizon ($T-t$) is identical for all arms, we visualize CURE-UCB using its midpoint derived from the cumulative area; normalizing by this common factor preserves the arm ranking.    
    (a) Early Stage: R-ed-UCB remains conservative due to small $t$, whereas CURE-UCB targets a point closer to the horizon limit, enabling aggressive exploration.
    (b) Late Stage: CURE-UCB shifts to exploitation within the limit, while R-ed-UCB overshoots into the impossible region.
    }
    \vspace{-0.5cm}
    \label{fig:algo_desc}
\end{figure}
\subsection{Intuition: Comparison with Horizon-Agnostic Algorithms}
\label{sec:intuition}

To understand the mechanism of CURE-UCB, it is instructive to contrast its estimation objective with that of horizon-agnostic algorithms like R-ed-UCB \cite{metelli2022stochastic, fiandri2024rising}. While these methods capture the rising nature of rewards, they face inherent inefficiencies due to their inability to tailor strategies to the fixed horizon.
An illustration of this comparison is given in Figure \ref{fig:algo_desc}.

\textbf{Instantaneous prediction} 
While the ideal objective is to maximize the cumulative reward over the remaining horizon, estimating this aggregate value accurately is difficult without an explicit horizon $T$.
Consequently, horizon-agnostic algorithms, such as R-ed-UCB, typically operate by estimating the expected reward at a specific future time step without prior knowledge of the total horizon. 
This design is effective for indefinite horizon settings, where identifying and committing to the arm with the highest growth rate is indeed the optimal long-term strategy.

\textbf{Horizon adaptive prediction (CURE-UCB)}
In contrast, CURE-UCB dynamically adapts its strategy to the varying horizon. As established in Proposition~\ref{prop:optimal_structure}, the optimal policy changes strictly based on the remaining horizon.
Leveraging the known horizon $T$, CURE-UCB continuously recalibrates its valuation.
By estimating the average reward expected if the arm were played for the remaining horizon, the algorithm effectively aligns its decision-making with the optimal policy. 
This enables CURE-UCB to fluidly changes its focus: when remaining horizon is sufficient, CURE-UCB invests arms with high potential, while prioritizing immediate returns when the remaining horizon is insufficient.
\section{Analysis} \label{sec:analysis}
In this section, we provide the theoretical analysis of the proposed CURE-UCB algorithm. 
Firstly, We demonstrate the superiority of our approach in deterministic ``Linear-Then-Flat'' (LTF) environments, proving that CURE-UCB strictly dominates representative horizon-agnostic strategies.
Finally, we derive a general regret upper bound, guaranteeing the efficiency of our method across broad concave rising environments.

\subsection{Superiority in Linear-Then-Flat (LTF) Setting}

To substantiate the theoretical advantage of horizon awareness, we analyze the performance in a concrete structured environment. We focus on the ``Linear-Then-Flat'' (LTF) setting, which serves as a canonical example where the optimal strategy shifts sharply based on the horizon. In this setting, we demonstrate that CURE-UCB strictly outperforms R-ed-UCB\cite{metelli2022stochastic}, a representative horizon-agnostic algorithm. We first formally define the deterministic ($\sigma$=0) LTF instance class:

\begin{definition}(Linear-Then-Flat Instance)
\label{def:ltf}
Let $\mathfrak{U}_{K}$ denote the set of $K$-armed Linear-Then-Flat problem instances. For any instance $\mu \in \mathfrak{U}_{K}$, the expected reward of arm $i$ at its $n$-th play is defined as:
\begin{align}
\mu_i(n) = \min\left\{b_i, a_i\cdot n \right\} \;,
\end{align}
where $a_i > 0$ represents the growth rate (slope), $ 0 < b_i < 1$ represents maximum attainable reward. An arm $i$ grows linearly with rate $a_i$ until it reaches the capacity $b_i$, after which it yields a constant reward $b_i$.  
\end{definition}

Under this specific instance class, we establish the following theorem demonstrating the strict dominance of CURE-UCB over R-ed-UCB:

\begin{theorem}[Strict Dominance in deterministic LTF]
\label{thm:dominance_ltf}
Let $\pi^{\mathrm{cure}}_{det}$ denote the CURE-UCB algorithm with $h_i=1$ and $\pi^{\mathrm{red}}_{det}$ denote the R-ed-UCB algorithm with $h_i=1$. 
Consider a deterministic LTF problem instance $\vec{\mu} \in \mathfrak{U}_{K}$ ($\sigma=0$) and for any finite horizon $T$, the cumulative regret of CURE-UCB is always less than or equal to that of R-ed-UCB :
\begin{align}
     Reg_{\vec{\mu}}(\pi^{\mathrm{cure}}_{det}, T) \le  Reg_{\vec{\mu}}(\pi^{\mathrm{red}}_{det}, T) \;.
\end{align}
\end{theorem}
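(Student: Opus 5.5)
The plan is to reduce the comparison to a comparison of final pull counts. In the deterministic setting ($\sigma=0$), the reward collected by any policy depends only on the vector of final counts $(N_i(T))_{i\in[K]}$:
\begin{align}
V(\pi,T)=\sum_{i\in[K]}\sum_{n=1}^{N_i(T)}\mu_i(n)=:F\bigl(N_1(T),\dots,N_K(T)\bigr).
\end{align}
By Proposition~\ref{prop:optimal_structure}, $V(\pi^*,T)$ is common to both algorithms. The claim is therefore equivalent to $F(N^{\mathrm{cure}}(T))\ge F(N^{\mathrm{red}}(T))$.

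\textbf{Step 1: indices with $h_i=1$.} With $h_i=1$ and no noise, the slope estimate $X_i(N)-X_i(N-1)$ equals $a_i$ while arm $i$ is in its linear phase (exactly $a_i$ if $a_iN\le b_i$) and $0$ once it is flat, apart from one boundary step. The CURE index is then
\begin{align}
B^{\mathrm{cure}}_i(t)=\mu_i(N_{i,t-1})+\tfrac{T-t}{2}\,\hat\gamma_i .
\end{align}
This equals the average of the linear extrapolation over the remaining $T-t$ rounds. It is an upper bound on the true average future reward of committing to $i$, and it is tight when the arm would not saturate. The R-ed-UCB index is $\mu_i(N_{i,t-1})+(t-N_{i,t-1})\hat\gamma_i$, which is the extrapolation to time $t$. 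Two consequences follow.
\begin{itemize}[left=0.1cm]
\item For a flat arm, both indices equal $b_i$ and never change.
\item For an unplayed growing arm, the CURE index is nonincreasing in $t$, while the R-ed index is nondecreasing in $t$.
\end{itemize}

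\textbf{Step 2: CURE's trajectory.} Once CURE starts pulling a growing arm $j$, its index changes by $a_j-a_j/2=a_j/2>0$ per pull until saturation. Every other growing arm's index drops by $a_i/2$ per round. Hence CURE commits to $j$ until $j$ saturates (index becomes $b_j$) or the horizon ends. After that, CURE only ever switches to arms whose index exceeds $b_j$. I would then show that CURE's final allocation consists of the $2K$ initial pulls, followed by at most a few "complete" investments in arms that actually reach a value at least that of any flat alternative. In particular, the allocation has no wasted partial investment: a growing arm is abandoned mid-growth only at the last rounds, when it is already the best.

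\textbf{Step 3: R-ed-UCB's trajectory.} Because R-ed's indices of unplayed growing arms increase with $t$, R-ed repeatedly leaves a saturated arm to explore growing arms. It does so even when $T-t$ is too short to recover the deficit. I would argue that its final allocation either coincides with CURE's or differs from it by additional pulls of growing arms taken from the arm CURE exploits. I would then use an exchange argument: each such moved pull is an early (low, $a_in$) value replacing a flat value $b_j$. The key inequality to verify is that these moved pulls are pulls CURE rejected, because $\mu_i(N)+\tfrac{T-t}{2}a_i\le b_j$ at that time. Hence their total value, which is at most the remaining-horizon area bounded by that index, cannot exceed what the flat arm would have given.

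\textbf{Main obstacle.} The hard step is the coupling between the two trajectories. The policies diverge after initialization, so "the rounds CURE rejected" and "the rounds R-ed spent" are not aligned in time. I would first try an induction on $K$, or on the order in which arms saturate. The base case is $K=2$, where both trajectories are explicit switching sequences and the comparison reduces to an inequality on triangular areas. For the inductive step, I would argue that the first arm CURE commits to is also eventually fully exploited by R-ed, and then remove it. I would also handle the boundary cases carefully: the saturation step, where the slope estimate is a fractional value, and ties in the $\argmax$, for which I would fix a common tie-breaking rule for both algorithms.
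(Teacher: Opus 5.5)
Your reduction to final pull counts and the index facts in Step~1 are fine. However, the structural picture in Steps~2--3, on which your exchange argument rests, is wrong, so there is a genuine gap.

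First, CURE does make complete \emph{wasted} investments. It opens an unplayed arm $i$ as soon as $\mu_i(1)\bigl(1+\tfrac{T-t}{2}\bigr)$ exceeds the incumbent level $b_{\max}$, whatever $b_i$ is. It then plays that arm to saturation even if $b_i<b_{\max}$. For example, with $B$ flat at $0.5$, $a_A=10^{-3}$, $b_A=0.3$ and $T=1100$, CURE spends about $300$ pulls on $A$ and then returns to $B$. This is the content of the paper's Claim~2: every non-final arm is played exactly $c_i$ times.

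Second, and more seriously, R-ed's allocation does not differ from CURE's only by extra growing-arm pulls taken from the arm CURE exploits. R-ed's index of an unopened arm is $\mu_i(1)\,t$, which is small for small $t$, so R-ed also \emph{delays} entering an arm that CURE commits to at once. Take $b_A=0.6$ and $T=1500$ instead. CURE plays $A$ from $t=4$ on, while R-ed stays on $B$ until $t\approx b_B/a_A=500$. R-ed therefore ends with roughly $500$ \emph{more} flat-arm pulls and correspondingly fewer pulls of CURE's exploited arm. Your rejection inequality says nothing about this case.

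What is needed here is an acceptance inequality. CURE enters $A$ only if $T\gtrsim 2b_B/a_A$. Hence the at most $b_B/a_A$ tail pulls of $A$ that R-ed gives up have index $n\gtrsim b_B/a_A$, and each is worth $\min(b_A,a_An)\ge b_B$ when $b_A\ge b_B$. When $b_A<b_B$, R-ed completes the same wasted investment before $T$, because $b_B/a_A+c_A<2b_B/a_A$. Note also that CURE is not optimal in this instance ($B$ alone yields $750$, CURE about $720$). So the comparison cannot be routed through optimality of CURE on a residual problem; it must be made directly against R-ed's allocation. For $K\ge 3$ both effects occur in the same run, so the difference of allocations is not one-signed.

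Your induction on $K$ (deleting the first arm CURE commits to) also does not go through as stated. R-ed's index depends on absolute time through $t-N_{i,t-1}$. After deleting $c_1$ pulls, the remaining trajectory is therefore not R-ed on the reduced instance but a time-shifted, more aggressive variant. CURE, by contrast, is invariant, since $T-t$ is unchanged. In addition, the deleted arm stays in play as a flat competitor at level $b_1$.

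The idea that dissolves your coupling obstacle, and the one the paper uses, is to compare horizon thresholds instead of trajectories. Both algorithms open arms in the same order (decreasing $\mu_i(1)$) and keep playing an opened arm until it saturates. R-ed, being horizon-agnostic, first plays arm $k$ at a horizon-independent time $T_k^{red}=\max\{T_{k-1}^{red}+c_{k-1},\max_{i<k}b_i/a_k\}$. CURE, having played arms $1,\dots,k-1$ consecutively to saturation, opens arm $k$ only if $T>T_k^{cure}=\sum_{i<k}c_i+2\max_{i<k}b_i/a_k$. A short induction on $k$ gives $T_k^{red}\le T_k^{cure}$. From this the paper obtains (Claims~3--4) that every arm CURE plays to saturation, other than its final one, has also been played to saturation by R-ed by time $T$. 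After removing these common saturation pulls, the residual budgets are compared using both inequalities above: rejection for arms that only R-ed opens, and acceptance for CURE's final arm.
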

\begin{figure}[!t] 
    \centering
    \begin{subfigure}[b]{0.49\linewidth} 
        \centering
        \includegraphics[width=\linewidth, trim={7bp 10bp 8bp 3bp}, clip]{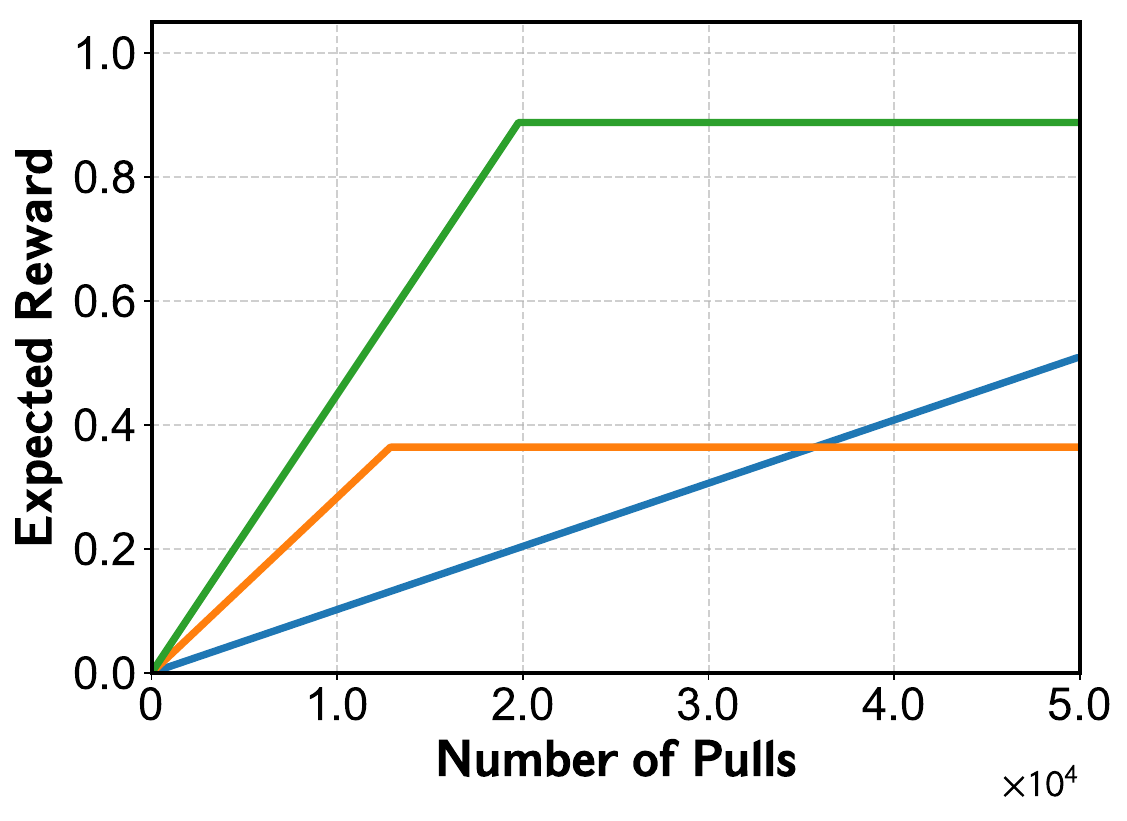}
        \caption{\small LTF setting}
        \label{fig:ltf_ex0}
    \end{subfigure}
    \hfill 
    \begin{subfigure}[b]{0.49\linewidth}
        \centering
        \includegraphics[width=\linewidth, trim={5bp 5bp 5bp 5bp}, clip]{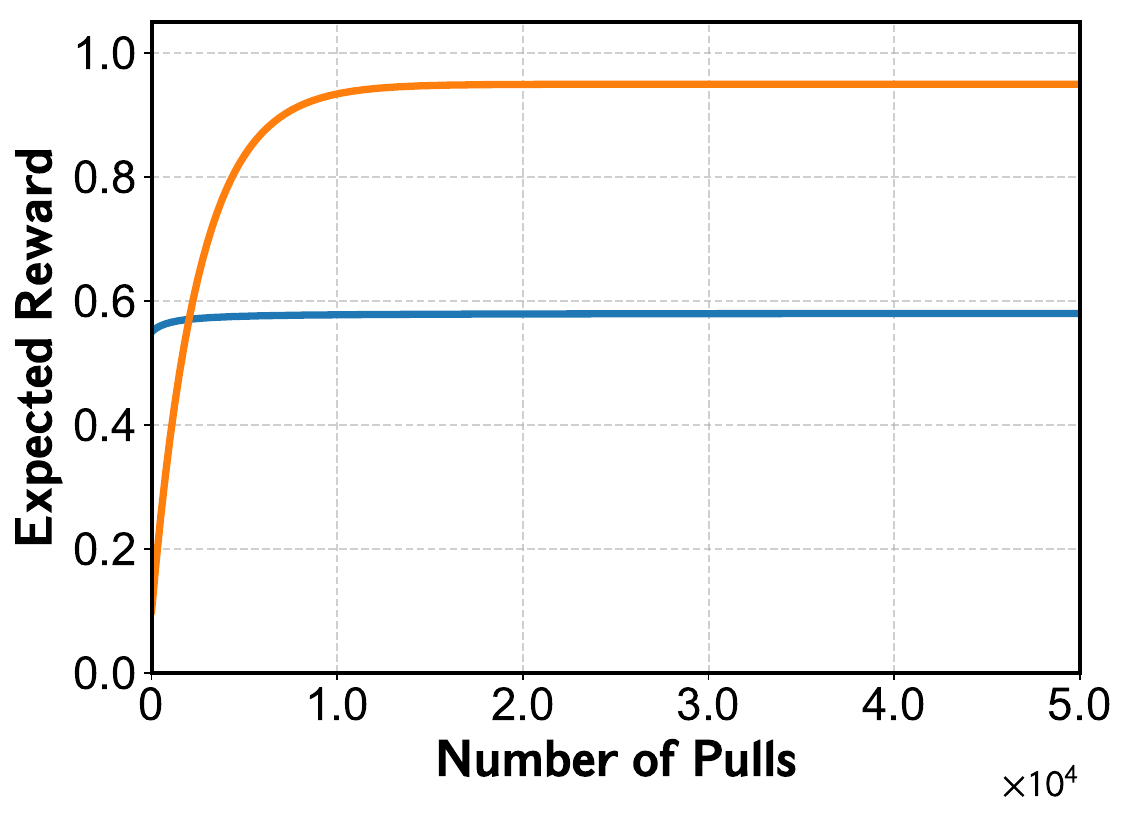}
        \caption{\small Concave setting}
        \label{fig:concave_ex0}
    \end{subfigure}

    \caption{\textbf{Example instances of synthesized environments.}(a) The Linear-Then-Flat (LTF) setting, where the expected reward increases linearly before reaching a saturation point. (b) The concave setting, characterizing non-linear growth dynamics.}
    \label{fig:concave_ex_0}
    \vspace{-0.3cm}
\end{figure}

The strict dominance result originates from the distinct capabilities in adhering to horizon-dependent optimality, specifically when a high-growth arm becomes saturated. 
Initially, both algorithms prioritize arms with high growth rates. 
However, the difference occurs once the current high-growth arm reaches its capacity. 
At this point, the optimal strategy depends entirely on whether the remaining horizon is sufficient to justify switching to a new initially lower-reward arm.
Leveraging the known horizon $T$, CURE-UCB precisely identifies this optimal switching moment. It calculates whether the accumulated future gain from the new arm exceeds the initial performance deficit within the remaining horizon. If the horizon is insufficient, CURE-UCB correctly identifies that sticking to the saturated arm aligns with the horizon-dependent optimality.
In contrast, R-ed-UCB, lacking knowledge of $T$, finds it impossible to verify this condition. 
Driven by the estimated slope, it invariably switches to the new arm whenever it detects a high growth rate, failing to account for the fact that the remaining horizon is too short to recover the investment cost. This structural inability to determine the correct switching moment leads R-ed-UCB to incur wasteful exploration, resulting in strictly higher regret compared to CURE-UCB.
The detailed proof is provided in Appendix~\ref{sec:proof_dominance_ltf}.

\subsection{General Regret Upper Bound}
To quantify the complexity of general RMAB instances beyond the LTF setting, we adapt the definition of \textit{cumulative increment} from \citet{metelli2022stochastic}, which characterizes the total magnitude of reward growth.

\begin{definition} (Cumulative increment)
    Let $\gamma_{\max}(n) := \max_{i\in [K]}\gamma_i(n)$ be the maximum instantaneous increment across all arms at the $n$-th play.
    For a horizon $M$ and a parameter $q\in[0,1]$, we define the \textit{cumulative increment} $\Upsilon_{\vec{\mu}}(M,q)$ as follows:
    \begin{align}
        \Upsilon_{\vec{\mu}}(M,q) := \sum_{n=1}^{M}(\gamma_{\max}(n))^q \;.
    \end{align}
\end{definition}

Leveraging this complexity measure, we establish the following regret upper bound, which guarantees the efficient performance of CURE-UCB in general concave rising environments.

\begin{theorem}[Regret upper bound for general case]
\label{thm:upper_bound}
Let $\pi_{\epsilon}$ be CURE-UCB algorithm with $h_i=\lfloor\epsilon N_{i,t-1}\rfloor$. 
For $T>0, q\in [0,1]$, and $\epsilon\in(0,0.5)$, we have the following regret upper bound:
\begin{align}
     &Reg_{\vec{\mu}}(\pi_{\epsilon}, T) \leq  
     \frac{(1+2^{1+q})KT^{q}}{2}\Upsilon_{\vec{\mu}}\left(\left\lceil(1-2\epsilon)\frac{T}{K}\right\rceil,q\right) \\
    &+ \frac{3K}{\epsilon} (\sigma T)^{2/3} (\log T)^{1/3} (18+48\epsilon^2)^{1/3} +O(1)\notag
\end{align}
\end{theorem}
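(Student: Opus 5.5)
We follow the template of the R-ed-UCB analysis of \citet{metelli2022stochastic}, with the instantaneous target replaced by the remaining-horizon average. For arm $i$ with $N$ pulls at round $t$, define the target
\[
\bar\mu_i(N,t):=\frac{1}{T-t+1}\sum_{s=0}^{T-t}\mu_i(N+s).
\]
By Proposition~\ref{prop:optimal_structure}, comparing $\pi_\epsilon$ with the single-arm policy $i^*$ gives
\[
Reg=\sum_{n\le T}\mu_{i^*}(n)-\EXP\sum_i\sum_{n\le N_{i,T}}\mu_i(n).
\]
The first step is to relate this regret to a sum of per-round gaps. Let $j=I_t$ and $N=N_{j,t-1}$. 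Whenever the index satisfies $B_{I_t}(t)\ge B_{i^*}(t)$, the estimates are accurate, and the deterministic target of arm $I_t$ dominates the optimal arm's target up to a bias term, concavity (monotone, concave $\mu$) lets us lower bound the reward collected. Concretely, the linear extrapolation of arm $j$ from slope $\gamma_j(N-h)$ over $T-t$ steps upper bounds $\bar\mu_j$. Telescoping over rounds then bounds the regret by the sum of (a) the extrapolation bias and (b) the width of the confidence bonus at the rounds where each arm is pulled.

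The second step splits the index error into a bias part and a stochastic part. In the deterministic (noise-free) version of $B_i(t)$, the slope estimator $\frac{1}{h}\sum(\mu(l)-\mu(l-h))/h$ lies between $\gamma_i(N)$ and $\gamma_i(N-2h)$ by concavity. Hence the overestimate of the remaining-horizon average is at most about
\[
\frac{T-t}{2}\bigl(\gamma_i(N-2h)-\gamma_i(N+T-t)\bigr)+\text{(gap between the recent average and }\mu_i(N)).
\]
Bounding this by $T\cdot\gamma_i((1-2\epsilon)N)$ and summing over the pulls of each arm produces sums of $\gamma_{\max}$. To obtain the power $q$, we use $\gamma\le 1$, hence $\gamma\le\gamma^q$, together with the trade-off $T\gamma\le T^q\gamma^q$, valid because $T\gamma\le$ const (as in \citet{metelli2022stochastic}, a min of two bounds). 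A pigeonhole argument over the $K$ arms shows that only the first $\lceil(1-2\epsilon)T/K\rceil$ indices matter, since the arm pulled most receives at least $T/K$ pulls. The recent-average lag contributes the factor $2^{q}\cdot 2$, and with the midpoint factor $1/2$ this gives the coefficient $(1+2^{1+q})KT^q/2$.

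The third step handles the stochastic part. The estimator is a linear combination of $2h$ independent $\sigma$-subgaussian samples with coefficients $1/h\pm (T-t)/(2h^2)$, whose squared norm is $\le (3(T-t)^2+8h^2)/(4h^3)$ after Cauchy--Schwarz. The bonus (iii) is therefore a valid confidence width at level $t^{-3}$, and summing failure probabilities gives the $O(1)$ term. On the good event, each pull of arm $i$ with $N$ pulls costs at most twice the bonus, roughly
\[
\sigma T\sqrt{\log T}/(\epsilon N)^{3/2}\ \text{times}\ (\cdot)^{1/2}.
\]
Summing this directly diverges for small $N$, so we instead cap each round's cost at $1$ and use the threshold trick: rounds with $N\le N_0$ cost at most $KN_0/\epsilon$. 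Above the threshold, the per-round bonus is $\lesssim \sigma T\sqrt{(18+48\epsilon^2)\log T}/(\epsilon N_0)^{3/2}$, and the total over at most $T$ rounds is $T$ times this. Balancing $N_0=(\sigma T)^{2/3}(\log T)^{1/3}(18+48\epsilon^2)^{1/3}$ gives the stated $\frac{3K}{\epsilon}(\sigma T)^{2/3}(\log T)^{1/3}(18+48\epsilon^2)^{1/3}$.

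The main obstacle is the first step. The index tracks a remaining-horizon average that changes with $t$, so the standard ``pull implies gap $\le$ width'' argument must be converted into a bound on policy regret, which is a cumulative, path-dependent quantity. We would first try an amortization argument: charge the optimal arm's forgone growth to the per-round index comparison using concavity, i.e.\ that the future average is at least the current value, and verify that the constants close up.
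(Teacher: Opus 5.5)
Your proposal has a genuine gap at the step you flag as the main obstacle, and the remaining-horizon-average route you sketch for it is not what makes the proof work. No telescoping or amortization is needed. Since $V(\pi^*,T)=\sum_{t=1}^T\mu_{i^*}(t)$, the regret is exactly $\sum_t\EXP[\mu_{i^*}(t)-\mu_{I_t}(N_{I_t,t})]$, i.e.\ round $t$ is paired with the $t$-th pull of $i^*$. Your route is also risky: with targets $\bar\mu_i(N,t)$, the natural potential argument charges round $t$ with $(T-t+1)$ times the gap in averages. A confidence width measured in average units then gets multiplied by the remaining horizon, and the rate is lost.

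The paper uses $B_{I_t}(t)\ge B_{i^*}(t)$ on the good event to split each term into three parts, where $\widetilde\mu_i(t)$ is the index with rewards replaced by means and no bonus:
\begin{itemize}
\item (A1) $\mu_{i^*}(t)-\widetilde\mu_{i^*}(t)$;
\item (A2) $\widetilde\mu_{I_t}(t)-\mu_{I_t}(N_{I_t,t})$;
\item (A3) $2\beta_{I_t}(t)$.
\end{itemize}
Your second step only controls the overestimate for the pulled arm, which is (A2). It is bounded by $\frac{T-t}{2}\gamma_{I_t}(N_{I_t,t-1}-2h+1)$ after the recent-average lag is simply discarded by monotonicity.

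The missing idea is (A1). R-ed-UCB's index extrapolates exactly to time $t$ and therefore dominates $\mu_{i^*}(t)$ by concavity. The CURE index of $i^*$ extrapolates only $(T-t)/2$ steps from $n\le N_{i^*,t-1}$, so optimism for the optimal arm can fail late in the horizon. Lemma~\ref{lem:future_bound} bounds this shortfall by $\max\{0,\frac{3t-2n-T}{2}\gamma_{i^*}(\frac{T-t}{2})\}\le T\gamma_{i^*}(\frac{T-t}{2})$. Summing $\min\{1,\cdot\}$ of this over $t$ with the $q$-trick produces the $2^{q}$ in the coefficient $\frac{1+2^{1+q}}{2}=2^{q}+\frac12$; the $\frac12$ comes from (A2). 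Your attribution of a factor $2^{q}\cdot 2$ to the recent-average lag does not correspond to any actual step, and you never address the optimal-arm side at all.

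Your third step also fails as written. Suppose every post-threshold round is charged the bonus at $N_0$ and this is multiplied by $T$. Your choice of $N_0$ makes $\sigma T\sqrt{(18+48\epsilon^2)\log T}/N_0^{3/2}=1$, so the total is of order $T\epsilon^{-3/2}$, which is linear. Even optimally balanced, this scheme only gives $T^{4/5}$.

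The bonus must instead be summed per arm along its own pull count. For $j>\lceil 1/\epsilon\rceil$ (so that $h\ge 1$), the cost at the $j$-th pull is at most $\min\{1,Cj^{-3/2}\}$ with $C=\sigma T\sqrt{\log T}\,\epsilon^{-3/2}\sqrt{18+48\epsilon^2}$. Taking $j^*=\lceil C^{2/3}\rceil$ gives $\sum_j\min\{1,Cj^{-3/2}\}\le j^*+2C(j^*)^{-1/2}\le 3C^{2/3}+1$, because the tail is a convergent series rather than $T$ copies of the threshold value. Multiplying by $K$ yields exactly $\frac{3K}{\epsilon}(\sigma T)^{2/3}(\log T)^{1/3}(18+48\epsilon^2)^{1/3}$.

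A smaller point: the reduction to $\Upsilon_{\vec{\mu}}$ evaluated at $\lceil(1-2\epsilon)T/K\rceil$ is Lemma C.2 of \citet{metelli2022stochastic}. Its justification is that, for a non-increasing summand, the sum over arms is largest when the $T$ pulls are split evenly. It is not that the most-pulled arm receives at least $T/K$ pulls.
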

The derived upper bound establishes the theoretical soundness of the cumulative estimation approach. 
It confirms that CURE-UCB maintains the standard convergence rate characteristic of concave rising bandits. 
This result validates that explicitly targeting the cumulative reward is a safe and effective strategy, capable of handling the complexity of general environments while preserving the fundamental theoretical guarantees.
The proof of Theorem~\ref{thm:upper_bound} is provided in Appendix~\ref{sec:proof_upper_bound}.

\section{Experiments} \label{sec:experiments}

In this section, we evaluates CURE-UCB against state-of-the-art baselines in rising and non-stationary environments.
We test two synthetic settings: (i) the Linear-Then-Flat (LTF) setting (Section~\ref{sec:exp_ltf}) to confirm the strict dominance in Theorem~\ref{thm:dominance_ltf}, and (ii) the Concave setting (Section~\ref{sec:exp_rf}) to verify the regret bound presented in Theorem~\ref{thm:upper_bound}.
We further demonstrate practical utility via online model selection (Section~\ref{sec:IMDB}).
Detailed specifications are in Appendix~\ref{sec:explantion_experiment}.

\paragraph{Baselines} 
We compare CURE-UCB against two categories of algorithms:
\vspace{-0.2cm}
\begin{itemize}[left=0.1cm]
    \item \textbf{Rising bandit algorithms}: We adopt state-of-the art algorithms designed for the rising bandit framework, including R-ed-UCB \citep{metelli2022stochastic}, ET-SWGTS \citep{fiandri2025thompson}, TI-UCB \citep{xia2024llm}, and R-ed-AE \citep{amichay2025rising}.
    \item \textbf{Non-stationary bandit algorithms}: As the RMAB framework is an instance of non-stationary environments, we include representative algorithms handling dynamic rewards. These comprise sliding-window approaches (SW-UCB \citep{garivier2011upper}, SW-KL-UCB \citep{garivier2011kl}, SW-TS \citep{trovo2020sliding}), as well as the restarting-based Rexp3 \citep{besbes2014stochastic} and the sequential elimination-based Ser4 \citep{allesiardo2017non}.
\end{itemize}
Detailed descriptions and pseudocode for all baseline algorithms are provided in Appendix~\ref{sec:baselines}.

\begin{figure*}[t]
    \centering
    \includegraphics[width=0.7\linewidth]{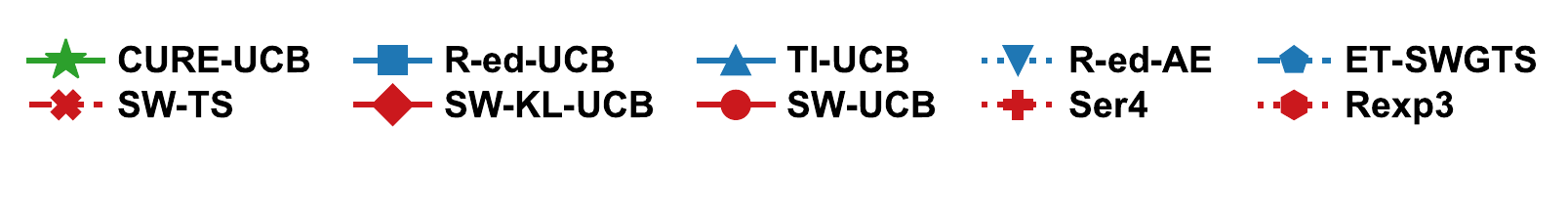}
    \vfill
    \begin{subfigure}[b]{0.38\textwidth}
        \centering
        \includegraphics[width=\linewidth, trim={7bp 10bp 8bp 3bp}, clip]{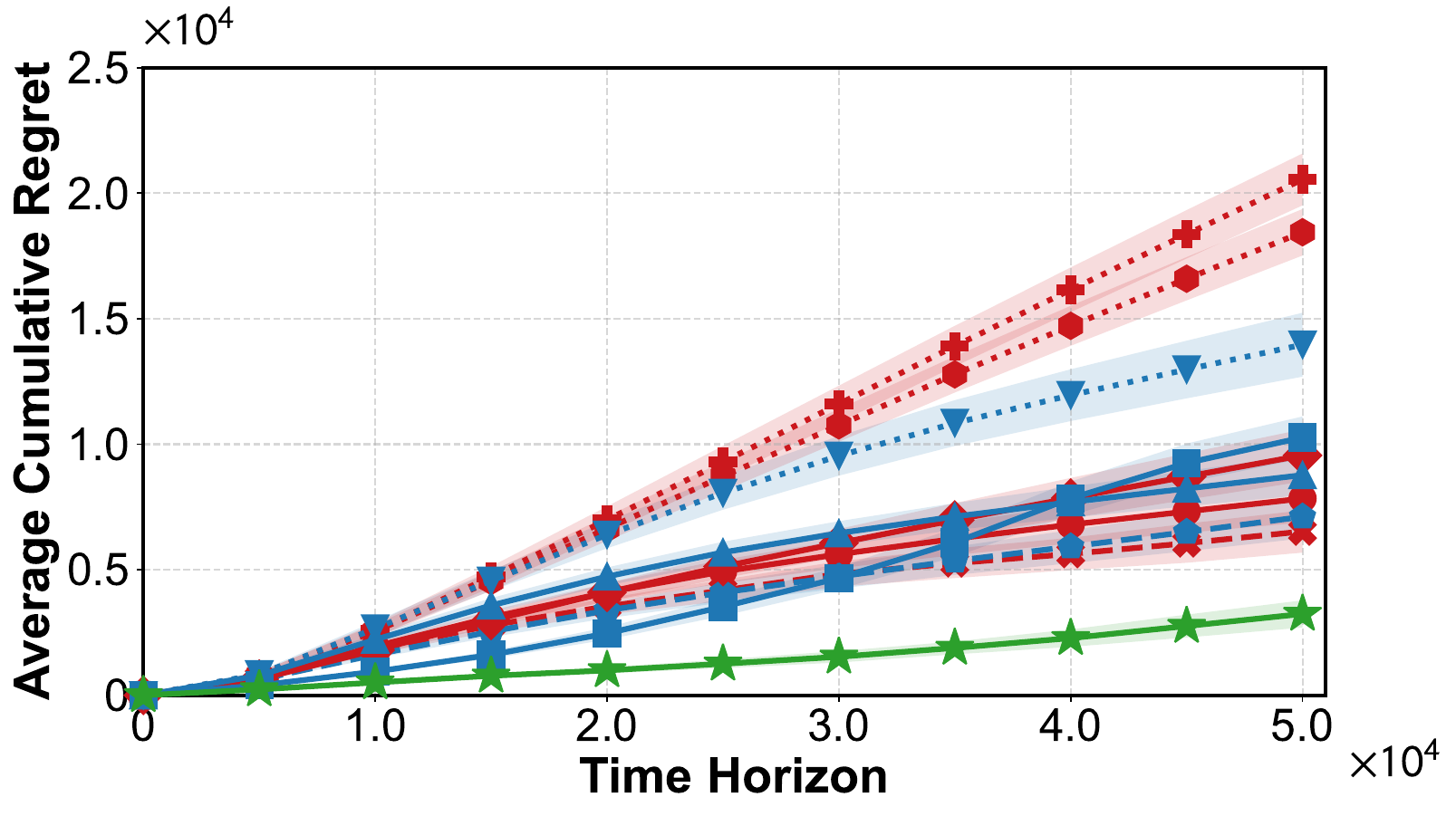}
        
        \caption{\small Cumulative Regret} 
        
        \label{fig:ltf_regret}
    \end{subfigure}
    \hfill
    \begin{subfigure}[b]{0.30\textwidth}
        \centering
        \includegraphics[width=\linewidth, trim={5bp 40bp 5bp 5bp}, clip]{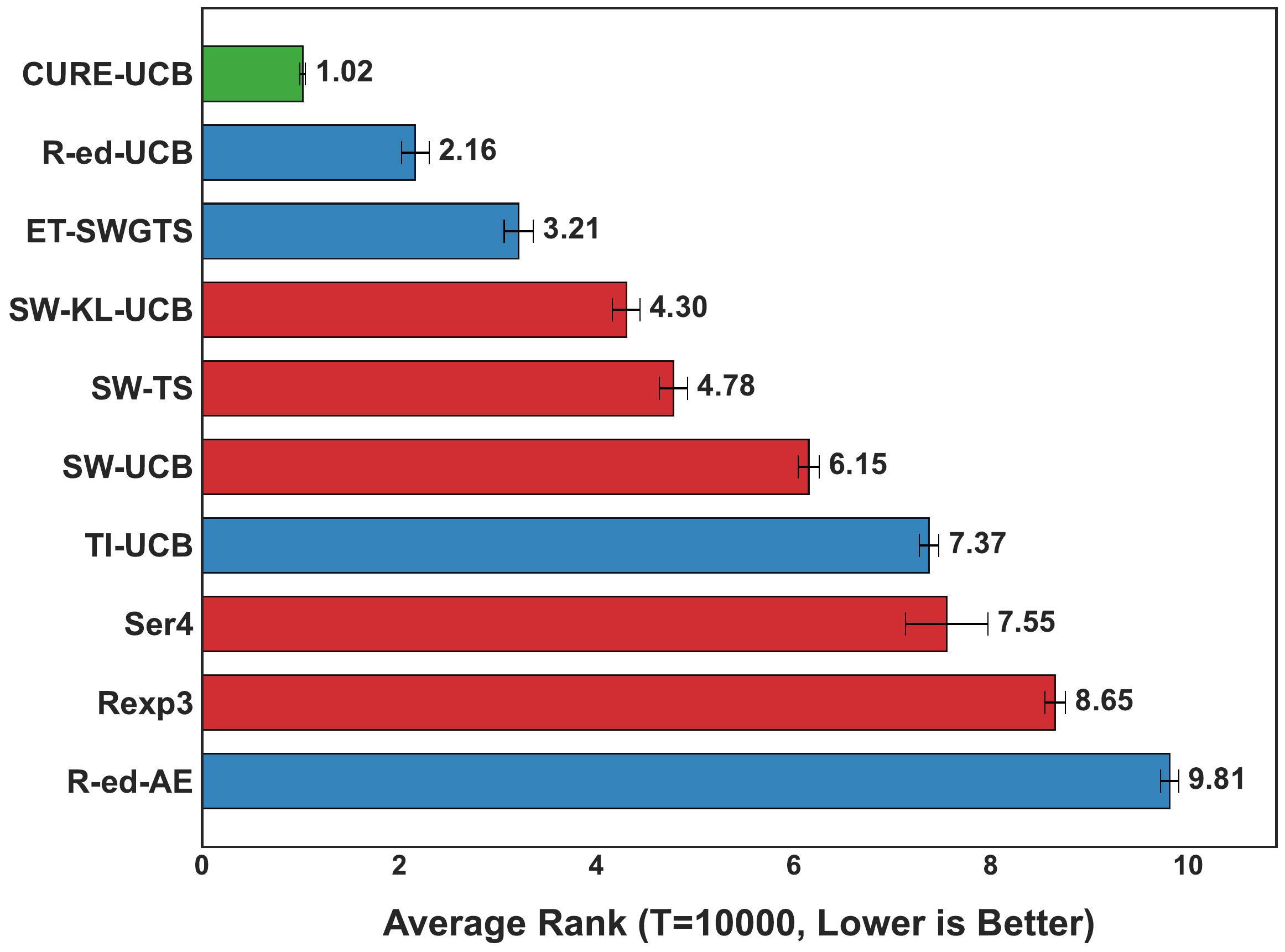}
        \caption{\small Average Rank ($T$=10,000)}
        \label{fig:ltf_rank_10k}
    \end{subfigure}
    \hfill
    \begin{subfigure}[b]{0.30\textwidth}
        \centering
        \includegraphics[width=\linewidth, trim={5bp 40bp 5bp 5bp}, clip]{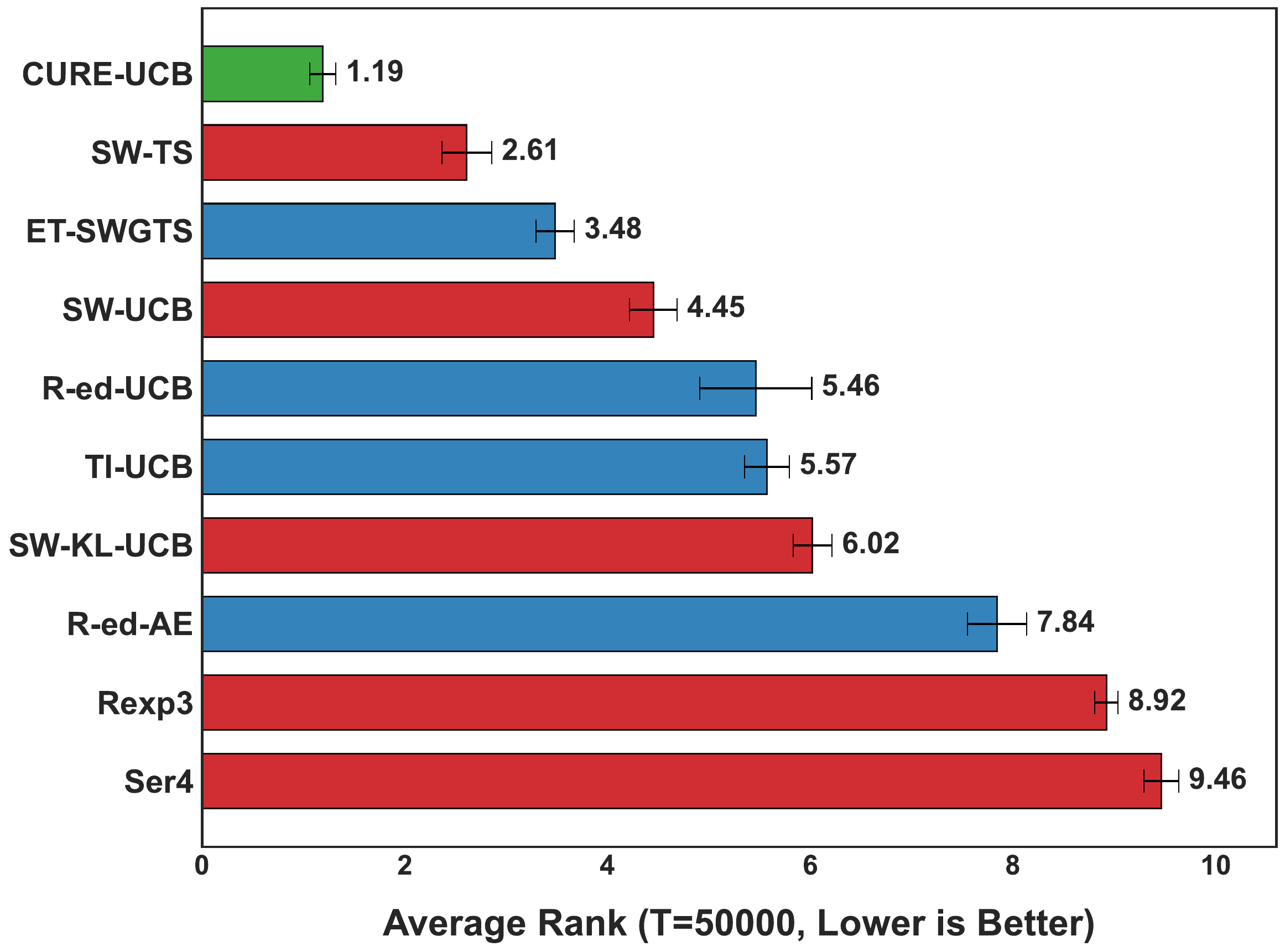}
        \caption{\small Average Rank ($T$=50,000)}
        \label{fig:ltf_rank_50k}
    \end{subfigure}

     \caption{\textbf{Performance Analysis in LTF Setting.}  (a) Cumulative regret as a function of the time horizon $T$. CURE-UCB consistently achieves the lowest regret across all horizons. (b, c) Average Rank at $T$=10,000 (short horizon) and $T$=50,000 (long horizon), respectively. Lower rank indicates better performance. CURE-UCB maintains the lowest rank in both horizons. Shaded regions and error bars denote 95\% confidence intervals.}
    \label{fig:ltf_all_results}
    \vspace{-0.3cm}
\end{figure*}

\paragraph{Evaluation Metrics}
We evaluate performance using Cumulative Regret, measuring the accumulated difference between the expected reward of the optimal policy and the algorithm.
A key distinction in our analysis is the x-axis: since the optimal policy in the RMAB framework varies with the horizon $T$, and algorithms (including CURE-UCB) explicitly utilize $T$, we report the cumulative regret as a function of $T$, rather than intermediate time steps $t$.
Additionally, to demonstrate robustness, we report the Average Rank of the algorithms, calculated over 100 independent problem instances at specific fixed horizons ($T$=10,000 and 50,000).

\subsection{Linear-Then-Flat Setting} 
\label{sec:exp_ltf}

To construct a rigorous evaluation environment for the Linear-Then-Flat (LTF) setting, we generated problem instances strictly following the formulation in Definition~\ref{def:ltf}, as visually depicted in Figure~\ref{fig:ltf_ex0}.
Within this framework, we enforced structural diversity by randomizing the parameters for each arm, rather than relying on fixed configurations (detailed specifications in Appendix~\ref{sec:explantion_experiment}).
This randomized generation leads to mixed scenarios: some instances have optimal arms with high initial rewards (favoring reactive non-stationary algorithms), while others have steep growth rates (favoring rising bandit algorithms).

The structural diversity described above leads to fluctuating rankings among the baselines, as they are specialized for specific dynamics (as depicted in Figure~\ref{fig:ltf_rank_10k} and~\subref{fig:ltf_rank_50k}). In contrast, CURE-UCB demonstrates remarkable stability, consistently achieving superior rankings regardless of these variations. Furthermore, Figure~\ref{fig:ltf_regret} confirms that CURE-UCB yields the lowest cumulative regret. This demonstrates that CURE-UCB successfully adapts to the underlying structure by identifying whether immediate exploitation or long-term investment is required, effectively outperforming all baselines in both stability and performance.

\begin{figure*}[t]
    \centering
        \includegraphics[width=0.7\linewidth]{figure/images/legend_all_algorithms.pdf}
        \vfill
    \begin{subfigure}[b]{0.38\textwidth}
        \centering
        \includegraphics[width=\linewidth, trim={7bp 10bp 8bp 3bp}, clip]{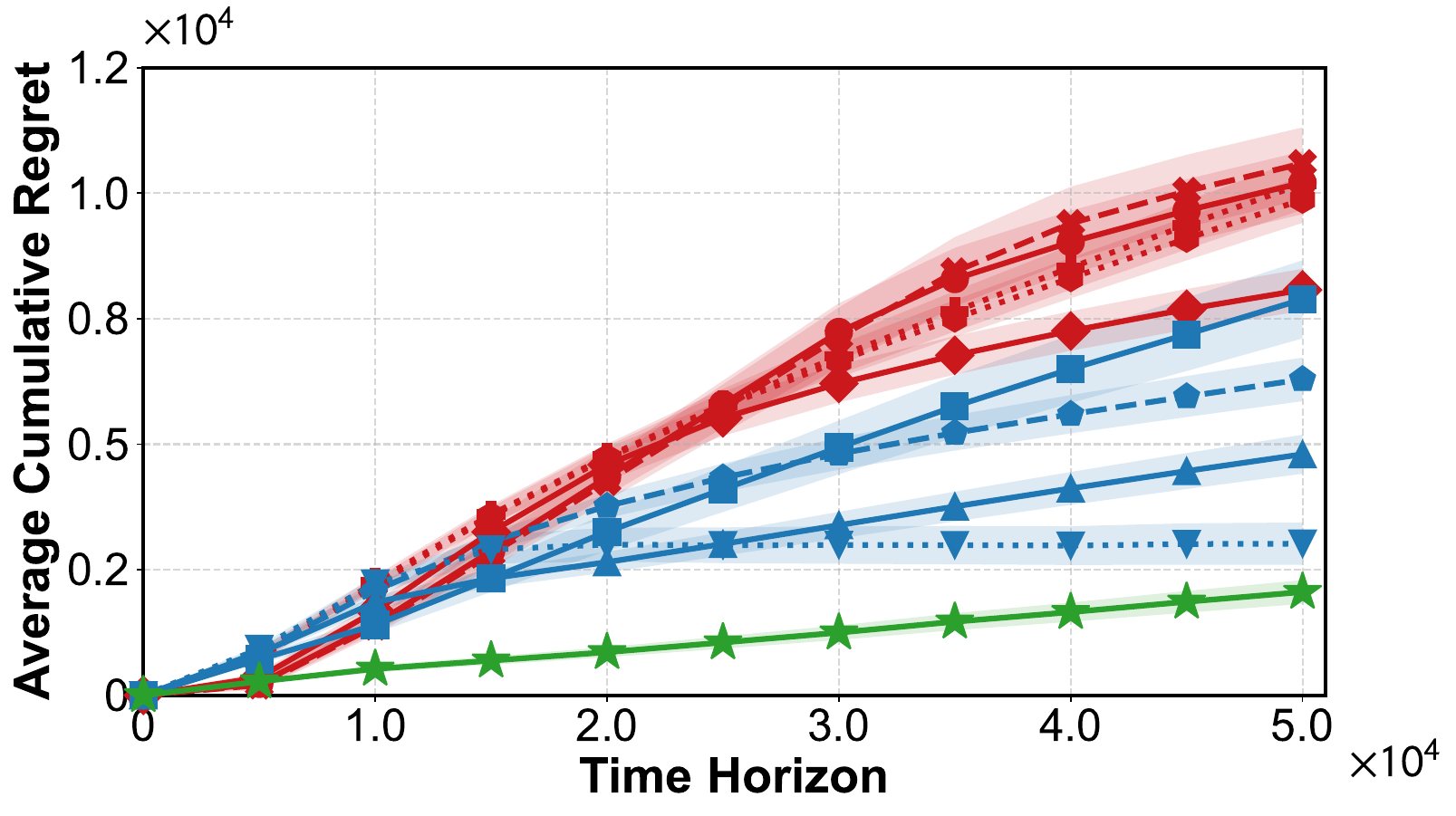}
        \caption{\small Cumulative Regret}
        \label{fig:concave_regret}
    \end{subfigure}
    \hfill
    \begin{subfigure}[b]{0.30\textwidth}
        \centering
        \includegraphics[width=\linewidth, trim={5bp 40bp 5bp 5bp}, clip]{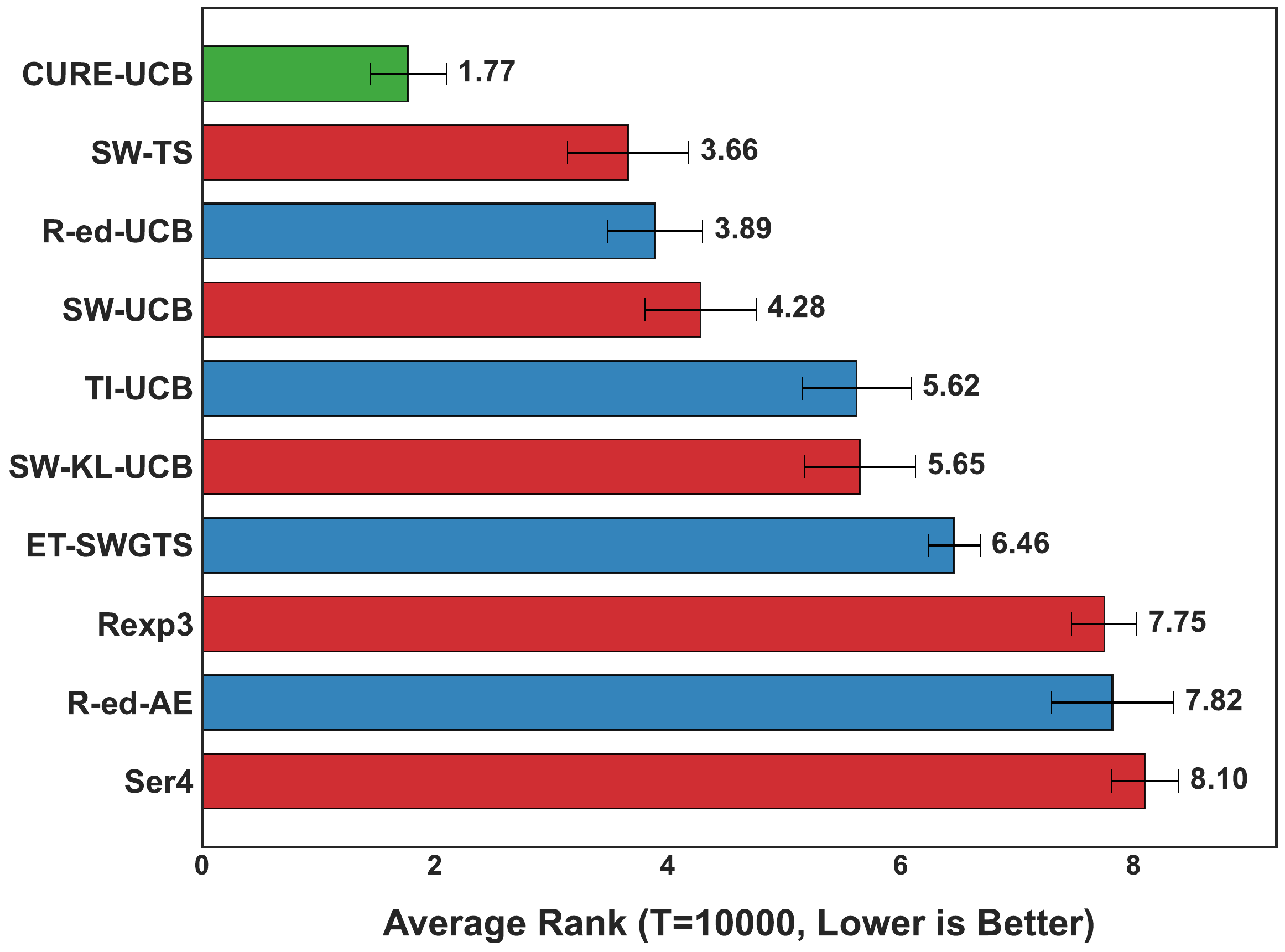}
        \caption{\small Average Rank ($T$=10,000)}
        \label{fig:concave_rank_10k}
    \end{subfigure}
    \hfill
    \begin{subfigure}[b]{0.30\textwidth}
        \centering
        \includegraphics[width=\linewidth, trim={5bp 40bp 5bp 5bp}, clip]{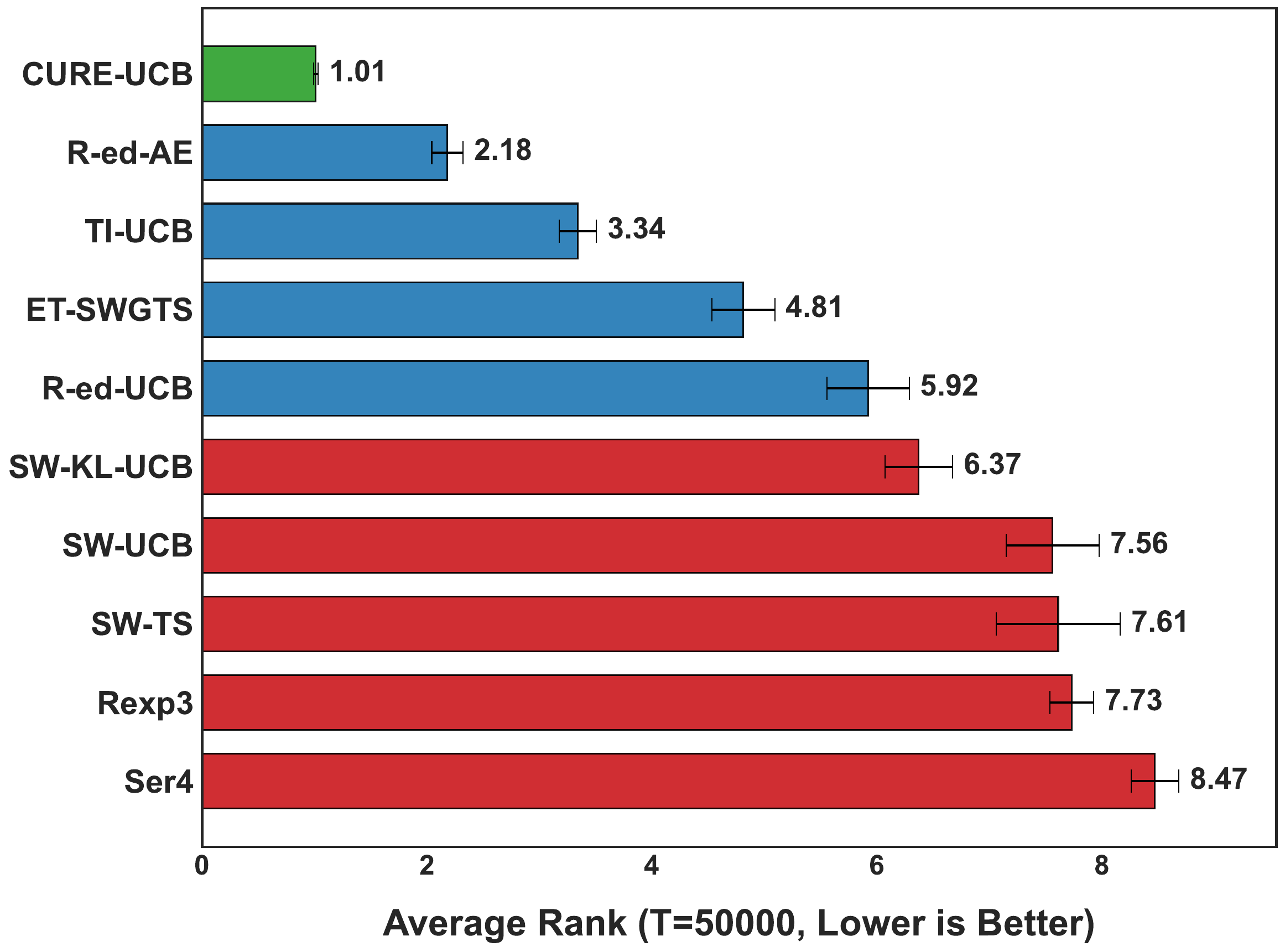}
        \caption{\small Average Rank ($T$=50,000)}
        \label{fig:concave_rank_50k}
    \end{subfigure}

    \caption{\textbf{Performance Analysis in Concave Setting.} (a) Cumulative regret as a function of the time horizon $T$. CURE-UCB consistently achieves the lowest regret across all horizons. (b, c) Average Rank at $T$=10,000 (short horizon) and $T$=50,000 (long horizon), respectively. Lower rank indicates better performance. CURE-UCB maintains the lowest rank in both horizons. Shaded regions and error bars denote 95\% confidence intervals.}
    \label{fig:concave_all_results}
    \vspace{-0.3cm}
\end{figure*}
\subsection{Concave Setting}
\label{sec:exp_rf}

Following the analysis in the strictly structured LTF setting, we extend our evaluation to the concave setting to demonstrate generalization under relaxed assumptions.
Unlike the LTF model, we constructed this environment using arbitrary concave functions to simulate diverse non-linear dynamics (detailed specifications in Appendix~\ref{sec:explantion_experiment}).
Within this broader framework, we orchestrate a structural conflict between the early peaker and the late bloomer, where the optimal arm is strictly determined by the horizon length $T$.

As shown in Figure~\ref{fig:concave_regret}, CURE-UCB consistently records the lowest average cumulative regret across all horizons. Notably, R-ed-AE also demonstrates robust performance in this setting. This is driven by its horizon-awareness, which allows it to navigate the conflict between immediate rewards and future growth effectively. Consequently, its regret growth becomes nearly negligible towards the end of the horizon. We attribute this to its elimination-based nature, which naturally ceases exploration on discarded arms, and its resilience to the gradual curvature of concave functions.

For other baselines, we observe a distinct performance shift specifically in the long horizon. As shown in Figure~\ref{fig:concave_rank_50k} ($T$=50,000), rising bandit algorithms (blue) clearly decouple from and outperform non-stationary algorithms (red).
This separation occurs because non-stationary methods do not account for the rising property of the reward functions. Consequently, they fail to identify the late bloomer arm and instead adhere to the early peaker. In contrast, rising bandit algorithms are structurally capable of capturing this growth potential.

\subsection{Online Model Selection (IMDB)} \label{sec:IMDB}
To demonstrate the practical applicability of our framework, we investigate the problem of online model selection for text generation tasks using the IMDB dataset \citep{maas2011learning}. 
In this task, each arm represents a distinct learning model (e.g., neural networks with different hyperparameters) being trained on the sentiment analysis task. 
The objective is to identify the model that yields the highest cumulative validation accuracy within a finite training budget $T$, which serves as the fixed horizon.

As illustrated in Figure~\ref{fig:imdb_reward_only}, real-world training curves exhibit persistent fluctuations and noise, presenting a significantly more challenging environment compared to synthetic settings. 
Despite this high variance, CURE-UCB achieves competitive performance comparable to the strongest baseline, R-ed-UCB, as shown in Figure~\ref{fig:imdb_regret_only}. 
Notably, at $T=50,000$, the average regret of CURE-UCB aligns with the lower confidence bound of R-ed-UCB, indicating stable performance even in noise-dominated regimes.

\begin{figure}[t]
    \centering
    \begin{minipage}[c]{0.72\linewidth}
        \centering
        \includegraphics[width=\linewidth]{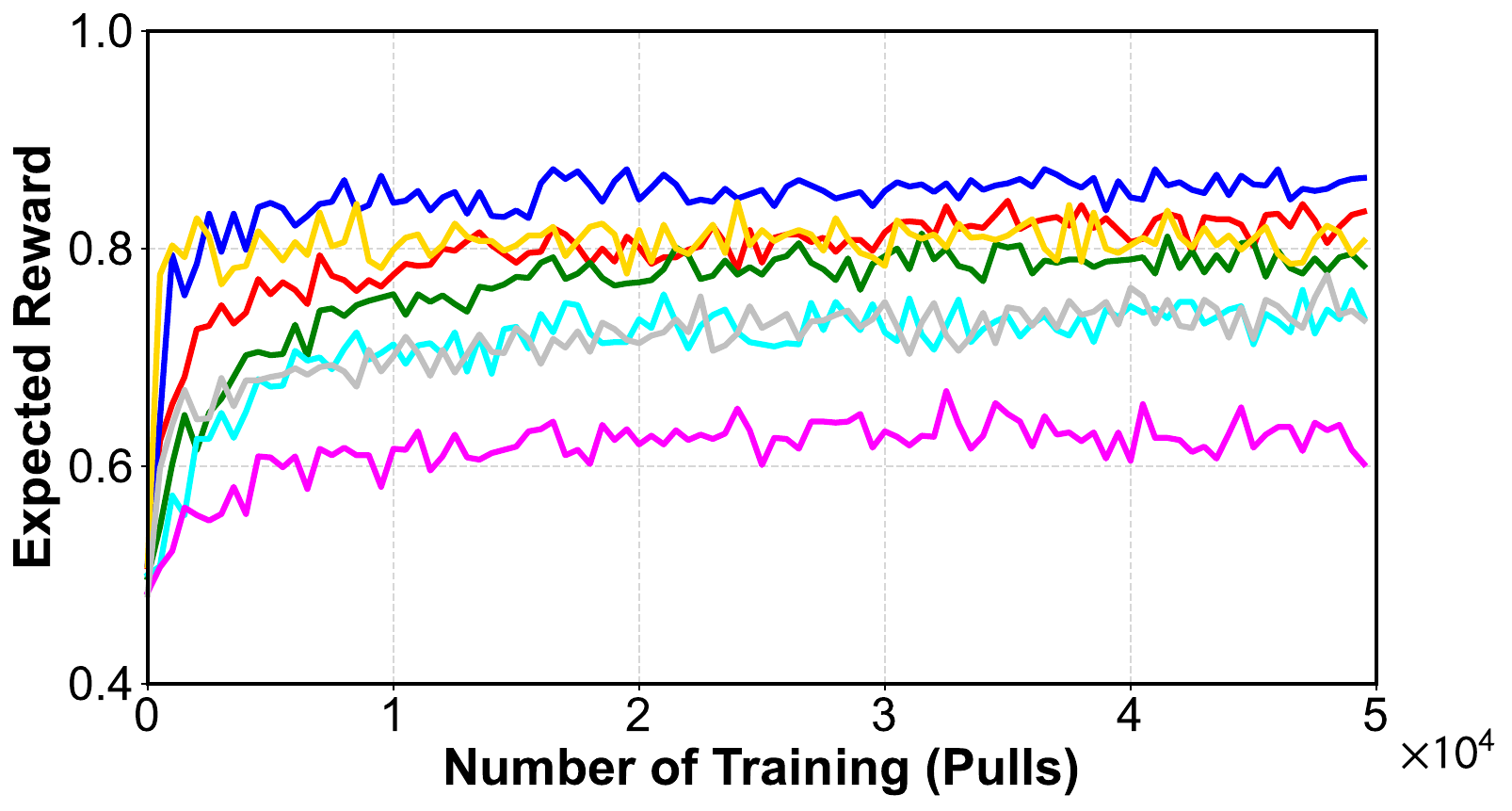}
    \end{minipage}%
    \hfill 
    \begin{minipage}[c]{0.25\linewidth}
        \centering
        \includegraphics[width=\linewidth]{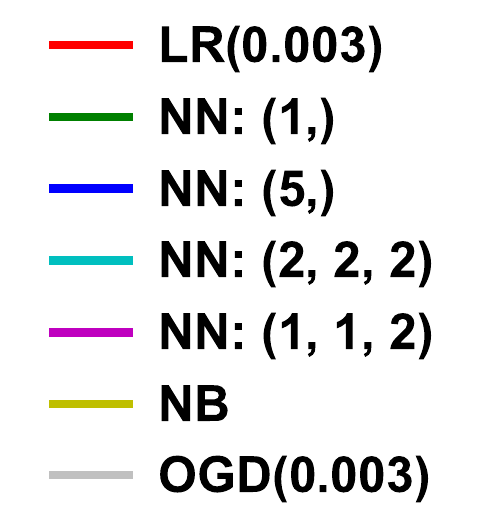}
    \end{minipage}

    \caption{\textbf{Reward Dynamics in Online Model Selection (IMDB).} Each line represents the validation accuracy of a distinct learning model configuration over time ($T$=50,000). Unlike synthetic settings, these real-world learning curves exhibit significant volatility and do not strictly adhere to smooth concave growth, presenting a challenging environment for horizon-dependent optimization.}
    \label{fig:imdb_reward_only}
    \vspace{-0.3cm}
\end{figure}
\begin{figure}[t]
    \centering
    \begin{minipage}[c]{0.75\linewidth}
        \centering
        \includegraphics[width=\linewidth]{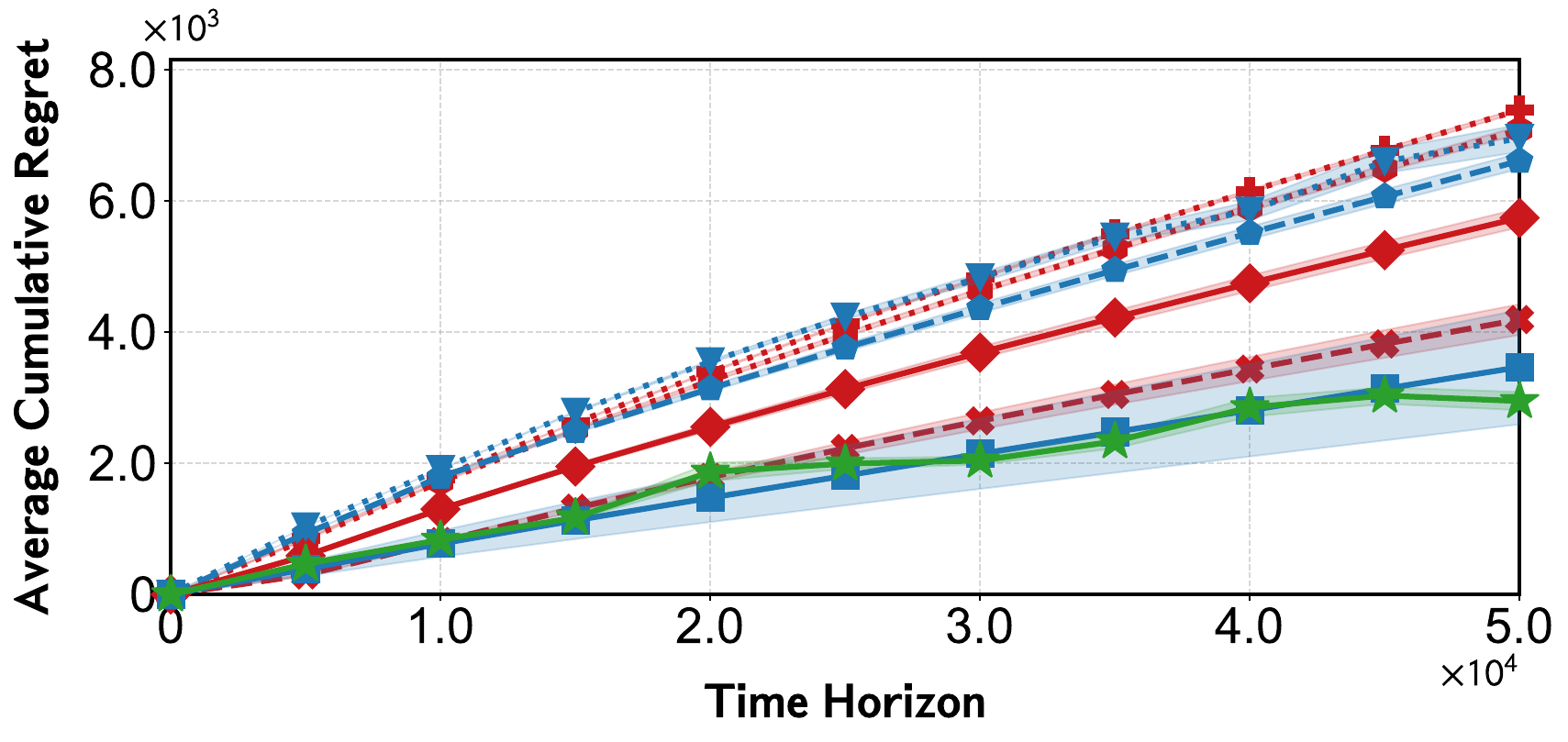}
    \end{minipage}%
    \hfill 
    \begin{minipage}[c]{0.24\linewidth}
    \vspace{-0.5cm}
        \centering
        \includegraphics[width=\linewidth]{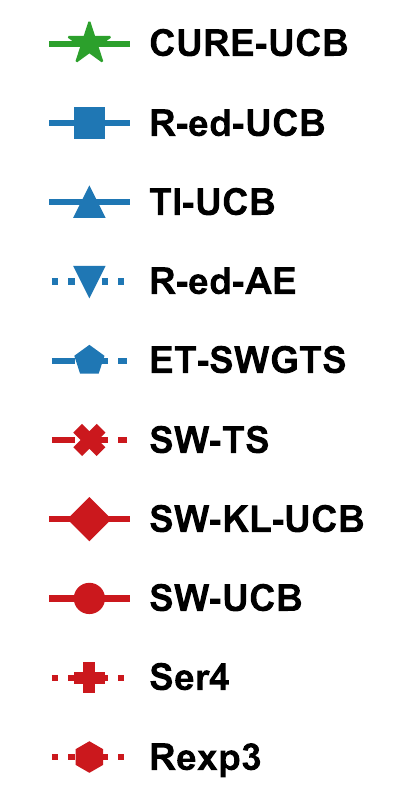}
    \end{minipage}
    \vspace{-0.2cm}

    \caption{\textbf{Cumulative Regret on IMDB Dataset.} Performance comparison of CURE-UCB against baselines over a horizon of $T$=50,000, based on the reward functions depicted in Figure~\ref{fig:imdb_reward_only}. Shaded regions denote 95\% confidence intervals.}
    \label{fig:imdb_regret_only}
    \vspace{-0.3cm}
\end{figure}

\section{Conclusion} \label{sec:conclusion}
In this paper, we addressed the structural challenge of horizon-dependent optimality in Rising Multi-Armed Bandits, where the optimal strategy is strictly dictated by the finite time horizon. 
We highlighted that horizon-agnostic algorithms suffer from an unavoidable inefficiency, leading to suboptimal performance. 
To overcome this limitation, we proposed CURE-UCB, a horizon-aware algorithm that explicitly estimates the aggregate future potential of each arm.
Our theoretical analysis established a regret upper bound for general concave rising bandits and proved the strict dominance of our method in Linear-Then-Flat (LTF) settings, demonstrating its capability to prevent wasteful exploration of saturated arms. 
Furthermore, extensive empirical evaluations in LTF and Concave settings, along with practical online model selection tasks, confirmed that CURE-UCB consistently achieves competitive or superior performance compared to state-of-the-art baselines.
These results underscore the necessity of horizon awareness in finite-horizon rising bandits. Future work may investigate the extension of this framework to complex real-world applications such as LLM fine-tuning or robotics.

\section*{Impact Statements}
This paper presents work whose goal is to advance the field of machine learning. There are many potential societal consequences of our work, none of which we feel must be specifically highlighted here.

\bibliographystyle{icml2026}
\bibliography{ref}

\newpage
\appendix
\onecolumn
This material provides proof of theorems, details of environments and baselines, and additional experimental results:
\begin{itemize}[left = 0.3cm]
\item \textbf{Appendix~\ref{sec:explantion_experiment}:} Detailed explanation for experiment
\item \textbf{Appendix~\ref{sec:proof_dominance_ltf}:} Proof of Theorem~\ref{thm:dominance_ltf}
\item \textbf{Appendix~\ref{sec:proof_upper_bound}:} Proof of Theorem~\ref{thm:upper_bound}
\item \textbf{Appendix~\ref{sec:proof_lemma}:} Proof of Lemmas
\item \textbf{Appendix~\ref{sec:baselines}:} Detailed description and hyperparameter setting of the baselines in Section~\ref{sec:experiments}.
\end{itemize}

\section{Detailed explanation for experiment}
\label{sec:explantion_experiment}

\subsection{Synthetic Environment Generation}
In this section, we explain how we randomly generate the problem instances.
\subsubsection{Linear-Then-Flat (LTF) Setting}
We generated 100 independent problem instances for the LTF setting. Each instance consists of $K \in \{2, 3, 4, 5\}$ arms, where each arm's reward function follows the form $\mu_i(n) = \min(b_i, a_i \cdot n)$. The saturation level $b_i$ is sampled uniformly from $[0.1, 1.0]$, and the saturation time $t_{\text{sat}}$ is randomly selected from $[0.05T, T]$ for $T=50,000$. The slope is then determined as $a = b / t_{\text{sat}}$.
\begin{figure*}[h]
    \centering
    \begin{subfigure}[b]{0.30\textwidth}
        \centering
        \includegraphics[width=\linewidth, trim={7bp 10bp 8bp 3bp}, clip]{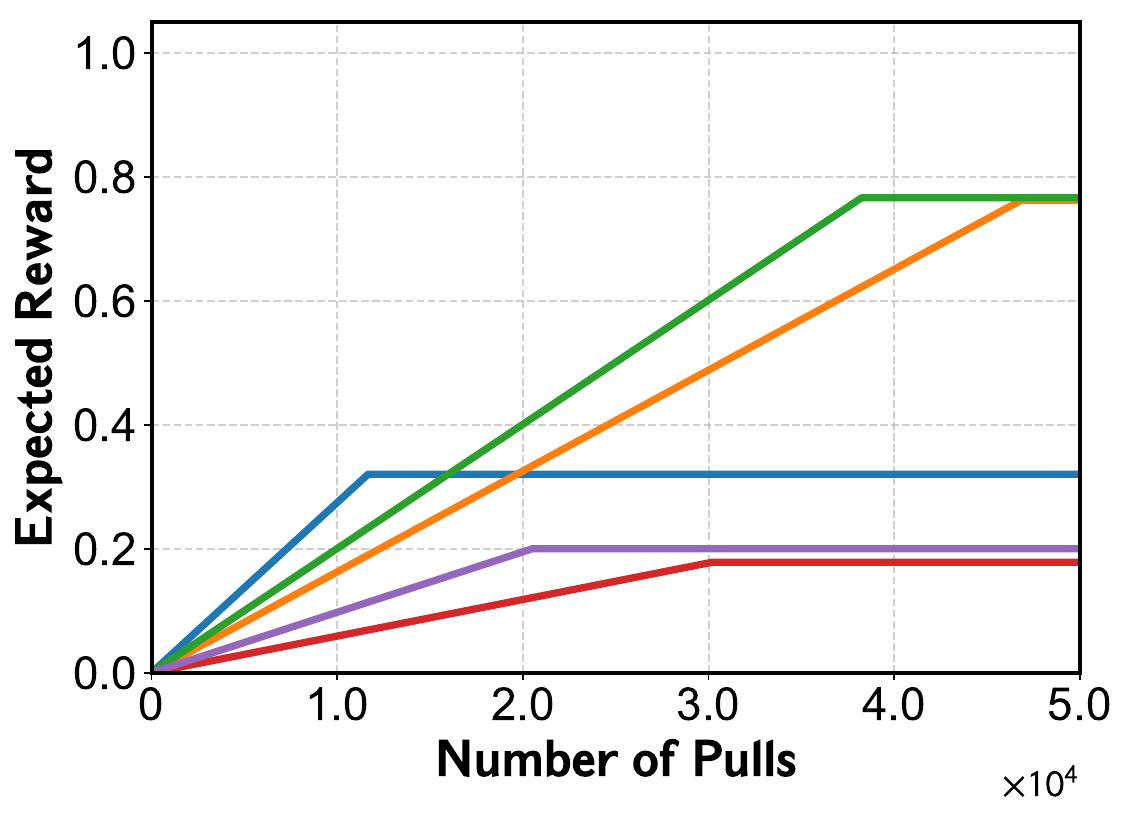}
        
        \caption{\small Example 1} 
        
        \label{fig:ltf_ex1}
    \end{subfigure}
    \hfill
    \begin{subfigure}[b]{0.30\textwidth}
        \centering
        \includegraphics[width=\linewidth, trim={5bp 5bp 5bp 5bp}, clip]{figure/images/ltf_ex_2.pdf}
        \caption{\small Example 2}
        \label{fig:ltf_ex2}
    \end{subfigure}
    \hfill
    \begin{subfigure}[b]{0.30\textwidth}
        \centering
        \includegraphics[width=\linewidth, trim={5bp 5bp 5bp 5bp}, clip]{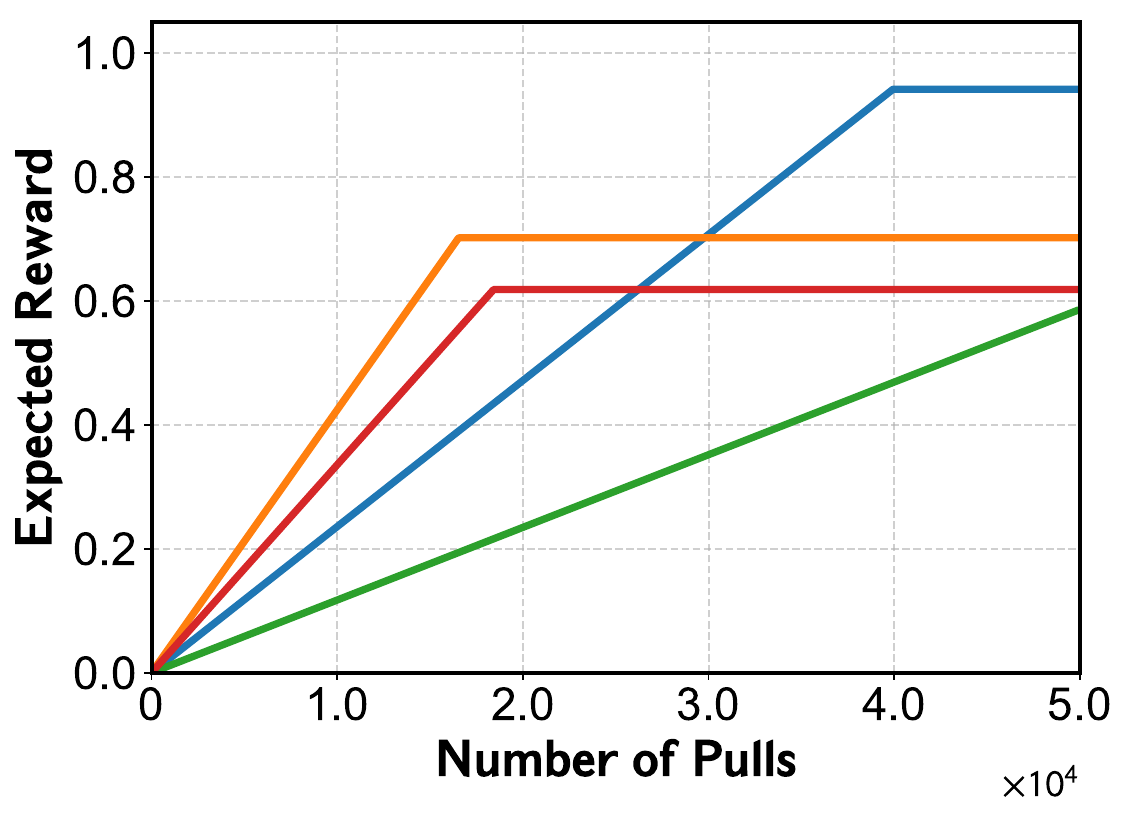}
        \caption{\small Example 3}
        \label{fig:ltf_ex3}
    \end{subfigure}
    
     \caption{\textbf{Example instances of LTF Setting.} }
    \label{fig:ltf_ex}
\end{figure*}

\subsubsection{Concave Setting}
We generate 100 independent problem instances for the Concave setting.
Each instance consists of $K \in \{2, 3, 4, 5\}$ arms.
In each instance, we explicitly include two structured arms: early peaker and late bloomer. 
The late bloomer (optimal arm) is constrained to have low early rewards but high late rewards: $\mu(0) \approx 0.1$ and $\mu(T) \approx 0.95$. 
The early peaker (deceptive arm) is constrained to have better early while being worse asymptotically: $\mu(0) \approx 0.55$ and $\mu(T) \approx 0.58$.
For remaining arms, the reward of each arm follows a bounded, saturating curve of the form
\[
\mu(t)=s+(L-s)\,g(t) \;,
\]
where $s$ is sampled uniformly from $[0.05,0.35]$ and $L$ is sampled uniformly from $[0.90,0.99]$, and
For $g(t)$, we use one of the following normalized saturating growth functions $g(t)\in[0,1]$:
The shape parameters are chosen such that all arms reach at least $75\%$ of their final value by $t=10000$.
\[
    g_{\text{rat}}(t;k)=\frac{\dfrac{t}{t+k}}{\dfrac{T}{T+k}},\qquad
    g_{\exp}(t;k)=\frac{1-e^{-kt}}{1-e^{-kT}},\qquad
    g_{\text{atan}}(t;k)=\frac{\arctan(kt)}{\arctan(kT)}.
\]
\begin{figure*}[h]
    \centering
    \begin{subfigure}[b]{0.30\textwidth}
        \centering
        \includegraphics[width=\linewidth, trim={7bp 10bp 8bp 3bp}, clip]{figure/images/concave_ex_1.pdf}
        
        \caption{\small Example 1} 
        
        \label{fig:concave_ex1}
    \end{subfigure}
    \hfill
    \begin{subfigure}[b]{0.30\textwidth}
        \centering
        \includegraphics[width=\linewidth, trim={5bp 5bp 5bp 5bp}, clip]{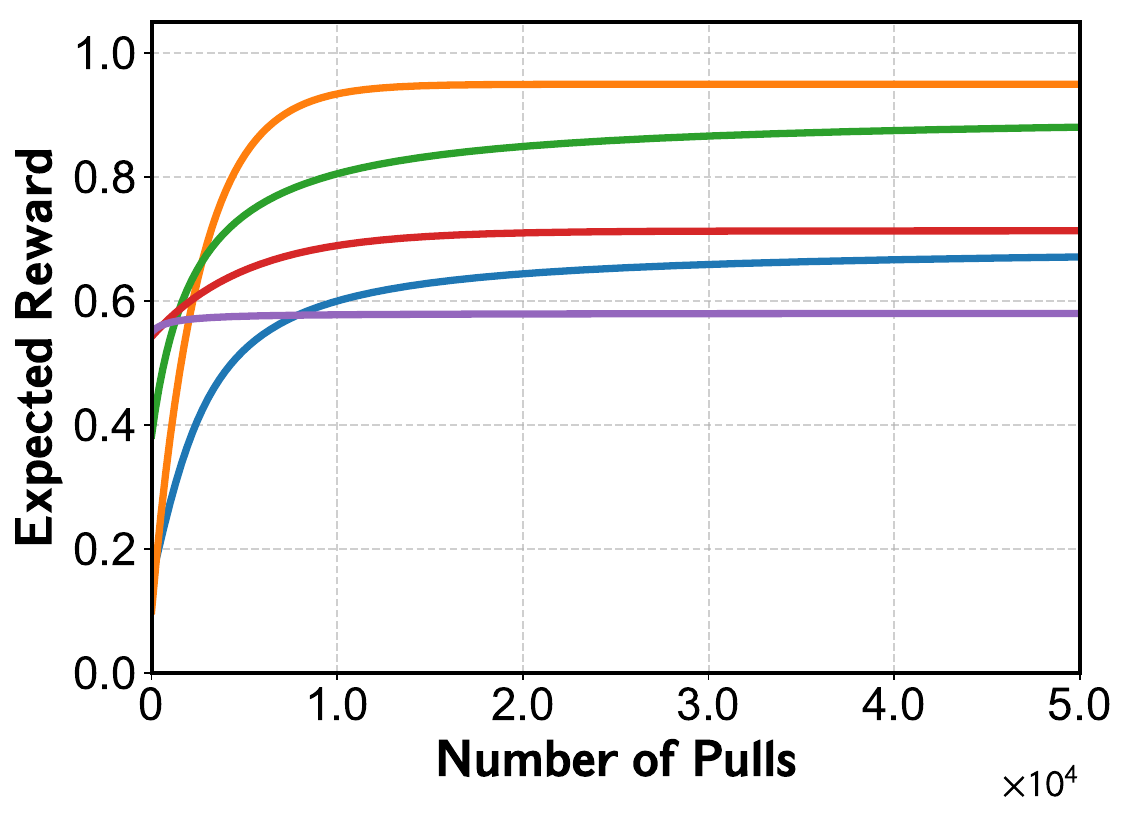}
        \caption{\small Example 2}
        \label{fig:concave_ex2}
    \end{subfigure}
    \hfill
    \begin{subfigure}[b]{0.30\textwidth}
        \centering
        \includegraphics[width=\linewidth, trim={5bp 5bp 5bp 5bp}, clip]{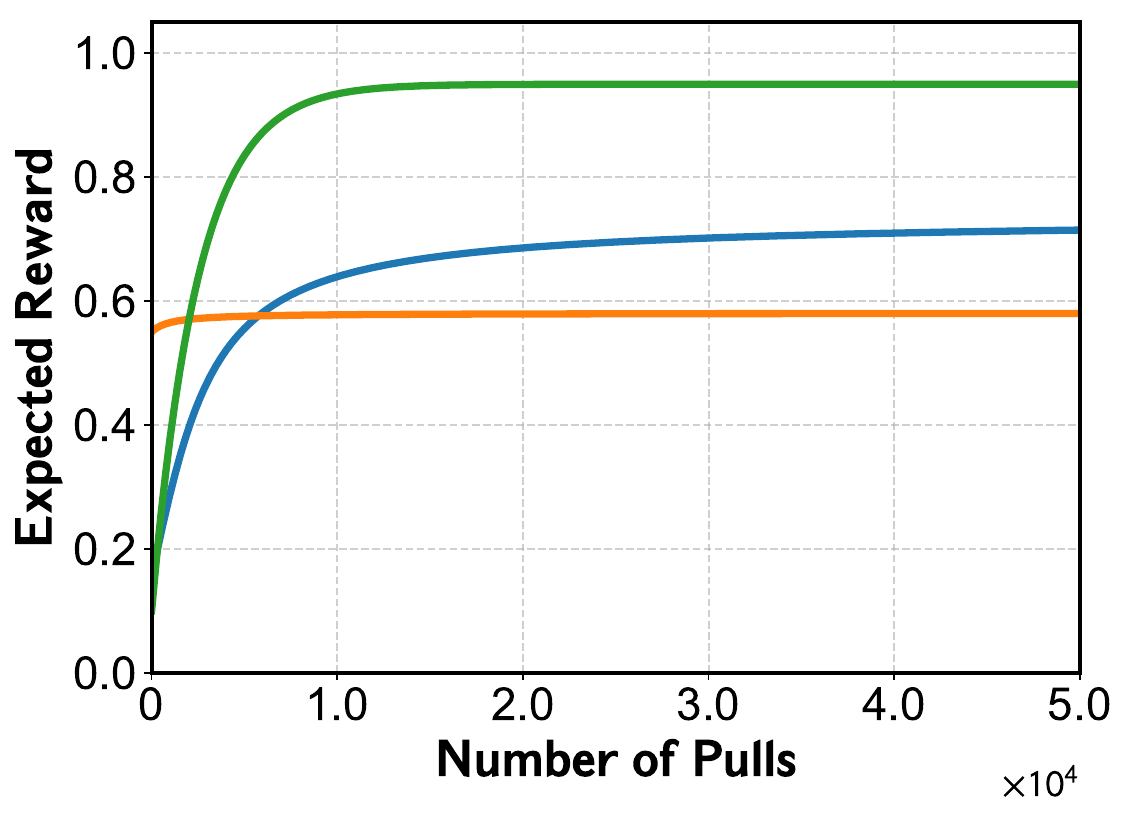}
        \caption{\small Example 3}
        \label{fig:concave_ex3}
    \end{subfigure}
    
     \caption{\textbf{Example instances of Concave Setting.} }
    \label{fig:concave_ex}
\end{figure*}

\section{Proof of Theorem~\ref{thm:dominance_ltf}}
\label{sec:proof_dominance_ltf}
In this section, we provide the proof for Theorem~\ref{thm:dominance_ltf}.
Since we consider deterministic reward setting (e.g., $\sigma$=0), CURE-UCB and R-ed-UCB \citep{metelli2022stochastic} can be simplified
because the policy always receive exactly expected reward.
We do not need to consider large sliding-window and can just set $h_i=1$.
For the reader's convenience, we provide pseudocode for the algorithm employed in this proof.
\setlength{\intextsep}{10pt}
\begin{algorithm}[h]
\caption{R-ed-UCB (Deterministic Version)}
\label{alg:red_deterministic}
\begin{algorithmic}[1]
    \STATE \textbf{Input:} Number of arms $K$, 
    \STATE \textbf{Initialize:} For each arm $i \in [K]$, set $N_i \leftarrow 0$.
    \FOR{$t = 1, \dots, T$}
        \IF{$t \leq K$}
            \STATE Play arm $I_t = (t \mod K) + 1$. 
        \ELSE
            \STATE Compute deterministic index $B_i(t)$ for each arm $i \in [K]$ based on \eqref{eq:det_red}.
            \STATE Select arm $I_t = \argmax_{i \in [K]} B_i(t)$.
        \ENDIF
        \STATE Update counter $N_{I_t} \leftarrow N_{I_t} + 1$.
        \STATE Update state using the deterministic mean reward $\mu_{I_t, t}$.
    \ENDFOR
\end{algorithmic}
\end{algorithm}

\setlength{\intextsep}{10pt}
\begin{algorithm}[h]
\caption{CURE-UCB (Deterministic Version)}
\label{alg:cure_deterministic}
\begin{algorithmic}[1]
    \STATE \textbf{Input:} Horizon $T$, Number of arms $K$
    \STATE \textbf{Initialize:} For each arm $i \in [K]$, set $N_i \leftarrow 0$.
    \FOR{$t = 1, \dots, T$}
        \IF{$t \leq K$}
            \STATE Play arm $I_t = (t \mod K) + 1$. 
        \ELSE
            \STATE Compute deterministic index $B_i(t)$ for each arm $i \in [K]$ based on \eqref{eq:det_cure}.
            \STATE Select arm $I_t = \argmax_{i \in [K]} B_i(t)$.
        \ENDIF
        \STATE Update counter $N_{I_t} \leftarrow N_{I_t} + 1$.
        \STATE Update state using the deterministic mean reward $\mu_{I_t, t}$.
    \ENDFOR
\end{algorithmic}
\end{algorithm}

Firstly, we recall index of R-ed-UCB in deterministic setting, which is given in Section 4.2 of \citep{metelli2022stochastic}
\begin{align}
    B_{i}^{red}(t) := \mu_{i}(N_{i,t-1}) + (t-N_{i,t-1})\gamma_i(N_{i,t-1}-1) \;. \label{eq:det_red}
\end{align}
The index of CURE-UCB in deterministic setting is given as follows:
\begin{align}
    B_{i}^{cure}(t) := \mu_{i}(N_{i,t-1}) + \frac{(T-t)}{2}\gamma_i(N_{i,t-1}-1) \label{eq:det_cure} \;.
\end{align}
In LTF setting, the expected reward of every arm $i\in[K]$ is given as follows:
\begin{align}
    \mu_i(n) = \min(b_i,a_in)\;.
\end{align}
For every arm, we define $c_i := \frac{b_i}{a_i}$, which means the time steps required to make arm $i$ saturated.
For convenience, we fix some $T$ and assume that the arms are ordered in descending order with respect to $a_i$.
That is, $a_1>a_2> \cdots > a_K$.
Let $I_{cure}(T)$ and $I_{red}(T)$ denote the set of arms that are played at least once until time $T$ by CURE-UCB and by R-ed-UCB respectively.
Then, we assert the following claim:
\begin{claim} \label{clm:1}
    Both $I_{cure}(T)$ and $I_{red}(T)$ grows monotonically with $T$, and its element appear in ascending order $\{1,2,\cdots,K\}$.
\end{claim}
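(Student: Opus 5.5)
Since each index depends only on $N_{i,t-1}$ and the fixed horizon $T$, and the dynamics are deterministic, I would first trace the trajectory of each algorithm in the LTF instance and show that it has a ``sequential commitment'' structure. After the initialization round, where every arm is played once, each arm has $N_{i,t-1}=1$. The relevant increment is $\gamma_i(0)$, which I would interpret via $\mu_i(0)=0$ (or as the slope observed from the initial plays). It equals $a_i$ if $c_i>1$, and otherwise is at most $a_i$ and reflects saturation. Both indices have the form $\mu_i(N)+w_t\,\gamma_i(N-1)$, with $w_t=t-N_{i,t-1}$ for R-ed-UCB and $w_t=(T-t)/2$ for CURE-UCB. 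In an LTF instance, an unsaturated arm has $\gamma_i(N-1)=a_i$, and a saturated arm (after at most one more play past $c_i$) has $\gamma_i=0$ and index exactly $b_i$.

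The key step is an inductive invariant. At any time, the set of played-beyond-initialization arms is a prefix $\{1,\dots,j\}$ of the slope ordering. All arms in $\{1,\dots,j-1\}$ are saturated, with index equal to their $b$. The current arm $j$ is being played, and arms $j+1,\dots,K$ sit at $N=1$ with index $a_i+w_t a_i=a_i(1+w_t)$. While arm $j$ is unsaturated, its index is $a_jN+w a_j$ with $N\ge 1$. Since $a_j>a_i$ for $i>j$, this strictly dominates every unsaturated fresh arm. It also dominates (for R-ed-UCB) or is compared consistently against (for CURE-UCB) the flat arms, so arm $j$ keeps being played until it saturates or a flat arm overtakes it. Once arm $j$ saturates, its index drops to $b_j$. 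The algorithm then either switches to the fresh arm with largest $a_i(1+w_t)$, which is arm $j+1$ because the fresh indices are ordered by $a_i$, or it plays the best saturated arm forever after, since fresh indices only decrease for CURE-UCB as $T-t$ shrinks. For R-ed-UCB the fresh indices grow with $t$, but the selected fresh arm is still the one of highest slope, i.e.\ $j+1$. In either case, any newly introduced arm is the next index in order, giving the ascending-order claim.

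For monotonicity in $T$, I would separate the two algorithms. R-ed-UCB is horizon-agnostic, so its trajectory up to time $T$ is a prefix of its trajectory up to $T'>T$, and $I_{red}(T)\subseteq I_{red}(T')$ is immediate. For CURE-UCB the trajectory does depend on $T$. Here I would show by induction along the sequential structure that a larger $T$ only increases every fresh index $a_i(1+(T-t)/2)$ relative to the saturated values $b$. Consequently, every switch that occurs under horizon $T$ also occurs, no later in pull count, under horizon $T'$. Since the switch times themselves occur at the same or earlier $t$, the set of arms reached by time $T$ is contained in the set reached by time $T'$.

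The main obstacle is this CURE-UCB monotonicity: the timing comparison across horizons has to be carried out consistently. I would try to handle it by showing that the sequence of arms played under $T'$ agrees with that under $T$ up to the first switch point where the two diverge, and that at that point $T'$ switches while $T$ stays. I would also need to take care with ties, by fixing an argmax tie-breaking rule, and with the boundary convention for $\gamma_i(0)$.
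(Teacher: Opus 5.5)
Your proposal is correct under the conventions the paper itself uses implicitly, and it is considerably more complete than the paper's argument. The paper's proof is a single sentence: because every arm starts at zero and rises linearly, both algorithms are ``compelled'' to open arms in decreasing order of $a_i$.

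For the ordering part you use that same observation, made explicit. Every unopened arm sits at $N_{i,t-1}=1$ with index $a_i(1+w_t)$, a common factor times $a_i$, so whichever unopened arm is selected is the one of largest slope.

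Where you genuinely go beyond the paper is monotonicity in $T$, which the paper never argues. You rightly separate the two algorithms. For R-ed-UCB it is immediate, because the run does not depend on $T$. For CURE-UCB the trajectory itself changes with $T$, so $I_{cure}(T)\subseteq I_{cure}(T')$ needs your argument, which works:
\begin{itemize}
\item the only $T$-dependent decisions occur when the current arm has just saturated;
\item at those points, the comparison of $a_{j+1}\bigl(1+\frac{T-t}{2}\bigr)$ with $\max_{m\le j}b_m$ is monotone in $T$;
\item growth phases last the same number of rounds under both horizons.
\end{itemize}
So the two runs coincide until the shorter horizon first declines a switch, after which it never opens another arm.

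To make this airtight, close the step you leave vague, where a growing arm is ``compared consistently against'' the flat arms. Under CURE-UCB a growing arm's index rises by $a_j-\frac{a_j}{2}=\frac{a_j}{2}$ per pull, while every other index is non-increasing. Hence nothing can interrupt a growth phase; this is the observation the paper uses for Claim~\ref{clm:2}. Your sequential-commitment invariant is also what Claims~\ref{clm:2}--\ref{clm:4} later rely on, so the extra work pays off beyond this claim.

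Three conventions should be stated rather than assumed, though the paper skips them as well.

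\textbf{Requirement $c_i\ge 1$.} The unopened index is really $\min(a_i,b_i)(1+w_t)$, so ordering by $a_i$ requires $c_i\ge 1$. An arm with $b_i<a_i$ is ranked by $b_i$ instead, and then the ascending-order statement can fail outright.

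\textbf{Integer $c_j$.} Your steps ``its index drops to $b_j$'' and ``the runs diverge only where $T'$ opens an arm and $T$ stays'' presuppose that each arm is pulled straight through to $\gamma_j=0$. That means integer $c_j$, matching the paper's ``exactly $c_i$ times''. For non-integer $c_j$, the index after the $\lceil c_j\rceil$-th pull is $b_j+\frac{T-t}{2}\delta_j$ with $\delta_j=b_j-a_j\lfloor c_j\rfloor\in(0,a_j)$. The arm can be abandoned there and re-pulled once later, depending on $T$, so the single-divergence picture needs extra care.

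\textbf{Meaning of $I(T)$.} Your reading of $I(T)$ as arms pulled beyond initialization is necessary. Taken literally, the deterministic pseudocode plays every arm in the first $K$ rounds, so $I_{cure}(T)=I_{red}(T)=[K]$ for all $T\ge K$.
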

This claim trivially holds because we assume that all arms increase linearly starting from zero. 
Consequently, both algorithms are compelled to select the arms in the order of decreasing order with respect to $a_i$.

Then, we now characterize the behavior of the two algorithms.
Firstly, we consider CURE-UCB:
\begin{claim} \label{clm:2}
    Let $i^{*}_{cure}$ be defined as follows:
    \begin{align}
        i^*_{cure} :=\argmax_{i\in I_{cure}(T)} b_i \;.
    \end{align}
    Then, CURE-UCB selects each arm $i\in I_{cure}(T)\setminus i^*_{cure}$ exactly $c_i$ times, and select arm $i^*_{cure}$ for the remaining $T-\sum_{i\in I_{cure}\setminus i^*_{cure}}c_i$ times.
\end{claim}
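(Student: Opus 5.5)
We plan to prove Claim~\ref{clm:2} by tracking the deterministic index \eqref{eq:det_cure} in the LTF instance, where it takes only two kinds of values. If arm $i$ has been played $n=N_{i,t-1}\ge 1$ times and is still growing ($n< c_i$), then $\gamma_i(n-1)=a_i$, so
\[
B_i^{cure}(t)=a_i n+\tfrac{T-t}{2}a_i .
\]
If arm $i$ is saturated ($n\ge c_i+1$), then $\gamma_i(n-1)=0$ and $B_i^{cure}(t)=b_i$. At the boundary $n=c_i$ the increment $\gamma_i(c_i-1)$ is still $a_i$, so the arm keeps a positive projected gain. For simplicity we assume that $c_i$ is an integer, or we replace $c_i$ by $\lceil c_i\rceil$; any partial last step is absorbed into this choice.

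The core of the argument is an induction over ``phases''. Claim~\ref{clm:1} says arms are entered in the order $1,2,\dots$. We show that while an unsaturated arm $j$ is being played, it keeps the maximum index. Along the run, $B_j$ changes by $a_j-a_j/2=a_j/2>0$ per pull, whereas an untouched arm $k>j$ has index $a_k+\frac{T-t}{2}a_k$, which decreases over time. Since $a_k<a_j$ and $n_j\ge 1$, the current arm dominates. A previously saturated arm has the constant index $b_i$. So arm $j$ stays selected until it saturates, i.e.\ for exactly $c_j$ pulls, unless the comparison already fails at the moment of entry.

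At a switching point, the best saturated value is $b^\star=\max_{i\text{ played}} b_i$. The next candidate, arm $j+1$, has index $a_{j+1}+\frac{T-t}{2}a_{j+1}$, which decreases in $t$. There are two cases:
\begin{itemize}
\item If this index exceeds $b^\star$, the algorithm enters arm $j+1$, and by the argument above it plays it to saturation.
\item If not, the index of every later unplayed arm is also below $b^\star$, since it is smaller (monotone in $a$) and only decreases as $t$ grows. The algorithm then plays the best saturated arm, the $\argmax$ of $b_i$, for all remaining rounds, and no new arm is ever entered.
\end{itemize}
This already yields the structure in the claim: every arm in $I_{cure}(T)$ except the last committed one is played exactly $c_i$ times, and the committed arm takes the rest.

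The step we expect to be the main obstacle is identifying the committed arm with $i^*_{cure}=\argmax_{i\in I_{cure}(T)}b_i$. We must rule out two possibilities:
\begin{itemize}
\item The last entered arm $m$ is not saturated by time $T$. Then it simply absorbs all remaining rounds. We need $b_m$ to be maximal, or else the claim must be read with $c_m$ replaced by its remaining count. Here we use the entry condition: entering $m$ required $a_m(1+\frac{T-t}{2})>b^\star$. We try to show this forces $b_m\ge b^\star$ whenever $m$ can saturate, by comparing $a_m(T-t)$ with $c_m a_m=b_m$.
\item A saturated arm with smaller $b$ is revisited in the middle.
\end{itemize}
We also handle ties, and the initialization round $t\le K$ where every arm is played once. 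This breaks the clean ordering, so we would treat the first pulls as part of each arm's count $c_i$, or adjust the claim accordingly.
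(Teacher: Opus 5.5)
Your skeleton matches the paper's. A growing arm locks in, since its index rises by $a_j/2$ per pull while every other index is constant or falling. Once the best saturated arm wins a switching point, the algorithm commits to it for good. The committed arm is then identified through the entry condition. Your second ``obstacle'' is already handled by your own switching-point argument: saturated arms carry the constant indices $b_i$, so any revisit goes to the maximizer of $b$, and after that no unplayed arm can overtake it. The gap is that you stop at exactly the step the claim rests on, namely showing that an arm $m$ still unsaturated at time $T$ has $b_m\ge b^\star$. You even leave open that the claim may need reinterpreting. The paper closes this directly. At the entry time $t'$ of $i'$, the index comparison gives $\frac{a_{i'}}{2}(T-t')^2>b'(T-t')$, i.e.\ $T-t'>2b'/a_{i'}$. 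Non-saturation gives $T-t'<c_{i'}=b_{i'}/a_{i'}$. Together these yield $b_{i'}>2b'$.

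Two corrections are needed to carry this out with your exact index.
\begin{enumerate}
\item The hypothesis that does the work is that $m$ does \emph{not} saturate before $T$, because that is what bounds $T-t$ from above. It is not ``whenever $m$ can saturate.''
\item Combining $a_m(1+\frac{T-t}{2})>b^\star$ with $a_m(T-t)<b_m$, as you propose, yields only $b_m>2b^\star-2a_m$. This does not give $b_m\ge b^\star$ when $a_m>b^\star/2$. Count pulls exactly instead. Arm $m$ has its initialization pull plus the $T-t+1$ pulls in rounds $t,\dots,T$, so non-saturation reads $a_m(T-t+2)<b_m$. The entry condition is precisely $a_m(T-t+2)>2b^\star$. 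Hence $b_m>2b^\star$, the case $b_m<b^\star$ cannot occur, and no reinterpretation of the claim is needed.
\end{enumerate}

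Separately, your own boundary remark shows an off-by-one. For integer $c_j$, the lagged increment $\gamma_j(c_j-1)=a_j$ still raises the index by $a_j/2$. So a growing arm that completes its $c_j$-th pull before round $T$ is pulled once more, not ``exactly $c_j$'' times. The paper's proof glosses over the same point.
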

We begin by an observation that if CURE-UCB selects some arm \(i\) for the first time at time \(t_i\),  then it must continue to select arm \(i\) for the next \(c_i - 1\) rounds.
The fact that CURE-UCB selects arm \(i\) at time \(t_i\) 
implies that the index of arm \(i\), \(B^{cure}_i(t_i)\), is the largest among all arms.
Then, at time \(t_i+1\), we observe that \(B_i^{cure}(t_i+1) \geq B^{cure}_i(t_i)\), 
while for any other arm \(j \in [K]\), we have \(B^{cure}_j(t_i+1) \leq B^{cure}_j(t_i)\). 
Therefore, CURE-UCB also selects arm \(i\) at time \(t_i+1\). 
This continues until arm \(i\) becomes flat, namely for \(c_i\) rounds.

Also, we observe that if some arm \(i\), after being selected \(c_i\) times, is selected again at some time \(t'_i\), then arm \(i\) will continue to be selected for the remaining time.
We consider two cases for another arm \(j\): 
either by time \(t'_i\) it has already been selected \(c_j\) times, 
or it has not.
If arm \(j\) has already been selected \(c_j\) times, 
then the fact that arm \(i\) is selected at time \(t'_i\) implies that \(b_i > b_j\). 
It is then immediate that, for the remaining time, 
the index of arm \(i\) remains greater than that of arm \(j\).
If arm \(j\) has been selected fewer than \(c_j\) times, 
then by the previous observation, arm \(j\) has not been selected at all yet. 
At time \(t'_i+1\), comparing the indices yields
\begin{align}
B^{cure}_i(t'_i+1) = B^{cure}_i(t'_i) 
\end{align}
for arm \(i\), and
\begin{align}
B^{cure}_j(t'_i+1) = B^{cure}_j(t'_i) -\frac{a_j}{2}
\end{align}
for arm \(j\). 
For arm \(j\) to be selected instead of arm \(i\) at time $t'_i+1$, 
it would be necessary that $\frac{a_i}{2}<b_i$. 
Then, since $B^{cure}_j(t'_i+1) \leq B^{cure}_j(t'_i) \leq B^{cure}_i(t'_i)\leq B^{cure}_i(t'_i+1)$, which means that arm $i$ will be selected at time $t'_i+1$.
The same reasoning can be applied repeatedly for every round from \(t = t'_i+1\) up to \(T-1\).

Based on the two preceding observations, we prove the claim.
First, if every arm in \(I_{cure}(T)\) has been selected at least \(c_i\) times, 
then by preceding observations, our claim is immediate.
Now, suppose that there exists an arm in \(I_{cure}(T)\) that has been selected at most \(c_i\) times. 
By the first observation, such an arm must be unique.
Let us denote this unique arm by $i'$.
Let \(t'\) denote the time at which arm \(i'\) is firstly                                                  selected. 
Then, by the first observation, we have \((T - t') < c_{i'}\).
Define \(b'\) as the maximum value of $b_i$ among the arms in \( i\in I_{cure}(T)\setminus i'\).
Then, at time \(t'\), we have
\begin{align}
\frac{a_{i'}}{2}(T - t')^2 > b'(T - t').
\end{align}
Rearranging this inequality gives
\begin{align}
(T - t') > \frac{2b'}{a_{i'}}.
\end{align}
Combining this with the earlier result \((T - t') < c_{i'}\), we obtain \(b_{i'} > 2b'\).
Hence, \(i'\) corresponds to \(i^*_{cure}\), and the proof of the claim is complete.

Now, we make a claim for R-ed-UCB:
\begin{claim} \label{clm:3}
Let $i'_{red}$ be defined as follows:
\begin{align}
    i'_{red} := \argmin_{i \in I_{red}(T)} a_i. 
\end{align}
Then, R-ed-UCB selects each arm  \(i \in I_{red}(T) \setminus \{ i'_{red} \}\)
at least $c_i$ times.
\end{claim}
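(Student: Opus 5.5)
The plan is to read off R-ed-UCB's choices directly from the closed form of its deterministic index \eqref{eq:det_red} in the LTF model. I will first compute $B^{red}_i(t)$ in two regimes, then argue by contradiction using a single comparison of indices at one round where the slowest arm $i'_{red}$ is chosen.

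\textbf{Step 1: closed form of the index.} Since $\mu_i(0)=0$ and $\mu_i(n)=\min\{b_i,a_in\}$, we have $\gamma_i(n-1)=a_i$ whenever $n\le c_i$, so the arm has not yet reached its cap. In that regime
\begin{align}
B^{red}_i(t)=a_iN_{i,t-1}+(t-N_{i,t-1})a_i=a_i t .
\end{align}
So an unsaturated arm's index equals its slope times the current time, independently of how often it was played. Once $N_{i,t-1}-1\ge c_i$, we get $\gamma_i(N_{i,t-1}-1)=0$ and $B^{red}_i(t)=b_i=a_ic_i$. If $c_i$ is not an integer, one transitional round gives an intermediate value. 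In every case the index of a saturated arm is at most $\max\{b_i,a_it\}$, and in particular at most $a_i t$ whenever $t\ge c_i$.

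\textbf{Step 2: contradiction.} Fix $i\in I_{red}(T)\setminus\{i'_{red}\}$ and suppose arm $i$ is selected fewer than $c_i$ times up to $T$. Then $N_{i,s-1}<c_i$ for every $s\le T$, so by Step 1 its index is $a_is$ at every round $s$. By Claim~\ref{clm:1} and the ordering $a_1>\dots>a_K$, the arm $j:=i'_{red}$ has $a_j<a_i$. Because $j\in I_{red}(T)$, there is a round $s>K$ at which $j$ is chosen by the argmax rule. At that round, $j$'s index is either $a_js$ if $j$ is unsaturated, or at most $b_j=a_jc_j\le a_j N_{j,s-1}< a_js$ if $j$ is saturated. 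In both cases it is strictly less than $a_is=B^{red}_i(s)$, contradicting the choice of $j$. Hence arm $i$ is played at least $c_i$ times.

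\textbf{Main obstacle.} I expect the delicate step to be bookkeeping rather than the inequality itself. Three points need care. First, "played at least once" must be read as "selected via the index" after the forced round-robin initialization; otherwise every arm lies in $I_{red}(T)$ trivially, and $j$ might never be chosen by the argmax rule. Second, the transitional round when $c_i\notin\mathbb{N}$ must be handled. Third, the bound $b_j\le a_js$ requires $s>c_j$, which holds since $s>N_{j,s-1}\ge c_j$ in the saturated case. I would handle the first point by restricting to rounds $t>K$ together with Claim~\ref{clm:1}, and the second by the crude bound $B^{red}_j(s)\le\max\{b_j,a_js\}$.
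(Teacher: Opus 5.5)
Your proof is correct, but it takes a different route from the paper's. The paper argues by persistence. When R-ed-UCB first selects arm $i$, every faster arm has already reached its cap, and from then on arm $i$ is played in one contiguous block of $c_i$ rounds. Combined with the introduction order of Claim~\ref{clm:1}, this means only the most recently introduced arm, $i'_{red}$, can still be incomplete at time $T$.

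You instead use the closed form of the index once. Every arm satisfies $B^{red}_k(s)\le a_k s$, with equality while $N_{k,s-1}\le c_k$. So an arm still below its cap strictly beats every slower arm at every round, and $i'_{red}$ could never have won an argmax while arm $i$ was incomplete. This route needs neither the block structure nor Claim~\ref{clm:1}; your appeal to it for $a_j<a_i$ is unnecessary, since that follows from the definition of $i'_{red}$ and distinct slopes. Because the inequality is strict, tie-breaking cannot matter. The argument also proves slightly more: no arm is ever selected by the index while some faster arm is below its cap.

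Your route also makes explicit two points the paper's sketch leaves implicit:
\begin{itemize}
\item $I_{red}(T)$ must ignore the round-robin initialization. The paper tacitly does the same, e.g.\ $T^{red}_2=b_1/a_2$ in Claim~\ref{clm:4}.
\item When $c_k\notin\mathbb{N}$, there is an intermediate pull count $N_{k,s-1}=\lceil c_k\rceil$. This state persists for as long as the arm is not replayed, not for a single round.
\end{itemize}
For the intermediate state, write out the one-line check instead of relying on the crude $\max$ bound. With $n=N_{k,s-1}$ and $\delta_k=b_k-a_k(n-1)\in(0,a_k)$, one gets $B^{red}_k(s)=a_k(n-1)+(s-n+1)\delta_k<a_k s$.

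In exchange, the paper's route gives the finer contiguous-play description of R-ed-UCB, which it reuses for the recursion on $T^{red}_k$ in Claim~\ref{clm:4}.
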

Similarly, We begin by an observation that if CURE-UCB selects some arm \(i\) for the first time at time \(t_i\),  then it must continue to select arm \(i\) for the next \(c_i - 1\) rounds.
By the design of R-ed-UCB, at time $t_i$, among the arms that are not selected, 
arm $i$ is the one with the largest slope. Moreover, all arms with larger slopes 
than arm $i$ must already have been pulled at least $c_i$ times. Otherwise, at 
time $c_i$, R-ed-UCB would select another arm instead of arm $i$. 
Therefore, it is clear that arm $i$ will be selected from time $t_i$ to $t_i + c_i$.
However, unlike CURE-UCB, note that R-ed-UCB cannot guarantee that these arms are pulled exactly $c_i$ times.

Then, we make following claim for relationship between $I_{cure}(T)$ and $I_{red}(T)$.
\begin{claim} \label{clm:4}
    \begin{align}
        I_{cure}(T) \subseteq I_{red}(T)
    \end{align}
\end{claim}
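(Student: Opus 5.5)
By Claim~\ref{clm:1}, both $I_{cure}(T)$ and $I_{red}(T)$ are initial segments $\{1,\dots,m\}$ of the arms ordered by decreasing slope. So it suffices to compare their sizes, $m_{cure}$ and $m_{red}$. The plan is to show that whenever CURE-UCB introduces a new arm, R-ed-UCB has already introduced that arm (or will do so) by the same time or earlier. I would argue by induction on the arm index $j$. Let $t^{cure}_j$ and $t^{red}_j$ denote the first times arm $j$ is played by each algorithm, with $t=\infty$ if the arm is never played before $T$. The inductive claim is $t^{red}_j \le t^{cure}_j$.

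For the inductive step, suppose CURE-UCB first plays arm $j+1$ at time $t=t^{cure}_{j+1}\le T$. By Claim~\ref{clm:2} and the first observation in its proof, at time $t$ every earlier arm $i\le j$ has been played exactly $c_i$ times and is flat. Hence $\gamma_i(N_{i,t-1}-1)=0$ for those arms, except possibly at the moment saturation is reached. Each flat arm then has index $b_i$ under both rules. For the new arm, selection at $t$ means
\begin{align}
\mu_{j+1}(N_{j+1,t-1}) + \frac{T-t}{2}\,a_{j+1} \ge \max_{i\le j} b_i .
\end{align}
Here I would use the initialization convention: after the first $K$ rounds each arm has one pull, so $N_{j+1,t-1}=1$ and $\gamma_{j+1}(0)=a_{j+1}$.

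The core comparison is that, for the R-ed-UCB index, the multiplier $(t-N_{j+1,t-1})$ grows with $t$, while the CURE-UCB multiplier $(T-t)/2$ shrinks. I would handle this in two cases.

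\textbf{Case 1: R-ed-UCB reaches the state of ``arms $1..j$ saturated'' no later than $t$.} By Claim~\ref{clm:3} and the inductive hypothesis, R-ed-UCB plays arms $1..j$ at least $c_i$ times each. It therefore reaches this state, possibly with extra plays, and at some time $s\le t$ all of its flat arms have index $b_i$. Arm $j+1$ has R-ed index $a_{j+1}+(s-1)a_{j+1}=s\,a_{j+1}$, which is nondecreasing in $s$ since it is unplayed. It suffices to show that at some time $s'\le t$ we have $s' a_{j+1}\ge \max_{i\le j} b_i$. Since $\max_i b_i \ge b_1 = a_1c_1$ and $a_1>a_{j+1}$, the saturation alone does not make this immediate. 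Instead I would use that R-ed-UCB never plays a flat arm whose index $b_i$ is beaten by arm $j+1$'s linearly growing index. Then either arm $j+1$ is chosen before $t$, or R-ed-UCB persists with flat arms only while $s a_{j+1} < \max b_i$.

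\textbf{Case 2: R-ed-UCB has not saturated arms $1..j$ by time $t$.} This cannot happen: R-ed-UCB plays arm $i$ consecutively for $c_i$ rounds upon entry, as in the proof of Claim~\ref{clm:3}, and by induction it enters each arm no later than CURE-UCB does, so it finishes them no later.

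The main obstacle is the step showing that the CURE-UCB condition implies the R-ed-UCB condition, that is, that
\begin{align}
a_{j+1}+\tfrac{T-t}{2}a_{j+1}\ge B
\end{align}
forces $s\,a_{j+1}\ge B$ for some $s\le t$, where $B=\max_{i\le j}b_i$. The naive implication fails when $t$ is small compared to $T$. To close this gap, I would first try to exploit the fact that, by the end of the argument, R-ed-UCB needs $t\ge$ the total saturation time $\sum_{i\le j}c_i\ge c_{i^*}=b_{i^*}/a_{i^*}$. I would then show that this gives $t\,a_{j+1}\ge B$ whenever $a_{j+1}$ is close enough to $a_{i^*}$.

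If that fails, the fallback is to weaken the goal. Since membership in $I_{red}(T)$ only requires arm $j+1$ to be played before $T$, it suffices to find $s\le T$ with $s\,a_{j+1}\ge B$. This follows from $T\ge t+\frac{2(B-a_{j+1})}{a_{j+1}}$, which gives $T a_{j+1}\ge B$. Finally, I would check that R-ed-UCB is not blocked from reaching time $s$ by other unplayed arms. This holds because any arm it plays instead only enlarges $I_{red}(T)$, and by Claim~\ref{clm:1} the arms enter in order.
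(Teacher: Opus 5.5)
Your induction rests on the invariant $t^{red}_j\le t^{cure}_j$, comparing first-play times at the same horizon $T$. That invariant is false. With a long horizon, CURE-UCB enters arm $j+1$ as soon as arms $1,\dots,j$ saturate, whereas R-ed-UCB must wait until $t\,a_{j+1}\ge \max_{i\le j}b_i$.

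Take $a_1=10^{-2}$, $b_1=0.5$, $a_2=10^{-4}$, $b_2=0.9$ and $T=10^5$. CURE-UCB switches to arm $2$ right after $t\approx 50$, since its index is about $\frac{T}{2}a_2\approx 5>b_1$. R-ed-UCB stays on the flat arm $1$ until $t=b_1/a_2=5000$.

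You noticed that the naive implication fails, but your Case 2 still invokes the invariant (\emph{it enters each arm no later than CURE-UCB does, so it finishes them no later}), and that case does occur. Add $a_3=10^{-5}$, $b_3=0.95$ and take $T=2\cdot 10^5$. CURE-UCB enters arm $3$ at $t\approx 9050$, while R-ed-UCB is still growing arm $2$, which it occupies from $t=5000$ to $t\approx 14000$. R-ed-UCB reaches arm $3$ only at $t=b_2/a_3=90000\le T$, so the claim survives, but not via your invariant.

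Your fallback recovers only the index condition $T a_{j+1}\ge \max_{i\le j}b_i$. It never shows that R-ed-UCB has finished saturating arms $1,\dots,j$ by time $T$, which it also needs before it can reach arm $j+1$. Being blocked by \emph{other unplayed arms} is not the real obstruction. The obstruction is the delay R-ed-UCB accumulates on the arms $\le j$ that it entered later than CURE-UCB.

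The repair is to compare horizon thresholds rather than first-play times, which is what the paper does. R-ed-UCB's entry times do not depend on $T$ and satisfy $T^{red}_k=\max\{T^{red}_{k-1}+c_{k-1},\,B_{k-1}/a_k\}$, where $B_{k-1}=\max_{i<k}b_i$. CURE-UCB can enter arm $k$ only at the moment arms $1,\dots,k-1$ have saturated, since afterwards $T-t$ only shrinks. Hence $k\in I_{cure}(T)$ forces $T\ge T^{cure}_k:=\sum_{i<k}c_i+2B_{k-1}/a_k$, up to rounding.

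It then suffices to prove $T^{red}_k\le T^{cure}_k$ by induction. The term $B_{k-1}/a_k$ is covered by the factor-$2$ slack; this is your fallback. The saturation term is handled by $T^{red}_{k-1}+c_{k-1}\le T^{cure}_{k-1}+c_{k-1}=\sum_{i<k}c_i+2B_{k-2}/a_{k-1}\le T^{cure}_k$, using $B_{k-2}\le B_{k-1}$ and $a_{k-1}>a_k$. The monotonicity of $B_{k-1}/a_k$ in $k$ is the ingredient your proposal is missing: it absorbs R-ed-UCB's accumulated lateness into CURE-UCB's horizon requirement.
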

Let $T_{i}^{red}$ be the first total time when $i$ is included in $I_{red}(T^{red}_i)$.
Then, we can write $T_{i}^{red}$ as follows:
\begin{align}
    T_{1}^{red} &= 1 \\
    T_{2}^{red} &= \frac{b_1}{a_2} \\
    T_{3}^{red} &= \max\left\{T_2^{red}+c_2,\frac{\max(b_1,b_2)}{a_3}\right\} \\
    \cdots \\
    T_{k}^{red} &= \max\left\{T_{k-1}^{red}+c_{k-1},\frac{\max(b_1,b_2,\cdots,b_{k-1})}{a_k}\right\} 
\end{align}
Similarly, let $T_{i}^{cure}$ be the first total time when $i$ is included in $I_{cure}(T^{cure}_i)$.
Then, we can write $T_{i}^{cure}$ as follows:
\begin{align}
     T_{1}^{cure} &= 1 \\
     T_{2}^{cure} &= c_1 + \frac{2b_1}{a_2} \\
     T_{3}^{cure} &= c_1 + c_2 + \frac{2\max(b_1,b_2)}{a_3} \\
     \cdots \\
     T_{k}^{cure} &= \sum_{i=1}^{k-1}c_i + \frac{2\max(b_1,b_2, \cdots,b_{k-1})}{a_k}
\end{align}
Then, it suffices to show that $T_{i}^{red} \leq  T_{i}^{cure}$ for proof.
To show it, we utilize induction.
For base case ($i=2$), it is clear that $T_{2}^{red} \leq  T_{2}^{cure}$.
Then, assuming that $T_{k-1}^{red} \leq  T_{k-1}^{cure}$, we can easily show that $T_{k}^{red} \leq  T_{k}^{cure}$, which completes proof.
Then, we proceed the proof.
Let $S^{cure}(T)$ be the set of arms in $I_{cure}(T)$ that have been played $c_i$ times.
Then, by Claim~\ref{clm:4}, all arms in $S^{cure}(T)$ are guaranteed to have been played at least $c_i$ times by R-ed-UCB as well.
Then, we can assume that all arms in $S^{cure}(T)$ have already been played $c_i$ times, and consider a new regret minimization problem for remaining time $T-\sum_{i\in S^{cure}(T)}c_i$.
Then, we can see that in this problem the optimal policy is to keep selecting a single arm, and CURE-UCB behaves exactly the same as the optimal policy.
Therefore, it is guaranteed that CURE-UCB has better regret than R-ed-UCB.

\section{Proof of Theorem~\ref{thm:upper_bound}}
\label{sec:proof_upper_bound}
{
\allowdisplaybreaks
\begin{proof}
 For well-estimated event, we define $\hat{\mu}_i(t)$ and $\widetilde{\mu}_i(t)$ as follows:
\begin{align}                                                               
    &\bar{\mu}_i(t) := \frac{1}{h_i}\sum_{l=N_{i,t-1}-h_i+1}^{N_{i,t-1}}\left(X_i(l)+\frac{(T-t)}{2}\frac{X_i(l)-X_i(l-h_i)}{h_i}\right)\; \\
    &\widetilde{\mu}_i(t) := \frac{1}{h_i}\sum_{l=N_{i,t-1}-h_i+1}^{N_{i,t-1}}\left(\mu_i(l)+\frac{(T-t)}{2}\frac{\mu_i(l)-\mu_i(l-h_i)}{h_i}\right)\; \\
    &\beta_i(t) := \sigma \sqrt{\frac{2[3(T-t)^2+8h_i^2]\log t^3}{4h_i^3}} \;\\
    &\Acute{\mu}_i(t) := \hat{\mu}_i(t) + \beta_i(t) \;.
\end{align}
We define well-estimated event $\mathcal{E}_{t}$ as follows:
\begin{align}
    &\mathcal{E}_{i,t}:=\left\{|\widetilde{\mu}_i(t)-\hat{\mu}_i(t)|\leq \beta_i(t) \right\}\; ,\\
    &\mathcal{E}_t:=\cap_{i\in [K]}\mathcal{E}_{i,t}\;.
\end{align}

Then, we rewrite the regret as follows:
\vspace{-0.1cm}
\begin{align} 
    Reg_{\vec{\mu}}(\pi,T) &= \sum_{t=1}^{T}\EXP\left[\mu_{i^*}(t) - \mu_{i_{t}}(N_{i_t,t})\right] \\
    &= \sum_{t=1}^{T}\EXP\left[(\mu_{i^*}(t) - \mu_{i_t}(N_{i_t,t}))\ind\left\{\gE_t\right\}\right] + \sum_{t=1}^{T}\EXP\left[(\mu_{i^*}(t) - \mu_{i_t}(N_{i_t,t}))\ind\left\{\gE_t^c\right\}\right] \\
    &\leq \underbrace{\sum_{t=1}^{T}\EXP\left[(\mu_{i^*}(t) - \mu_{i_t}(N_{i_t,t}))\ind\left\{\gE_t\right\}\right]}_{(A)}+ \underbrace{\sum_{t=1}^{T}\EXP\left[\ind\left\{\gE_t^c\right\}\right]}_{(B)}\;.
\end{align}

Then, we bound (B) term.
\begin{align}
    (B) &= \sum_{t=1}^{T}\Pr\left(\gE_t^c\right) \\
    &= 1 + \sum_{t=2}^{T}\Pr\left(\bigcup_{i\in[K]}\gE_{i,t}^c\right)  \label{eq:1} \\
    &\leq 1 + \sum_{t=2}^{T}\sum_{i\in[K]}\Pr(\gE^c_{i,t})  \label{eq:2} \;,
\end{align}
where \eqref{eq:1} holds due to De Morgan's Law and \eqref{eq:2} follows from the application of the union bound. 
From Lemma~\ref{lem:concen}, we have:
\begin{align}
    \sum_{t=2}^{T}\sum_{i\in[K]}\Pr(\gE^c_{i,t})  \leq \sum_{t=2}^{T}\sum_{i\in[K]} 2t^{-2} \leq \frac{K\pi^2}{3} 
\;.    
\end{align}

Now, we bound (A) term.
Under $\gE$, we can decompose (A) term as follows:
\begin{align}
    \mu_{i^*}(t) - \mu_{i_t}(N_{i_t,t}) &\leq \mu_{i^*}(t) - \mu_{i_t}(N_{i_t,t}) + B_{i_t}(t) - B_{i^*}(t) \\
    &= \mu_{i^*}(t) - \mu_{i_t}(N_{i_t,t}) + (\bar{\mu}_{i_t}(t)+\beta_{i_t}(t)) - (\bar{\mu}_{i^*}(t)+\beta_{i^*}(t))\\
    &\leq \mu_{i^*}(t) - \mu_{i_t}(N_{i_t,t}) + (\widetilde{\mu}_{i_t}(t)+2\beta_{i_t}(t)) - \widetilde{\mu}_{i^*}(t) \label{eq:3}\\
    &= \underbrace{\mu_{i^*}(t) - \widetilde{\mu}_{i^*}(t)}_{(A1)} + \underbrace{\widetilde{\mu}_{i_t}(t) - \mu_{i_t}(N_{i_t,t})}_{(A2)} + \underbrace{2B_{i_t}(t)}_{(A3)}\;,
\end{align}
where \eqref{eq:3} holds by the definition of $\gE$.
Then, we bound (A1) term.
We can rewrite (A1) term as follows:

\begin{align}
    (A1) &= \mu_{i^*}(t) - \frac{1}{h_i^*}\sum_{l=N_{i^*,t-1}-h_{i^*}+1}^{N_{i^*,t-1}}\left(\mu_i(l) +\frac{(T-t)}{2}\frac{\mu_{i^*}(l)-\mu_{i^*}(l-h_{i^*})}{h_{i^*}}\right) \\
    &= \frac{1}{h_{i^*}}\sum_{l=N_{i^*,t-1}-h_{i^*}+1}^{N_{i^*,t-1}} \left(\mu_{i^*}(t) - \mu_{i^*}(l) - \frac{(T-t)}{2}\frac{\mu_{i^*}(l)-\mu_{i^*}(l-h_{i^*})}{h_{i^*}}\right) \\
    &= \frac{1}{h_{i^*}}\sum_{l=N_{i^*,t-1}-h_{i^*}+1}^{N_{i^*,t-1}} \left(\mu_{i^*}(t) - \mu_{i^*}(l) - \frac{(T-t)}{2h_{i^*}}\sum_{k=l-h_{i^*}}^{l-1}\gamma_{i^*}(k)\right) \label{eq:4} \\
    &\leq \frac{1}{h_{i^*}}\sum_{l=N_{i^*,t-1}-h_{i^*}+1}^{N_{i^*,t-1}} \left(\mu_{i^*}(t) - \mu_{i^*}(l) - \frac{(T-t)}{2}\gamma_{i^*}(l-1)\right) \label{eq:5} \\
    &\leq \frac{1}{h_{i^*}}\sum_{l=N_{i^*,t-1}-h_{i^*}+1}^{N_{i^*,t-1}} \left(\max\left\{0,\frac{3t-2l-T}{2}\gamma_{i^*}\left(\frac{T-t}{2}\right)\right\}\right) \label{eq:6} \\
    &\leq \frac{1}{h_{i^*}}\sum_{l=N_{i^*,t-1}-h_{i^*}+1}^{N_{i^*,t-1}} T\gamma_{i^*}\left(\frac{T-t}{2}\right) \\
    &= T\gamma_{i^*}\left(\frac{T-t}{2}\right) 
    \;,
\end{align} 
where \eqref{eq:4} holds by the definition of $\gamma_i$ and \eqref{eq:5} holds by concavity assumption, (i.e., $\gamma_i(n)$ is decreasing over $n$) and \eqref{eq:6} holds by Lemma~\ref{lem:future_bound}.

With summation, we have:
\begin{align}
    \sum_{t=1}^{T} (A1)   &\leq \sum_{t=1}^{T}T\gamma_{i^*}\left(\frac{T-t}{2}\right) \\
    &\leq \sum_{t=1}^{T}T\gamma_{i^*}\left(\left\lceil\frac{T-t}{2}\right\rceil\right) \\
    &\leq \sum_{t=1}^{T/2} 2T\gamma_{i^*}(t) \label{eq:7}\\
    &\leq 2T\sum_{t=1}^{T} \gamma_{\max}(t)\;,
\end{align}
where \eqref{eq:7} holds by definition of floor function.
Then, we utilize the assumption that $(A1) \in [0,1]$
\begin{align}
    \sum_{t=1}^{T}(A1) &\leq \sum_{t=1}^{T}\min(1,A1) \\
    &\leq 2K+\sum_{t=2K+1}^{T}\min\left\{1,2T\gamma_{\max}(t)\right\} \\
    &\leq 2K+2^qT^q\sum_{t=2K+1}^{T}\gamma_{\max}(t)^q \label{eq:8}\\
    &\leq 2K+2^qKT^{q}\Upsilon_{\vec{\mu}}\left(\left\lceil\frac{T}{K}\right\rceil,q\right) \label{eq:9}
    \;,
\end{align}
where \eqref{eq:8} holds since when $\min\left\{1,x\right\}\leq\min\left\{1,x\right\}^q\leq x^q$ for any $x\geq0$ and \eqref{eq:9} holds by Lemma C.2. from \citep{metelli2022stochastic}.

Then, we bound (A2) term.
\begin{align}
    (A2) &=  \frac{1}{h_{i_t}}\sum_{l=N_{i_t,t-1}-h_{i_t}+1}^{N_{i_t,t-1}}\left(\mu_{i_t}(l)+\frac{(T-t)}{2}\frac{\mu_{i_t}(l)-\mu_{i_t}(l-h_{i_t})}{h_{i_t}}\right) - \mu_{i_t}(N_{i_t,t}) \\
    &= \frac{1}{h_{i_t}}\sum_{l=N_{i_t,t-1}-h_{i_t}+1}^{N_{i_t,t-1}}\left(\mu_{i_t}(l)+\frac{(T-t)}{2}\frac{\mu_{i_t}(l)-\mu_{i_t}(l-h_{i_t})}{h_{i_t}} - \mu_{i_t}(N_{i_t,t})\right) \\
    &\leq \frac{1}{h_{i_t}}\sum_{l=N_{i_t,t-1}-h_{i_t}+1}^{N_{i_t,t-1}} \frac{(T-t)}{2}\frac{\mu_{i_t}(l)-\mu_{i_t}(l-h_{i_t})}{h_{i_t}} \label{eq:10} \\
    &= \frac{(T-t)}{2h_{i_t}}\sum_{l=N_{i_t,t-1}-h_{i_t}+1}^{N_{i_t,t-1}}\sum_{k=l-h_{i_t}}^{l-1}\frac{1}{h_{i_t}}\gamma_{i_t}(k) \\
    &\leq \frac{(T-t)}{2h_{i_t}}\sum_{l=N_{i_t,t-1}-h_{i_t}+1}^{N_{i_t,t-1}}\gamma_{i_t}(l-h_{i_t}) \label{eq:11} \\
    &\leq \frac{(T-t)}{2}\gamma_{i_t}(N_{i_t,t-1}-2h_i+1) \;,
\end{align}
where \eqref{eq:10} holds since $\mu_i(n)$ is non-decreasing and $l \leq N_{i_t,t-1}\leq N_{i_t,t}$ and \eqref{eq:11} holds by concavity assumption.

As before, we can utilize the assumption that $(A2)\leq [0,1]$:
\begin{align}
    \sum_{t=1}^{T} (A2) &\leq \sum_{t=1}^{T}\min\left\{1,\sum_{t=1}^{T} \frac{T-t}{2}\gamma_{i_t}(N_{i_t,t-1}-2h_{i_t}+1)\right\} \\
    &\leq 2K+\sum_{t=2K+1}^{T}\min\left\{1,\frac{T}{2}\gamma_{i_t}(N_{i_t,t-1}-2h_{i_t}+1)\right\} \\
    &\leq 2K+\sum_{t=2K+1}^{T}\min\left\{1,\frac{T}{2}\gamma_{i_t}((1-2\epsilon)N_{i_t,t-1})\right\}  \\
    &\leq 2K+\frac{KT^q}{2}\Upsilon_{\vec{\mu}}\left(\left\lceil(1-2\epsilon)\frac{T}{K}\right\rceil,q\right) \label{eq:12},
\end{align}
where \eqref{eq:12} holds by Lemma C.2. from \citep{metelli2022stochastic}.

Then, we bound (A3) term.
We also utilize the fact that the regret for one time step cannot precede 1.
\begin{align}
    \sum_{t=1}^{T} (A3) 
    &\le \sum_{i=1}^{K} \sum_{j=1}^{N_{i,T}} \min \left\{ 1, 2\sigma \sqrt{\frac{2[3T^2 + 8(\epsilon j)^2] 3 \log T}{4 (\epsilon j)^3}} \right\} \notag \\
    &\le  K\left\lceil\frac{1}{\epsilon}\right\rceil \sum_{i=1}^{K} \sum_{j= \left\lceil\frac{1}{\epsilon}\right\rceil+1}^{N_{i,T}} \min \left\{ 1, 2\sigma \sqrt{\frac{2[3T^2 + 8(\epsilon j)^2] 3 \log T}{4 (\epsilon j)^3}} \right\} \notag \\
    &\le K\left\lceil\frac{1}{\epsilon}\right\rceil + \sum_{i=1}^{K} \sum_{j=\left\lceil\frac{1}{\epsilon}\right\rceil+1}^{N_{i,T}} \min \left\{ 1, \frac{\sigma T \sqrt{\log T}}{\epsilon^{3/2} j^{3/2}} \sqrt{6(3+8\epsilon^2)} \right\} \notag \\
    &\leq K\left\lceil\frac{1}{\epsilon}\right\rceil+\sum_{i=1}^{K} \sum_{j=1}^{N_{i,T}} \min \left\{ 1, \frac{C}{j^{3/2}} \right\} \quad \text{where } C = \sigma T \sqrt{\log T} \epsilon^{-3/2} \sqrt{18+48\epsilon^2} \notag \\
    &\le K \left(\left\lceil\frac{1}{\epsilon}\right\rceil+ j^* + \int_{j^*}^{\infty} C x^{-3/2} dx \right) \quad \text{with } j^* = \lceil C^{2/3} \rceil \notag \\
    &= K \left( \left\lceil\frac{1}{\epsilon}\right\rceil+j^* + 2C(j^*)^{-1/2} \right) \notag \\
    &\le K\left\lceil\frac{1}{\epsilon}\right\rceil+3K C^{2/3} \notag \\
    &=K\left\lceil\frac{1}{\epsilon}\right\rceil+ 3K \left( \sigma T \sqrt{\log T} \epsilon^{-3/2} \sqrt{18+48\epsilon^2} \right)^{2/3} \notag \\
    &= K\left\lceil\frac{1}{\epsilon}\right\rceil+\frac{3K}{\epsilon} (\sigma T)^{2/3} (\log T)^{1/3} (18+48\epsilon^2)^{1/3} \;.
\end{align}

Combining all results, we have:
\begin{align}
    Reg_{\mu}(\pi_{\epsilon}, T) \leq &\left(4+\left\lceil\frac{1}{\epsilon}\right\rceil\right)K + \frac{(1+2^{1+q})KT^{q}}{2}\Upsilon_{\vec{\mu}}\left(\left\lceil(1-2\epsilon)\frac{T}{K}\right\rceil,q\right) \\
    &+ \frac{3K}{\epsilon} (\sigma T)^{2/3} (\log T)^{1/3} (18+48\epsilon^2)^{1/3} \notag
\end{align}
\end{proof}
}

\section{Proof of Lemmas}
\label{sec:proof_lemma}
\begin{lemma} \label{lem:concen}
    For every arm $i\in[K]$, every time $t\in[T]$, and window width $1\leq h_i \leq \frac{N_{i,t-1}}{2}$, we have:
    \begin{align}
        \Pr (\left|\widetilde{\mu}_i(t) - \bar{\mu}_i(t) \right| > \beta_i(t)) \leq 2t^{-2}
    \end{align}
\end{lemma}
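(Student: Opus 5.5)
Since $\bar{\mu}_i(t)-\widetilde{\mu}_i(t)$ is a fixed linear combination of independent centered $\sigma^2$-subgaussian noises, the plan is to compute its variance proxy exactly and then apply a subgaussian tail bound together with a union bound over the random pull count. Write $X_i(l)=\mu_i(l)+\eta_i(l)$, with the $\eta_i(l)$ independent and $\sigma^2$-subgaussian. Set $N=N_{i,t-1}$, $h=h_i$ and $c=\frac{T-t}{2h}$. Then
\begin{align}
\bar{\mu}_i(t)-\widetilde{\mu}_i(t)=\frac{1}{h}\sum_{l=N-h+1}^{N}\Big((1+c)\,\eta_i(l)-c\,\eta_i(l-h)\Big).
\end{align}
The condition $h\le N/2$ guarantees $l-h\ge 1$, and that the two index blocks $\{N-h+1,\dots,N\}$ and $\{N-2h+1,\dots,N-h\}$ are disjoint. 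Each noise term therefore appears exactly once. Half of them carry coefficient $(1+c)/h$ and the other half carry coefficient $-c/h$.

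The variance proxy is then
\begin{align}
\frac{\sigma^2}{h}\big[(1+c)^2+c^2\big]\le\frac{\sigma^2}{h}\big(2+3c^2\big)=\sigma^2\,\frac{8h^2+3(T-t)^2}{4h^3}.
\end{align}
Here I use $(1+c)^2\le 2+2c^2$. This matches exactly the expression inside $\beta_i(t)$, which is reassuring. The Chernoff/Hoeffding bound for subgaussian sums gives
\begin{align}
\Pr\big(|\bar{\mu}_i-\widetilde{\mu}_i|>\beta\big)\le 2\exp\!\big(-\beta^2/(2v)\big),
\end{align}
where $v$ is the variance proxy. Since $\beta_i(t)^2=2v\log t^3$, this bound is $2t^{-3}$ for fixed $N$ and $h$.

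The main obstacle is that $N_{i,t-1}$ (and hence $h_i=\lfloor\epsilon N_{i,t-1}\rfloor$) is random and depends on past rewards, so the noise terms are not a fixed set. I would handle this in the standard way. First, use the stack-of-rewards model: $X_i(l)$ denotes the reward of the $l$-th pull of arm $i$, and these are independent of the arm-selection process. Then union bound over the at most $t$ possible values of $N_{i,t-1}\in\{1,\dots,t-1\}$, which fixes $h$ as a deterministic function of $N$. Each fixed value contributes at most $2t^{-3}$, so the total is at most $(t-1)\cdot 2t^{-3}\le 2t^{-2}$. This is exactly where the $\log t^3$ (rather than $\log t^2$) in the bonus is spent.

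One minor check remains: the case $T=t$, where $c=0$. The bound still holds trivially there, since $\beta$ stays positive.
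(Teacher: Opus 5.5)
Your proposal is correct and follows essentially the same route as the paper: fix $N_{i,t-1}=n$, bound the sum of squared coefficients by $\frac{3(T-t)^2+8h_i^2}{4h_i}$ using $(a+b)^2\le 2a^2+2b^2$, apply a subgaussian (Azuma--Hoeffding) tail to get $2t^{-3}$, and take a union bound over $n\in[t-1]$ to reach $2t^{-2}$. Your write-up is slightly cleaner than the paper's: you verify that the two windows are disjoint and keep the correct coefficient $\frac{T-t}{2h_i}$ on the lagged noise. One wording fix: the stacked rewards are not literally independent of the selection process, and the union bound does not need that; it only needs the inclusion $\{|\bar{\mu}_i(t)-\widetilde{\mu}_i(t)|>\beta_i(t)\}\cap\{N_{i,t-1}=n\}\subseteq\{|S_n|>\beta_n\}$, where $S_n$ is the fixed-$n$ linear combination of the independent stacked noises and $\beta_n$ is its threshold.
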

\begin{proof}
    Firstly, we rewrite the equation as follows:
    \begin{align}
        \Pr (\left|\widetilde{\mu}_i(t)  - \bar{\mu}_i(t) \right| > \beta_i(t)) &= \Pr (\exists n\in[t-1]: \left|\widetilde{\mu}_i(t) ) - \bar{\mu}_i(t)\right|>\beta_i(t), N_{i,t-1}=n) \\
        &\leq \sum_{n=1}^{t-1} \Pr (\left|\widetilde{\mu}_i(t)  - \bar{\mu}_i(t) \right| > \beta_i(t), N_{i,t-1}=n) \label{eq:concen_1}
        \;.
    \end{align}
    Fixing $n$, we have:
    \begin{align}
        &h_i(t)(\widetilde{\mu}_i(t)  - \bar{\mu}_i(t)) \\ 
         = &\sum_{l=n-h_i+1}^{n}\left((X_i(l)-\mu_i(l))+\frac{(T-t)}{2}\frac{(X_i(l)-\mu_i(l))-(X_i(l-h_i-\mu_i(l-h_i)}{h_i}\right) \\
        = &\sum_{l=n-h_i+1}^{n}\left(1+\frac{(T-t)}{2h_i}\right)(X_i(l)-\mu_i(l)) - \sum_{l=n-h_i+1}^{n}\frac{(T-t)}{2}\left(X_i(l-h_i-\mu_i(l-h_i)\right) \;.
    \end{align}
\end{proof}
Since $X_i(l)$ is only drawn from $D_i(l)$, which only depends on $l$, all random variables $X_i(l)$ and $X_i(l-h_i)$ are independent.
Also, since $\EXP[X_i(l)]=\mu_i(l)$, we can define $Z_i(l) := X_i(l)-\mu_i(l)$ and apply Azuma-Hoeffding inequality\cite{metelli2022stochastic}.
For that, we compute the sum of square constants:
\begin{align}
    &\sum_{l=n-h_i+1}^{n}\left[\left(1+\frac{(T-t)}{2h_i}\right)^2 + \left(\frac{(T-t)}{2h_i}\right)^2\right] \\
    \leq &\frac{h_i}{4h_i^2}\left[\left(2h_i+(T-t)\right)^2+(T-t)^2\right]  \\
    \leq &\frac{1}{4h_i} \left[3(T-t)^2 + 8h_i^2\right] \;, \label{eq:concen_2}
\end{align}
where \eqref{eq:concen_2} holds since $(a+b)^2 \leq 2a^2+2b^2$.
Thus, we have:
\begin{align}
    &\Pr (\left|\widetilde{\mu}_i(t)  - \bar{\mu}_i(t) \right| > \beta_i(t)) \\
    =&\Pr \left(\left|\sum_{l=n-h_i+1}^{n}\left[\left(1+\frac{(T-t)}{2h_i}\right)Z_i(l) - \frac{(T-t)}{2}Z_i(l-h_i)\right]\right| >h_i\beta_i(t)\right) \\
    &\leq 2\exp\left(-\frac{(h_i\beta_i(t))^2}{2\sigma^2\left(\frac{3(T-t)^2+8h_i^2}{4h_i}\right)}\right) \\
    &\leq 2t^{-3}\;.
\end{align}
Substituting this result into \eqref{eq:concen_1}, we have:
\begin{align}
    \sum_{n=1}^{t-1} \Pr (\left|B_i(t) - \bar{\mu}_i(t) \right| > \beta_i(t), N_{i,t-1}=n) \leq \sum_{n=1}^{t-1}2t^{-3} \leq 2t^{-2} 
    \;,
\end{align}
which concludes the proof.

\begin{lemma} \label{lem:future_bound}
For every arm $i\in [K]$, time $t\in [T]$ and any number of pulls $n \leq t$, we have:
\begin{align}
    \mu_{i}(t)-\mu_i(n)-\frac{T-t}{2}\gamma_i(n-1) \leq \max\left(0,\frac{3t-2n-T}{2}\gamma_i\left(\frac{T-t}{2}\right)\right)
\end{align}
\begin{proof}
    Firstly, we consider $t < n+\frac{T-t}{2}$. 
    Then, we show that $\mu_{i}(t)-\mu_i(n)-\frac{T-t}{2}\gamma_i(n-1) \leq 0$.
    \begin{align}
        \mu_i(t)  &\leq \mu_i\left(n+\frac{T-t}{2}\right) \\
        &\leq \mu_i(n) + \sum_{l=n}^{n+(T-t)/2-1} \gamma_i(l) \\
        &\leq \mu_i(n) + \frac{T-t}{2}\gamma_i(n-1) \;, \label{eq:lem1}
    \end{align}
    where \eqref{eq:lem1} holds by concavity assumption (i.e., $\gamma_i$ is non-increasing).
    Also, when $t<n+\frac{T-t}{2}$, we can easily see that $3t-2n-T\leq 0$, which completes our claim.
    Then, we consider the case when $t > n+\frac{T-t}{2}$.
    In this case, $3t-2n-T>0$ is guaranteed.
    We decompose $\mu_{i}(t)-\mu_i(n)$ as follows:
    \begin{align}
        \mu_{i}(t)-\mu_i(n) &= \sum_{l=n}^{t-1}\gamma_i(n) \\
        &= \sum_{l=n}^{n+(T-t)/2-1} \gamma_i(n) + \sum_{l=n+(T-t)/2)}^{t-1}\gamma_i(n)  \\
        &\leq \frac{(T-t)}{2}\gamma_i(n) + \left(t-n-\frac{(T-t)}{2}\right)\gamma_i\left(n+\frac{(T-t)}{2}\right) \label{eq:fu_1} \\
        &\leq \frac{(T-t)}{2}\gamma_i(n) + \left(\frac{3t-2n-T}{2}\right)\gamma_i\left(\frac{(T-t)}{2}\right)\label{eq:fu_2}\;,
    \end{align}
    where \eqref{eq:fu_1} and \eqref{eq:fu_2} hold by concavity assumption, which completes proof.
\end{proof}
\end{lemma}

\section{Baselines}
\label{sec:baselines}

\subsection{Description}
\begin{itemize}
    \item R-ed-UCB \citep{metelli2022stochastic}
    is a representative algorithm for the RMAB framework. It utilizes a derivative-based estimator to capture the growth rate of the reward function. By estimating this slope, the algorithm constructs an index designed to predict the instantaneous expected reward at the specific future time step.
    \item ET-SWGTS \citep{fiandri2025thompson}
    is a Thompson Sampling-based algorithm adapted for RMAB framework. It addresses the non-stationary nature of reward growth by combining an initial exploration phase with a sliding-window mechanism. The algorithm first forces exploration to gather sufficient preliminary data for all arms. Subsequently, it updates the Gaussian posterior distributions using only the most recent observations within a fixed window, thereby forgetting outdated low rewards to track the current rising trend.
    \item TI-UCB \citep{xia2024llm}
    is tailored for the RMAB framework, specifically focusing on the "increasing-then-converging" reward trend. The core mechanism of the algorithm revolves around detecting the convergence point, where the critical moment when an arm's growth saturates and performance stabilizes. By focusing on identifying this transition to the asymptotic limit, TI-UCB aims to select the arm with the highest ultimate capacity. 
    \item R-ed-AE \citep{amichay2025rising}
    addresses the RMAB problem by specifically targeting environments where rewards exhibit linear drifts. The algorithm operates on a successive elimination principle, maintaining a set of active candidate arms. By estimating the growth parameters and future potential of each arm, R-ed-AE progressively discards arms that are statistically inferred to be suboptimal, thereby concentrating computational resources on identifying the most promising candidate.
    \item  SW-UCB \citep{garivier2011upper}
     is a modification of the standard UCB policy designed to handle non-stationary environments. It relies on a sliding window of fixed size to estimate the expected reward of each arm. By exclusively utilizing the most recent observations and discarding outdated history, the algorithm effectively forgets past behavior to adapt to dynamic changes in the reward distribution.
    \item SW-KL-UCB \citep{garivier2011kl}
    enhances the sliding-window approach by replacing the standard Euclidean confidence interval with one based on KL divergence. While retaining the fixed-size window to track non-stationary changes, this modification allows for tighter confidence bounds, particularly in bounded reward settings, leading to more efficient exploration compared to the standard SW-UCB.
    \item 
    SW-TS \citep{trovo2020sliding}
    adapts the standard Thompson Sampling framework to non-stationary environments by incorporating a sliding-window mechanism. Instead of aggregating the entire history, the algorithm updates its Bayesian posterior distributions using exclusively the most recent observations contained within the window. This approach enables the probabilistic policy to track dynamic changes in the underlying reward distributions by effectively discarding outdated information.
    \item Rexp3 \citep{besbes2014stochastic}
    adapts the adversarial Exp3 algorithm to stochastic non-stationary settings by introducing a periodic restart mechanism. The algorithm divides the time horizon into fixed-size epochs and resets the Exp3 weights at the beginning of each epoch. This approach allows the policy to handle distribution changes by effectively limiting its memory to the current epoch, with the restart frequency typically tuned based on the total variation budget of the reward functions.
    \item Ser4 \citep{allesiardo2017non}
    handles non-stationary environments by combining a successive elimination strategy with a periodic reset mechanism. Within each epoch, the algorithm progressively discards arms identified as suboptimal based on their estimated confidence bounds. To adapt to dynamic reward changes, Ser4 periodically resets the set of active arms, thereby "reviewing" previously eliminated candidates to check if they have become optimal in the new environment.
\end{itemize}

\subsection{Hyperparameter}

\begin{itemize}
    \item R-ed-UCB: Confidence term $\delta=0.001$, window size parameter $\epsilon=1/4$ for synthetic experiment and $\epsilon=1/32$ for online model selection.
    \item ET-SWGTS: For LTF setting, $\gamma=1.0$, window size parameter $\tau=1000$, forced exploration: $\Gamma=10$. For Concave setting and online model selection, $\gamma=1.0$, $\tau=200$, $\Gamma=1000$.
    \item TI-UCB: Confidence term $\delta=0.01$, window size parameter $\omega=100$, threshold $\gamma=0.3$.
    \item R-ed-AE: Confidence term $\delta=0.01$.
    \item SW-UCB: Sliding window term $\tau = 2\sqrt{T\log T}$, constant $\xi=1.5$.
    \item SW-KL-UCB: $\tau=\sigma^{-4/5}$.
    \item SW-TS: $\beta=0.5$, sliding window term $\tau=T^{1-\beta}$.
    \item Rexp3: $V_T=K$, $\gamma= \min\left\{1,\sqrt{K\log K/(e-1)\Delta_T}\right\}$, $\Delta_T=\left\lceil(K\log K)^{1/3}(T/V_T)^{2/3}\right\rceil$
    \item Ser4: $\delta=1/T$, $\epsilon=1/(KT)$, $\phi=\sqrt{N/TK\log (KT)}$.
\end{itemize}

\section{Additional Experimental Results}
\label{sec:additional_experiments}

In this section, we present supplementary experimental results to provide a comprehensive understanding of the algorithm's behavior. We analyze the pairwise win rates across random seeds, explicitly distinguishing between the Linear-Then-Flat (LTF) and Concave settings to highlight the specific advantages of CURE-UCB in each scenario.

\begin{figure}[p] 
    \centering

    \begin{subfigure}{\textwidth}
        \centering
        \includegraphics[width=0.52\linewidth, trim={1cm 0.4cm 1cm 0.5cm}, clip]{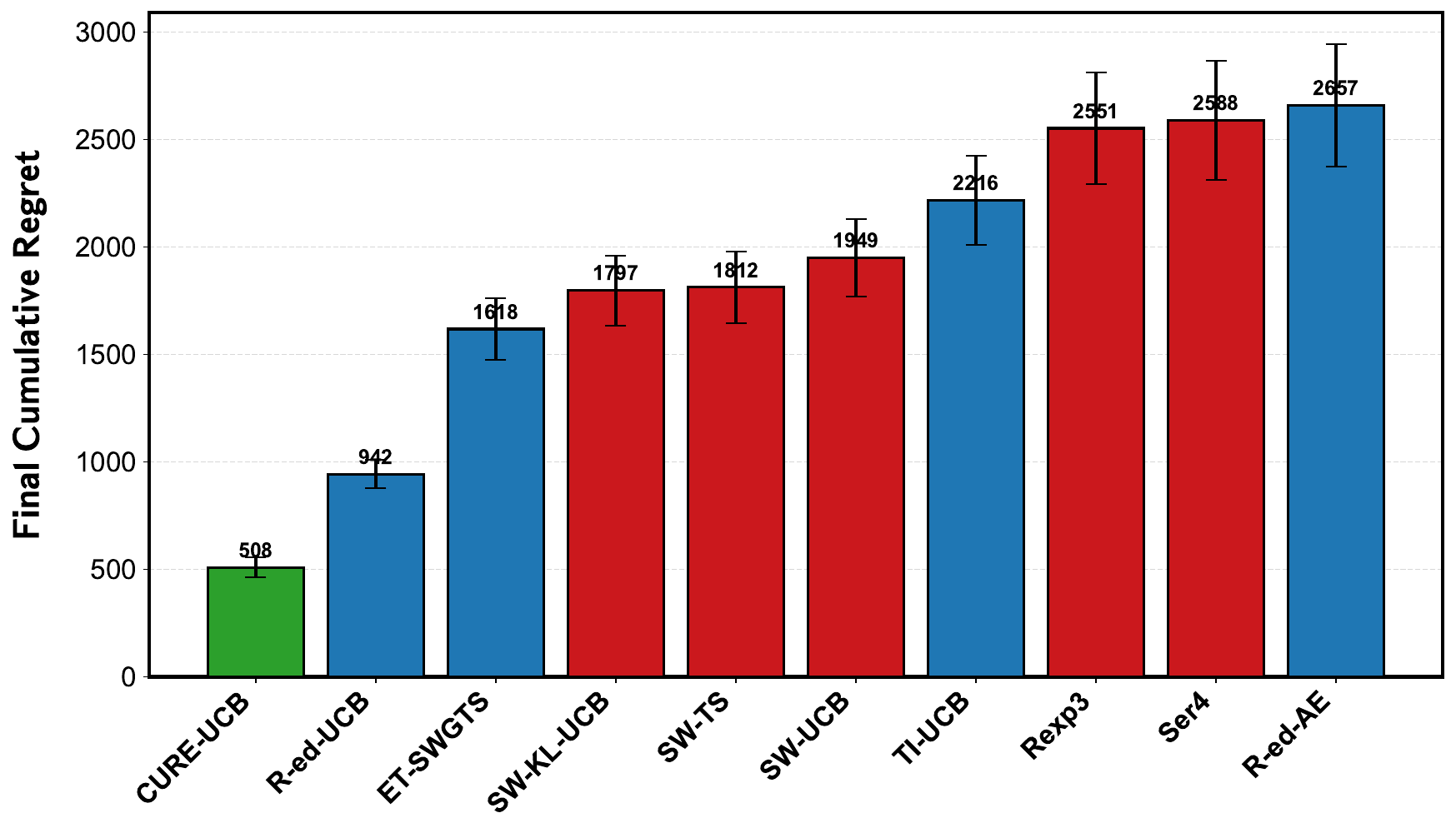}
        \hfill
        \includegraphics[width=0.43\linewidth, trim={0.7cm 0.7cm 0.7cm 0.8cm}, clip]{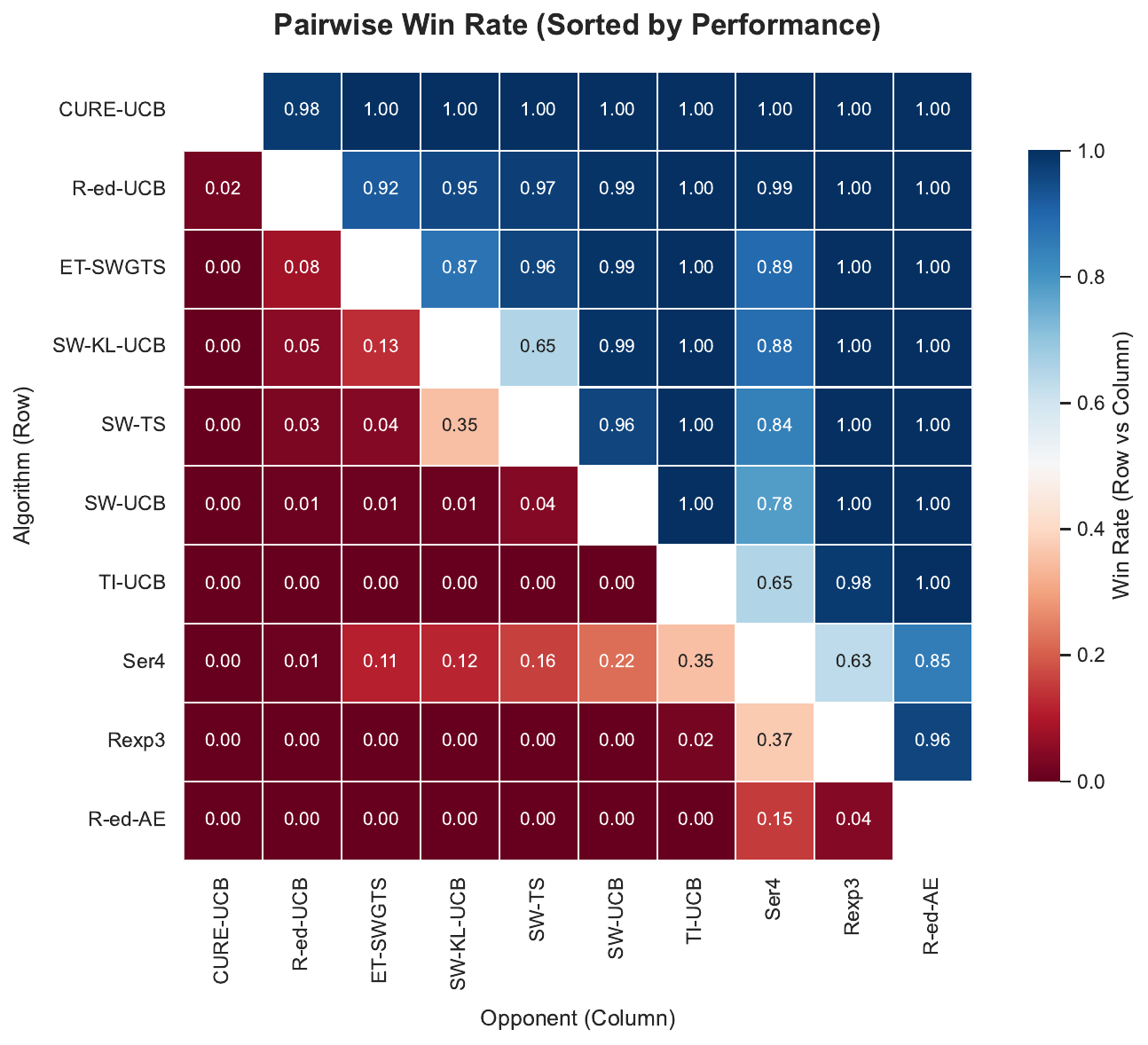}
        \caption{$T = 10,000$}
    \end{subfigure}

    \begin{subfigure}{\textwidth}
        \centering
        \includegraphics[width=0.52\linewidth, trim={1cm 0.4cm 1cm 0.5cm}, clip]{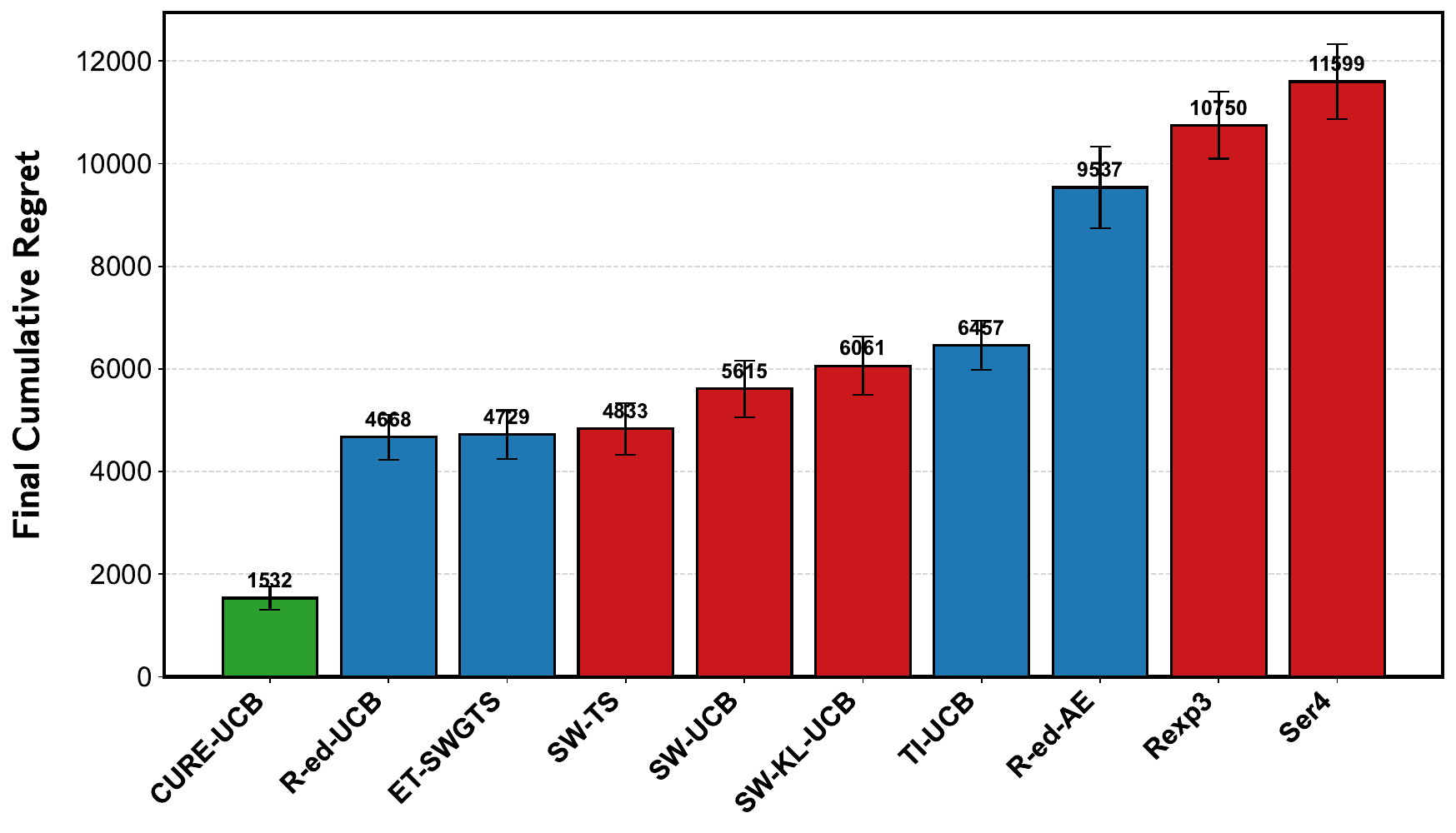}
        \hfill
        \includegraphics[width=0.43\linewidth, trim={0.7cm 0.7cm 0.7cm 0.8cm}, clip]{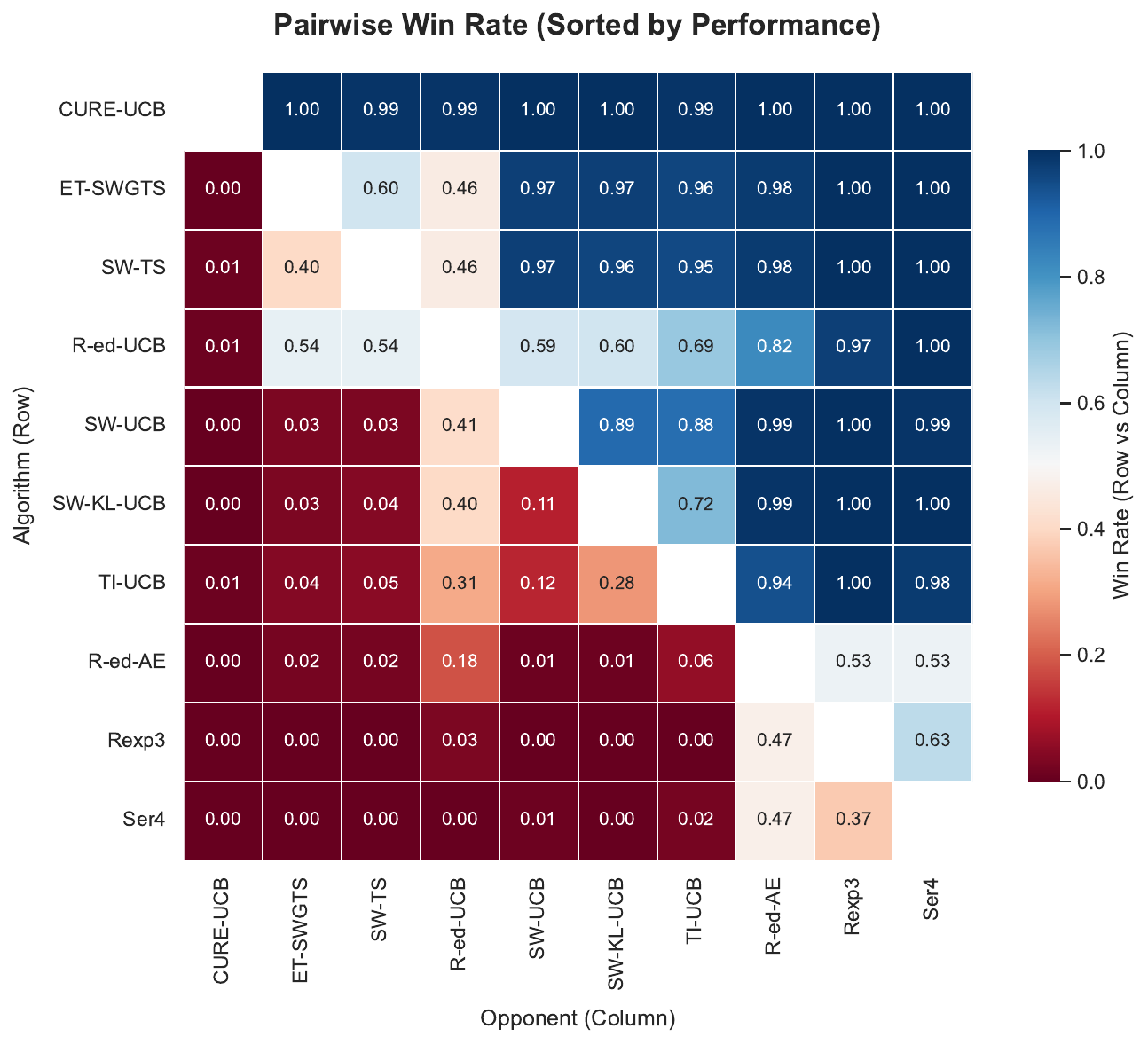}
        \caption{$T = 30,000$}
    \end{subfigure}

    \begin{subfigure}{\textwidth}
        \centering
        \includegraphics[width=0.52\linewidth, trim={1cm 0.4cm 1cm 0.5cm}, clip]{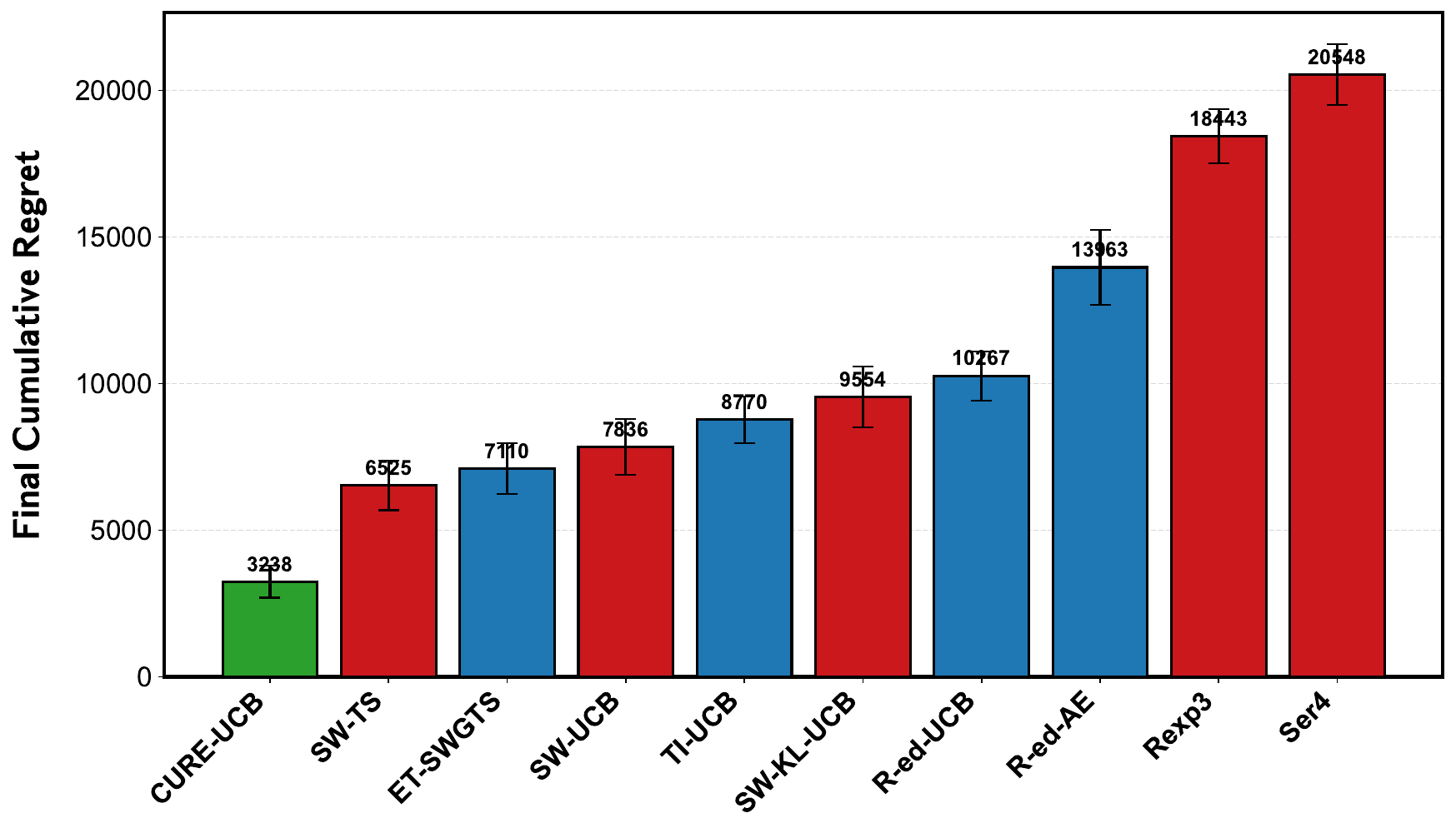}
        \hfill
        \includegraphics[width=0.43\linewidth, trim={0.7cm 0.7cm 0.7cm 0.8cm}, clip]{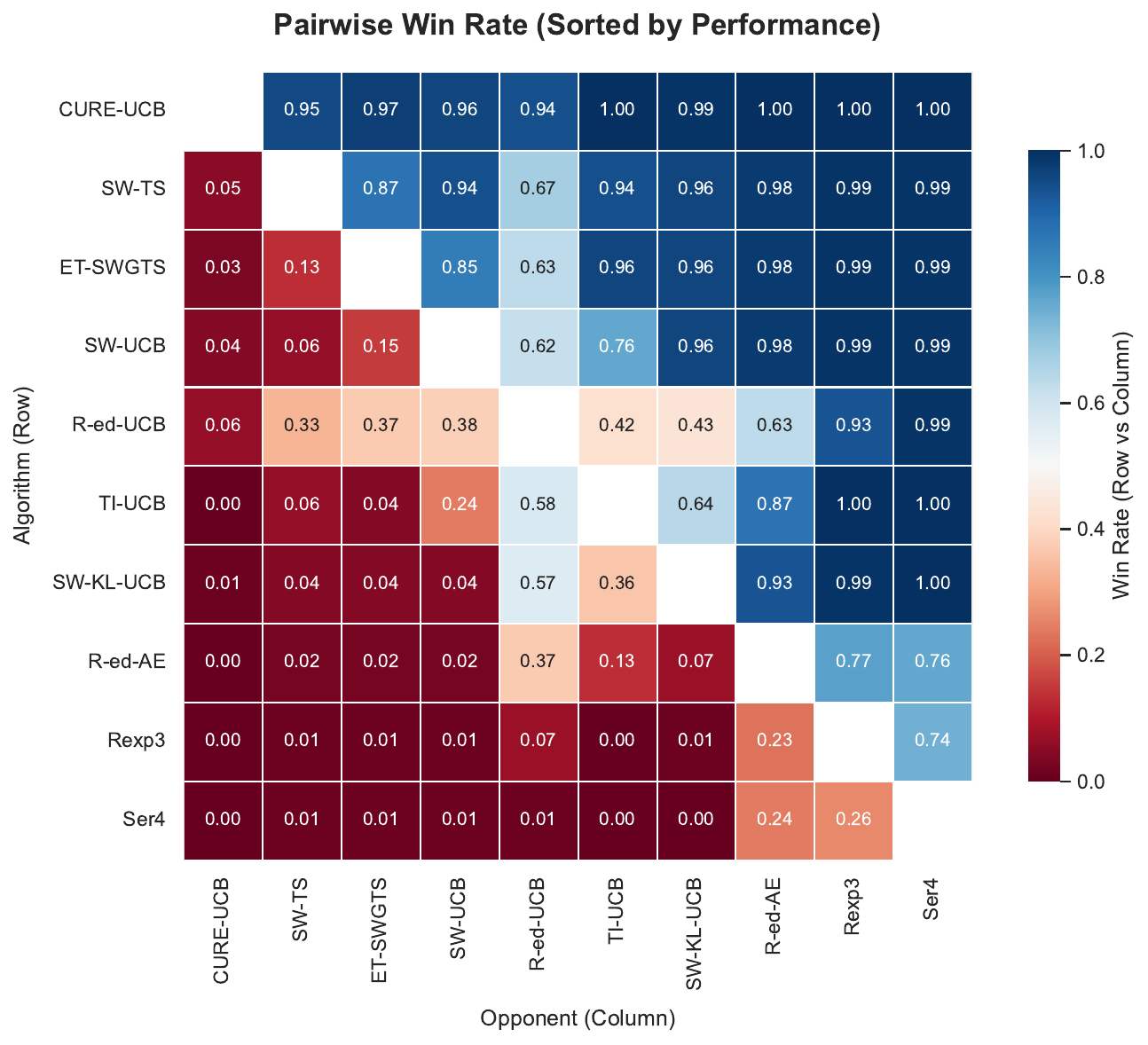}
        \caption{$T = 50,000$}
    \end{subfigure}

    \caption{\textbf{Additional experiment result for LTF setting} (Left) Average Cumulative Regret with 95\% confidence intervals, (Right) Pairwise Win Rate heatmap. }
    \label{fig:main_results_detailed1}
\end{figure}

\begin{figure}[p] 
    \centering

    \begin{subfigure}{\textwidth}
        \centering
        \includegraphics[width=0.52\linewidth, trim={1cm 0.4cm 1cm 0.5cm}, clip]{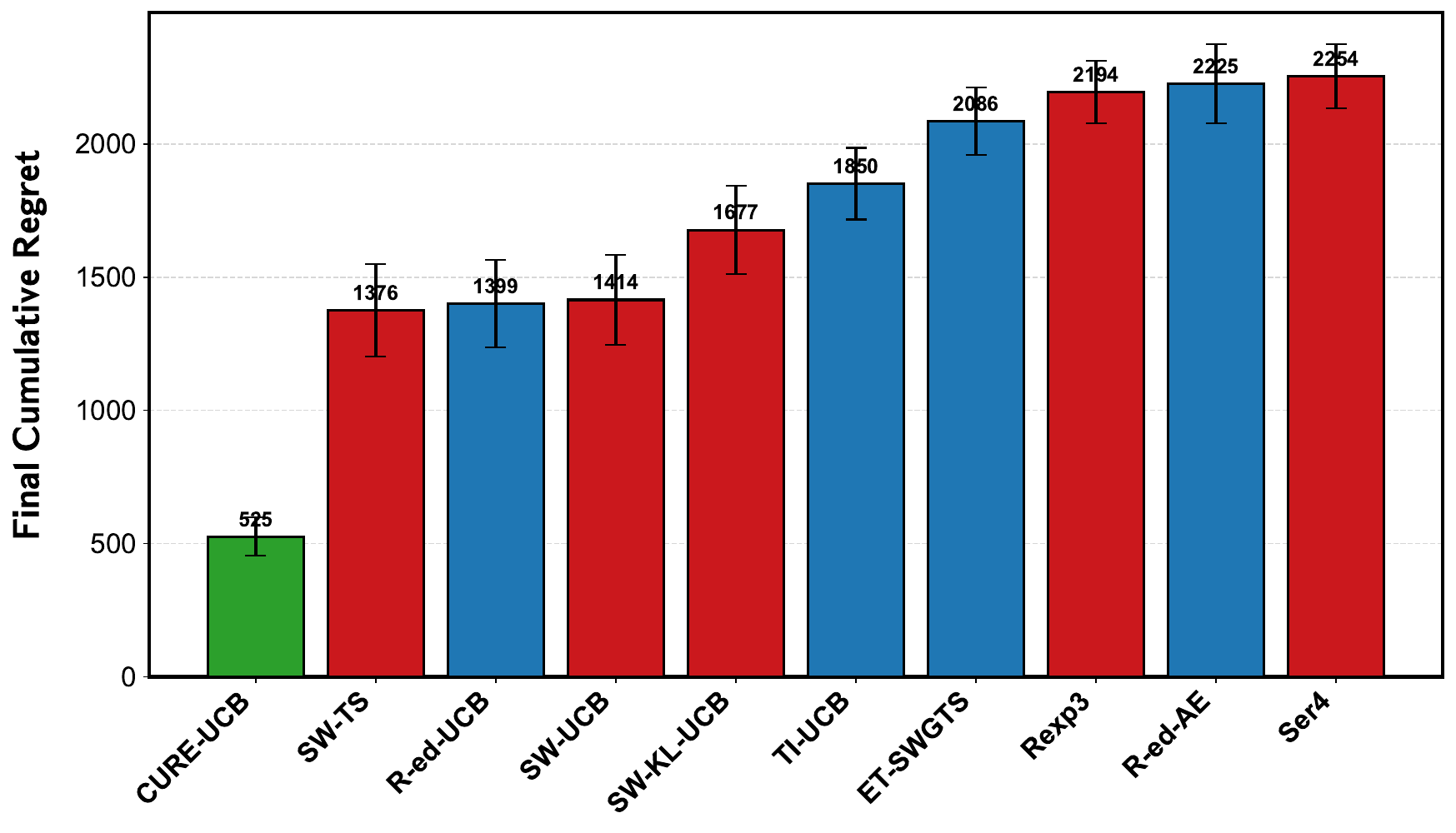}
        \hfill
        \includegraphics[width=0.43\linewidth, trim={0.7cm 0.7cm 0.7cm 0.8cm}, clip]{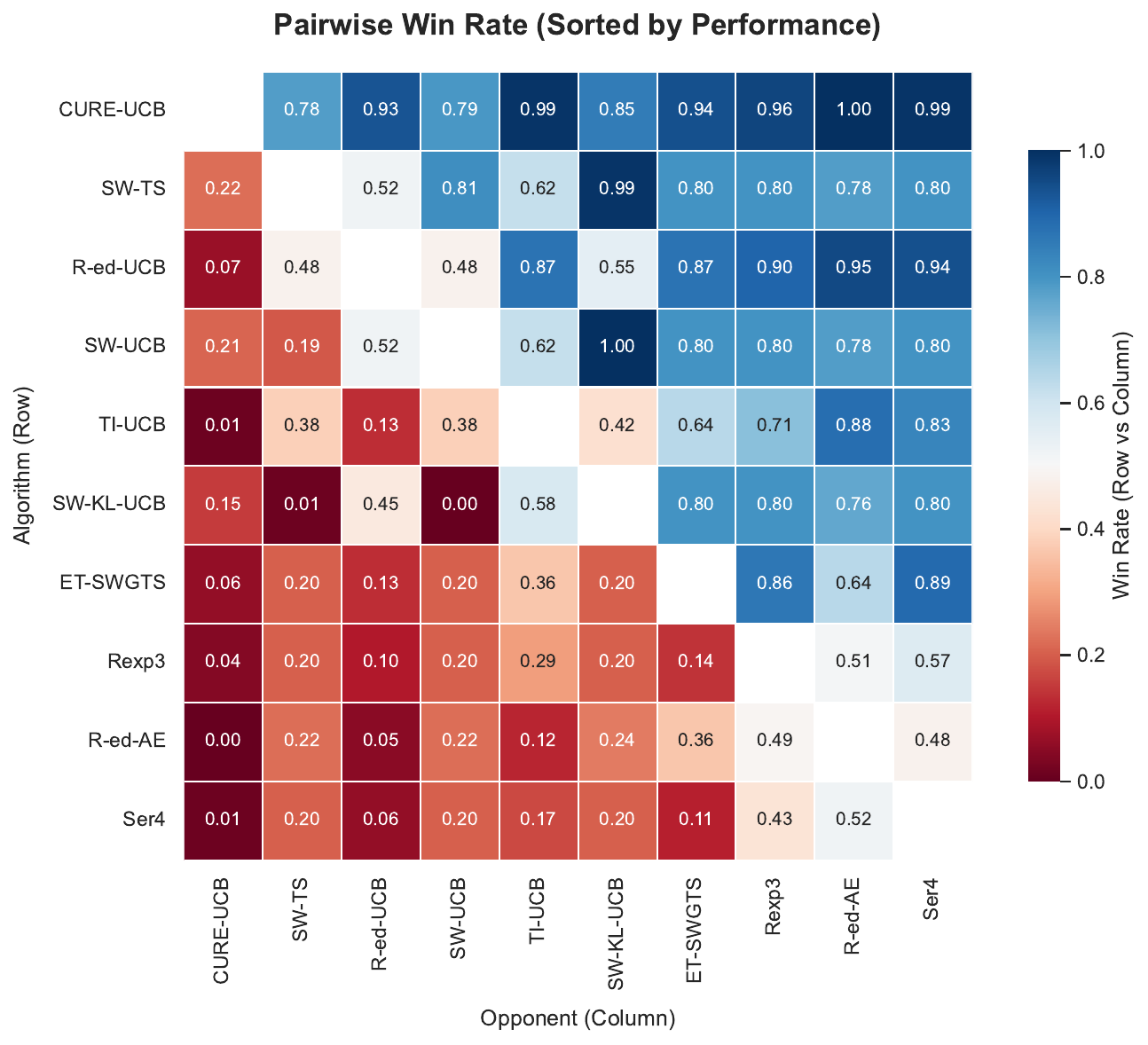}
        \caption{$T = 10,000$}
    \end{subfigure}

    \begin{subfigure}{\textwidth}
        \centering
        \includegraphics[width=0.52\linewidth, trim={1cm 0.4cm 1cm 0.5cm}, clip]{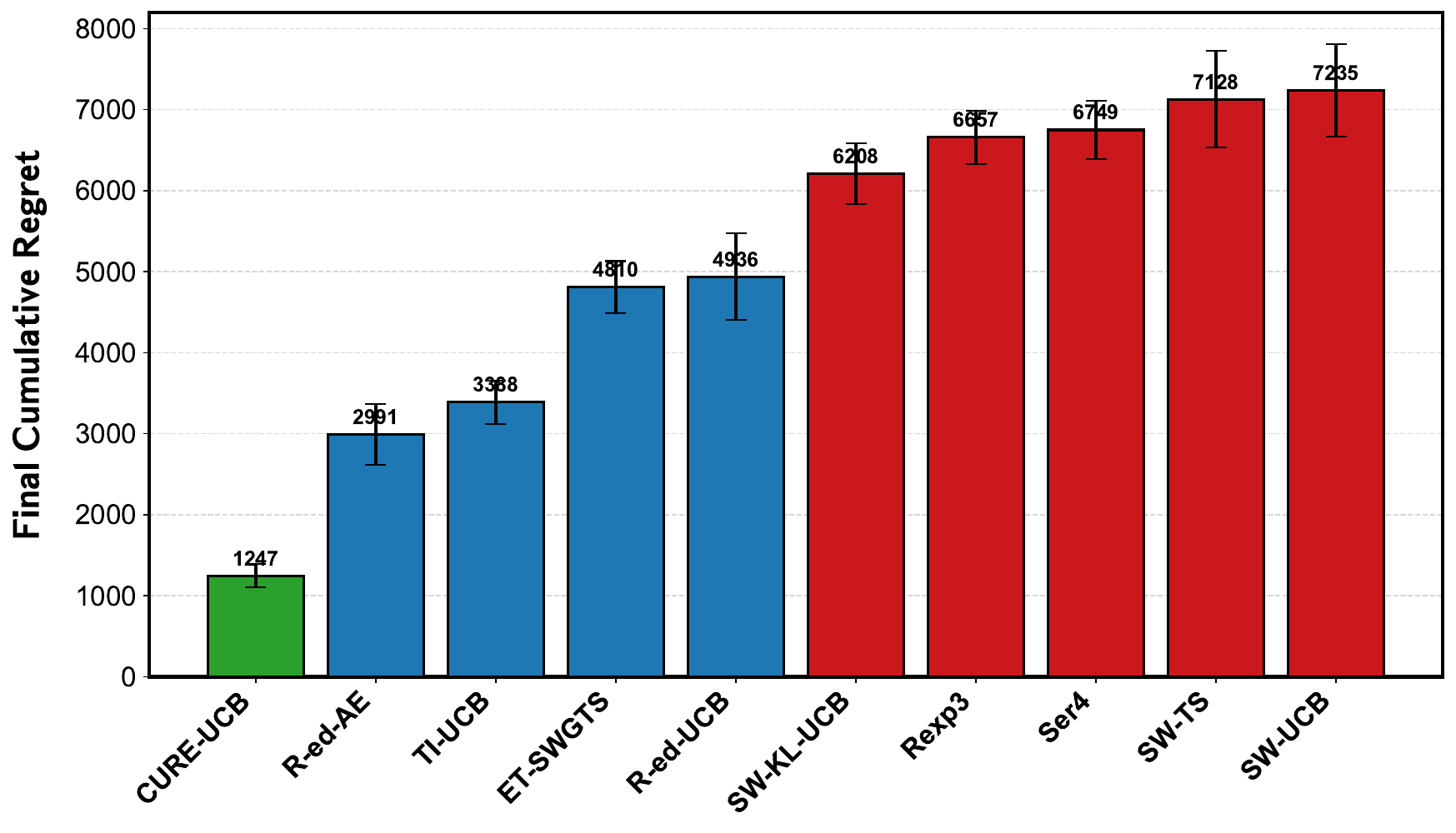}
        \hfill
        \includegraphics[width=0.43\linewidth, trim={0.7cm 0.7cm 0.7cm 0.8cm}, clip]{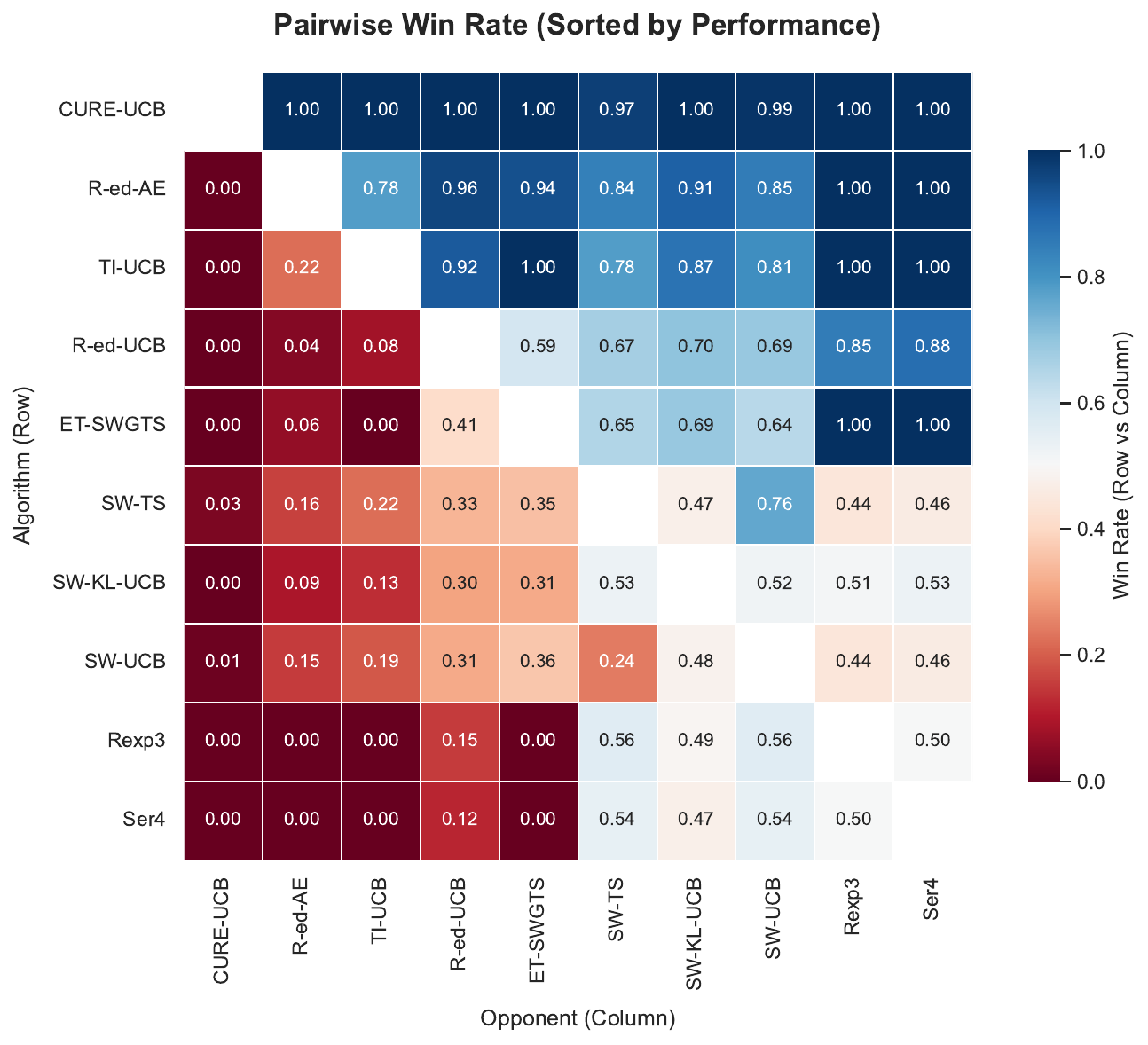}
        \caption{$T = 30,000$}
    \end{subfigure}

    \begin{subfigure}{\textwidth}
        \centering
        \includegraphics[width=0.52\linewidth, trim={1cm 0.4cm 1cm 0.5cm}, clip]{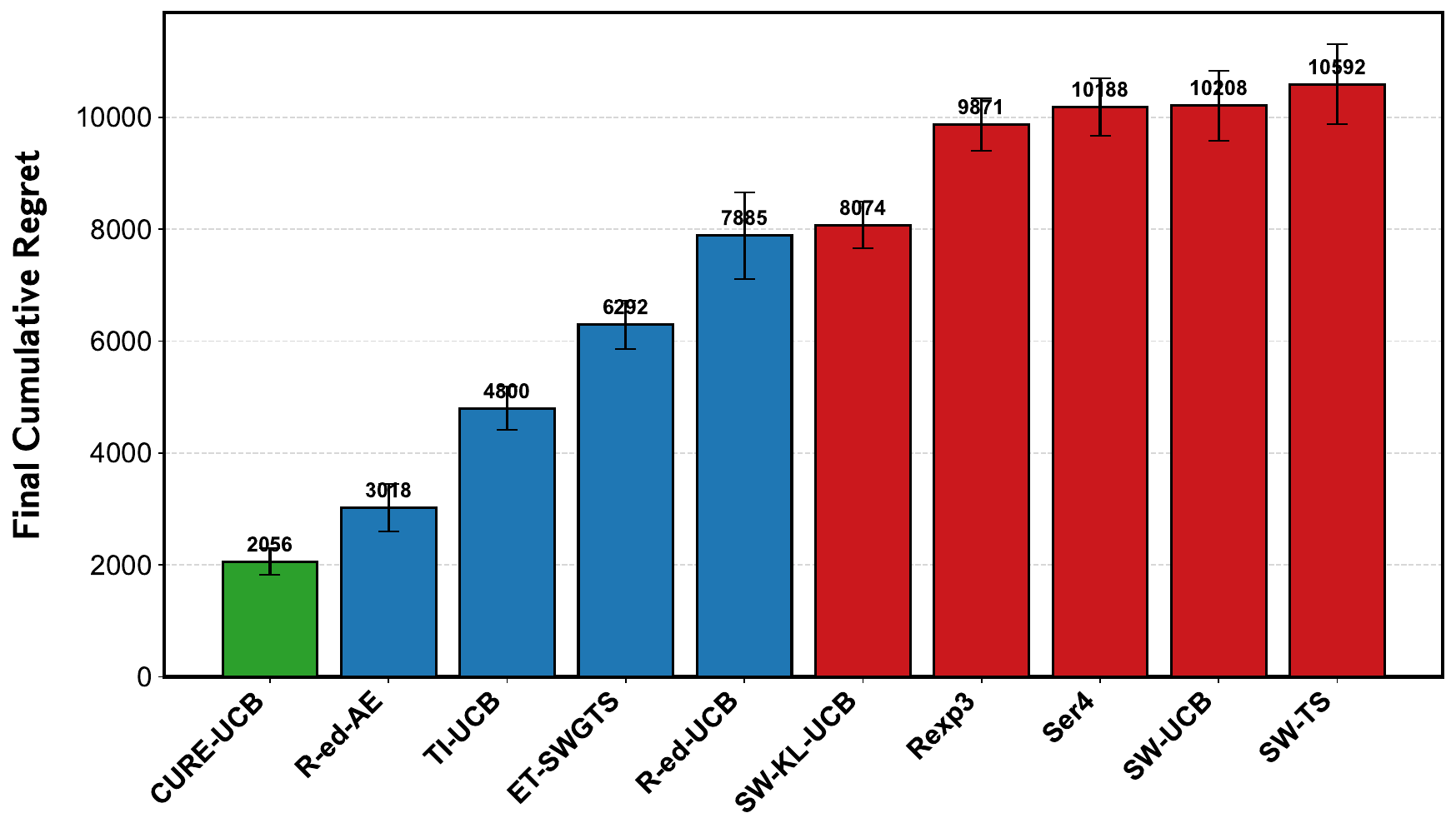}
        \hfill
        \includegraphics[width=0.43\linewidth, trim={0.7cm 0.7cm 0.7cm 0.8cm}, clip]{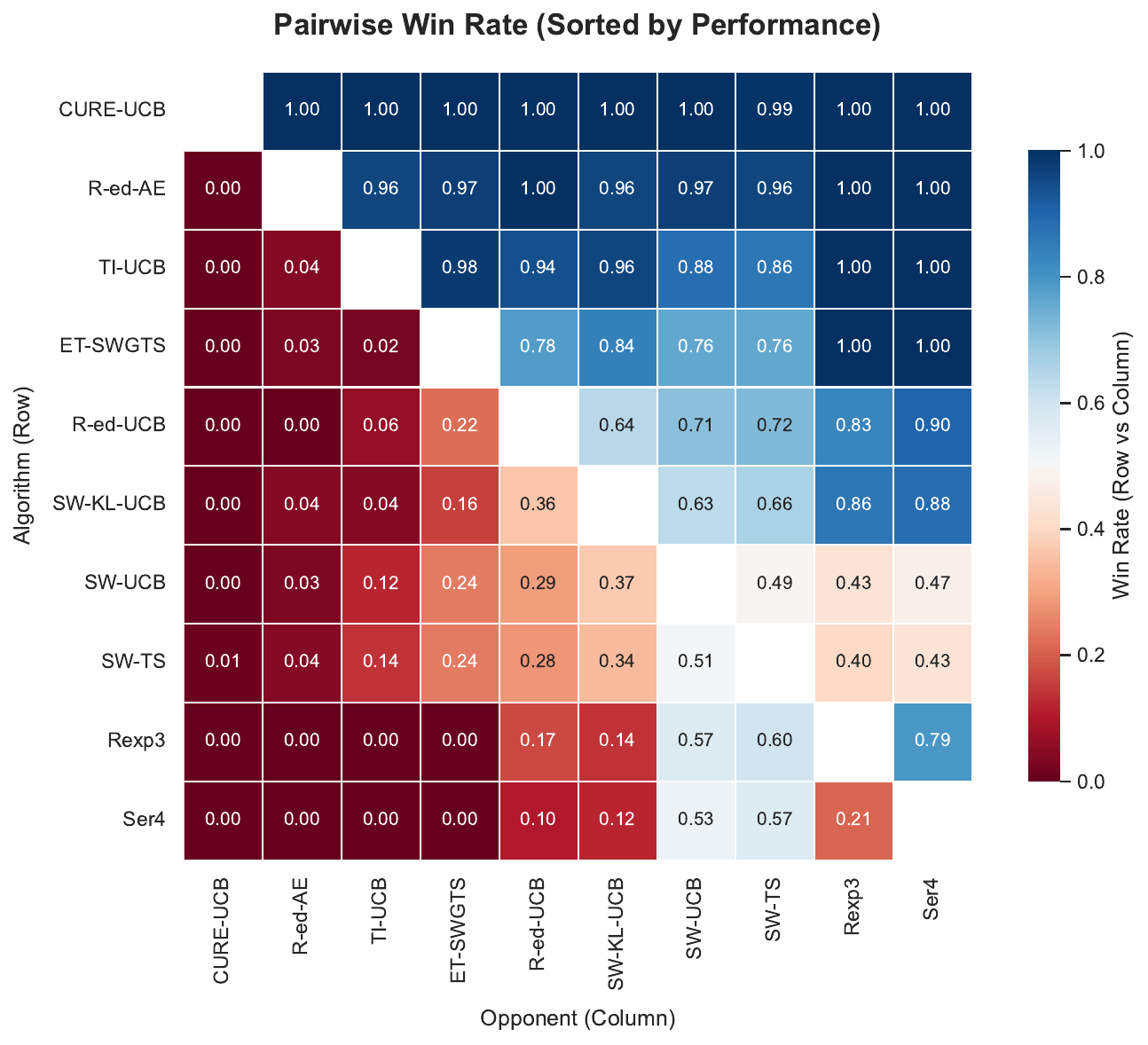}
        \caption{$T = 50,000$}
    \end{subfigure}

    \caption{\textbf{Additional experiment result for Concave setting} (Left) Average Cumulative Regret with 95\% confidence intervals, (Right) Pairwise Win Rate heatmap.}
    \label{fig:main_results_detailed}
\end{figure}


\end{document}